\documentclass{article}

\def\arxivver{1}

\usepackage{fullpage}
\usepackage{amsmath}
\usepackage{amsthm}
\usepackage{mathtools}
\usepackage{thmtools,thm-restate}
\usepackage[round]{natbib}
\usepackage[allcolors=blue,colorlinks]{hyperref}

\usepackage[extdef=true]{delimset}
\usepackage{algorithm}
\usepackage{algorithmic}
\usepackage{xspace}

\usepackage{enumitem}
\setlist[itemize]{itemsep=0ex}
\setlist[enumerate]{itemsep=0ex,label=(\roman*)}

\usepackage{mytimes}

\setlength{\topsep}{\medskipamount}

\declaretheoremstyle[
	    spaceabove=\topsep, 
	    spacebelow=\topsep, 
	    headfont=\normalfont\bfseries,
	    bodyfont=\normalfont\itshape,
	    notefont=\normalfont\bfseries,
	    notebraces={(}{)},
	    postheadspace=0.33em, 
	    headpunct={.},
	   % postfoothook=\noindent\ignorespaces
    ]{theorem}
% \declaretheorem[style=theorem,numberwithin=section]{theorem}
\declaretheorem[style=theorem]{theorem}

\declaretheoremstyle[
	    spaceabove=\topsep, 
	    spacebelow=\topsep, 
	    headfont=\normalfont\bfseries,
	    bodyfont=\normalfont,
	    notefont=\normalfont\bfseries,
	    notebraces={(}{)},
	    postheadspace=0.33em, 
	    headpunct={.},
	   % postfoothook=\noindent\ignorespaces
    ]{definition}

\declaretheoremstyle[
        spaceabove=\topsep, 
        spacebelow=\topsep, 
        headfont=\normalfont\bfseries,
        bodyfont=\normalfont,
        notefont=\normalfont\bfseries,
        % notebraces={(}{)},
        notebraces={}{},
        postheadspace=0.33em, 
        qed=$\blacksquare$, 
        headpunct={.},
        % postfoothook=\noindent\ignorespaces
    ]{proofstyle}
\declaretheorem[style=proofstyle,numbered=no,name=Proof]{proof}

\declaretheoremstyle[
	    spaceabove=\topsep, 
	    spacebelow=\topsep, 
	    headfont=\normalfont\bfseries,
	    bodyfont=\normalfont\itshape,
	    notefont=\normalfont\bfseries,
	    notebraces={}{},
	    postheadspace=0.33em, 
	    headpunct={.},
	   % postfoothook=\noindent\ignorespaces
    ]{restatement-style}
\declaretheorem[unnumbered,style=restatement-style,name=\ignorespaces]{restatement}

%%%%%%%%%%%%%%%%%%%%%%%%%%%%%%%%%% THEOREMS %%%%%%%%%%%%%%%%%%%%%%%%%%%%%%%%%%
		
\declaretheorem[style=theorem,sibling=theorem,name=Lemma]{lemma}

\declaretheorem[style=theorem,numbered=no,name=Theorem]{theorem*}
\declaretheorem[style=theorem,numbered=no,name=Lemma]{lemma*}
\declaretheorem[style=theorem,numbered=no,name=Corollary]{corollary*}
\declaretheorem[style=theorem,numbered=no,name=Proposition]{proposition*}
\declaretheorem[style=theorem,numbered=no,name=Claim]{claim*}
\declaretheorem[style=theorem,numbered=no,name=Fact]{fact*}
\declaretheorem[style=theorem,numbered=no,name=Observation]{observation*}
\declaretheorem[style=theorem,numbered=no,name=Conjecture]{conjecture*}

\declaretheorem[style=definition,sibling=theorem,name=Definition]{definition}

\declaretheorem[style=definition,numbered=no,name=Definition]{definition*}
\declaretheorem[style=definition,numbered=no,name=Remark]{remark*}
\declaretheorem[style=definition,numbered=no,name=Example]{example*}
\declaretheorem[style=definition,numbered=no,name=Question]{question*}

%%%%%%%%%%%%%%%%%%%%%%%%%%%%%%%%%%%%%%%%%%%%%%%%%%%%%%%%%%%%%%%%%%%%%%%%%%%%%%%

\usepackage[capitalise]{cleveref}
\usepackage{crossreftools}

% cref alg lines
\crefname{ALC@unique}{line}{lines}

\newcommand{\regret}{\text{Reg}_K}
\newcommand{\regT}{\text{Reg}_T}
\newcommand{\wh}{\widehat}
\newcommand{\wt}{\widetilde}
\newcommand{\ind}[1]{\mathbb{I}\{#1\}}

\newcommand{\bbR}{\mathbb{R}}
\newcommand{\bbE}{\mathbb{E}}
\newcommand{\cK}{\mathcal{K}}
\newcommand{\cS}{\mathcal{S}}

\newcommand{\breg}{B_R\delimpair{(}{[.]\,\|\,}{)}}
\DeclareMathOperator*{\poly}{poly}
\DeclareMathOperator*{\argmin}{arg\,min}

\newcommand{\dotp}{\boldsymbol{\cdot}}
\newcommand{\Tr}{\operatorname{Tr}}
\newcommand{\tr}{^\mathsf{T}}
\newcommand{\half}{\frac{1}{2}}
\newcommand{\ifrac}[2]{#1/#2}
\newcommand{\ipfrac}[2]{(#1)/(#2)}

\newcommand{\blbname}{distorted linear bandits\xspace}
\newcommand{\blbnamecaps}{Distorted Linear Bandits\xspace}
\newcommand{\blbacronym}{DLB\xspace}

% Linear Bandits with adversarially perturbed  actions LB-APA

\title{Online Markov Decision Processes with Aggregate Bandit Feedback}
\author{
    Alon Cohen
    \thanks{Google Research, Tel Aviv; \href{mailto:aloncohen@google.com}{\texttt{aloncohen@google.com}}.}
    \and
    Haim Kaplan
    \thanks{Tel-Aviv University and Google Research, Tel Aviv; \href{mailto:haimk@post.tau.ac.il}{\texttt{haimk@post.tau.ac.il}}.}
    \and
    Tomer Koren
    \thanks{Tel-Aviv University and Google Research, Tel Aviv; \href{mailto:tkoren@tauex.tau.ac.il}{\texttt{tkoren@tauex.tau.ac.il}}.}
    \and
    Yishay Mansour
    \thanks{Tel-Aviv University and Google Research, Tel Aviv; Supported in part by a grant from the ISF. \href{mailto:mansour@tau.ac.il}{\texttt{mansour@tau.ac.il}}.}
}
\date{\today}

\begin{document}

\maketitle

\begin{abstract}
    We study a novel variant of online finite-horizon Markov Decision Processes with adversarially changing loss functions and initially unknown dynamics. In each episode, the learner suffers the loss accumulated along the trajectory realized by the policy chosen for the episode, and observes \emph{aggregate bandit feedback}: the trajectory is revealed along with the cumulative loss suffered, rather than the individual losses encountered along the trajectory.
    % 
    % We study online learning in finite-horizon Markov Decision Processes with adversarially changing loss functions and unknown dynamics. In each episode, the learner observes a trajectory realized by the policy chosen for this episode. In addition, the learner suffers and observes the loss accumulated along the trajectory---a type of feedback we call \emph{aggregate bandit feedback}. The learner, however, never observes any additional information about the loss; in particular, the individual losses suffered along the trajectory.
    % 
    Our main result is a computationally efficient algorithm with $O(\sqrt{K})$ regret for this setting, where $K$ is the number of episodes.
    
    % We efficiently reduce \emph{Online MDPs with Aggregate Bandit Feedback} to a novel setting: \emph{\blbnamecaps} (\blbacronym). This setting is a robust generalization of linear bandits in which selected actions are adversarially perturbed.
    We establish this result via an efficient reduction to a novel bandit learning setting we call \blbnamecaps (\blbacronym), which is a variant of bandit linear optimization where actions chosen by the learner are adversarially distorted before they are committed.
    We then develop a computationally-efficient online algorithm for \blbacronym for which we prove an $O(\sqrt{T})$ regret bound, where $T$ is the number of time steps.
    Our algorithm is based on online mirror descent with a self-concordant barrier regularization that employs a novel increasing learning rate schedule.
    % a schedule of increasing learning rates used in Online Mirror Descent with a self-concordant barrier regularization. 
    % We use the \blbacronym algorithm to derive our main result of $\sqrt{K}$ regret.
\end{abstract}

\section{Introduction}
    
    Markov Decision Processes are a ubiquitous model for decision making that captures a wide array of applications including autonomous road navigation, robotics, gaming and many more. In the finite-horizon version of the model, the goal of the agent is to minimize her expected total loss over a fixed number of time steps. Classic results in finite-horizon MDPs state that the optimal policy of the agent is deterministic; namely, a mapping between each state and time step to an action for the agent to play.

    In this paper, we study the problem of \emph{Online MDPs with Aggregate Feedback} which is played for $K$ episodes. The dynamics of the MDP are fixed but unknown to the learner. After each episode, in addition to observing her trajectory within the MDP, the agent also gets to view her total loss along this trajectory. The agent, however, does not get to observe the individual losses of specific states and actions that comprise the trajectory.
    This setting was recently considered in \cite{efroni2020reinforcement} where the authors derived computationally-efficient learning algorithms for the case where the losses are sampled i.i.d.\ from some unknown distribution.
    In this work, we assume that the losses are non-stochastic and may be chosen by an adversary---a significantly more challenging task.
    
    The adversarial setting is a variant of online MDPs~\citep{even2009online} with initially unknown model dynamics, previously considered either when full information about the losses is received~\citep{neu2010online}, or with traditional bandit feedback where the agent sees the individual losses of all states and actions that were visited along each of her generated trajectories~\citep{rosenberg2019online}. 
    Commonly, the main solution technique is to separate the $K$ episodes into $O(\log K)$ epochs; in each epoch, the agent runs a no-regret algorithm using an estimate of the dynamics obtained from observations accrued up to the beginning of the epoch. 
    To tackle bandit feedback in general, it is common practice to employ a full-information learning algorithm which is fed with an unbiased estimate of the losses in each episode.
    Nevertheless, in our setup we do not know the MDP dynamics, so it is hopeless for the learner to generate such an unbiased estimate since it is impossible to calculate the probability of visiting each state and action without exact knowledge of the transition distributions.
    This impediment was overcome in \cite{jin2020learning} that followed the ``optimism in the face of uncertainty'' principle: they fed the learning algorithm with a certain underestimate of the loss. 
    This drives the agent to explore under-sampled state-action pairs, helps to obtain better estimates of the dynamics, and reduces the overall bias of the loss estimators over time.
    
    We utilize a similar approach to tackle the aggregate feedback by reducing the problem to $O(\log K)$ epochs in each of which we solve a variant of linear bandits over our current estimate of the model dynamics. We name the learning problem in each epoch  \emph{\blbnamecaps} (\blbacronym). This is a variant of the linear bandits problem in which, after choosing an action, it can be distorted (i.e., perturbed) in an adversarial manner before it is played. This distortion unavoidably introduces a non-negligible bias when trying to generate an estimate of the loss vector.
    The \blbacronym problem is also interesting in its own right, capturing scenarios where there is uncertainty regarding the action that is actually taken, which might deviate significantly from the action intended to be taken---a phenomenon that occurs in applications in robotics and control, where the actions are continuous in~nature.
    
    We derive two learning algorithms for the \blbacronym setting that yield a $O(\sqrt{K})$ regret bound, yet mitigate the estimation bias in different ways.
    Our first algorithm, based on EXP2 \citep{awerbuch2004adaptive, mcmahan2004online}, utilizes an optimistic approach by feeding the algorithm with underestimates of the loss.
    This technique, it turns out, is not computationally efficient due to the non-convex nature of these underestimates.
    Our second algorithm, however, runs in polynomial-time per episode.
    It is a variant of Online Mirror Descent that uses a self-concordant barrier function as a regularizer \citep{abernethy2009competing} with a series of increasing learning rates.
    The idea of using increasing learning rates to alleviate estimation bias is used in various recent works \citep{bubeck2017kernel, agarwal2017corralling, lee2020bias}.
    Intuitively, it gives the learner a ``boost'' towards playing better actions whenever the estimation bias is large.
    
    \subsection{Summary of Contributions}
    
        The main contributions of the paper are as follows:
        \begin{itemize}%[itemsep=0.25ex]
            \item We introduce the setting of Online MDPs with Aggregate Bandit Feedback, where the dynamics are initially unknown and costs may be chosen by an adversary; to the best of our knowledge, such a problem has not been studied before.
            \item We establish an efficient reduction from Online MDPs with Aggregate Bandit Feedback to the novel {\blbnamecaps} (\blbacronym) problem.
            \item We give a computationally-efficient online learning algorithm for the \blbacronym problem with $O(\sqrt{T})$ regret over $T$ rounds.
            \item Combining the two techniques, we obtain a computationally-efficient online learning method for Online MDPs with Aggregate Bandit Feedback with $O(\sqrt{K})$ regret over $K$ episodes.
        \end{itemize}
    
        In \cref{sec:reduction-main} we present the Online MDP with Aggregate Feedback model explicitly and give our main result. We also present the \blbnamecaps (\blbacronym) setting, the reduction between the two models, and prove our regret bound for online MDPs. In \cref{sec:blb-algorithms} we give our two algorithms for the \blbacronym setting and analyze their regrets.

    \subsection{Additional Related Work}
    
        The study of regret minimization in reinforcement learning dates back to \cite{jaksch2010near}
        %\footnote{An extended abstract of this paper already appeared in NeuRIPS, 2009.} 
        who considered an MDP with unknown dynamics and losses, but where the losses are sampled i.i.d. This model was further studied in \cite{azar2017minimax,zanette2019tighter} that provided improved~bounds.
        %in the same setup.
        
        Online MDPs were introduced in
        \cite{even2009online} who studied MDPs with known dynamics and adversarially changing losses. Later \cite{neu2013online} extended the online MDP to handle bandit feedback.
        \cite{abbasi2013online} considered MDPs where both the dynamics and the losses change adversarially. Their algorithm, however, is not computationally-efficient as they show in a hardness result. All the above results assume access to individual losses while in this work we assume the learner observes only the aggregate loss of an episode.

        Bandit linear optimization has been extensively studied under both semi-bandit and bandit feedback; for an extensive survey of this literature, see \citet{SlivkinsBook,CsabaTorBook}. 
        %
        %\ym{need to discuss misspecified linear bandits \cite{GhoshCG2017,LattimoreSW2020}} \ac{Yishay, do we still need this comment?}
        %
        Misspecified linear bandits were introduced in \cite{GhoshCG2017} where the loss of each action can be perturbed arbitrarily. They give an impossibility result for large sparse deviations and a regret bound for small deviations (see also \citealp{LattimoreSW2020}).
        Our model differs from misspecified linear bandit, most importantly, in that we allow for adversarial losses. In addition, we also differ both in the fact that the deviations might be large (and we can only globally bound them) and the fact that the loss is linear but with respect to a distorted action.

\section{Preliminaries} \label{sec:prelims}

    \paragraph{Finite-Horizon MDPs.}
    
    A finite-horizon Markov Decision Process is a tuple $(S,A,s_1,P,\ell,H)$ defined as follows. $S$ is a finite set of states; $A$ is a finite set of actions; $s_1\in S$ is the start state; the integer $H$ defines the horizon.
    The transition function $P$ defines a probability distribution $P(s' \mid s,a,h)$ of the next state $s'$ given the current state $s$, action $a$, and time $h\in[H]$. The loss function is $\ell$ defines a loss $\ell(h, s,a,s') \in [0,1]$ for every time $h\in[H]$ state $s$, action $a$, and next state $s'$.
    
    A (randomized) policy $\pi : S \times [H] \mapsto \Delta(A)$ maps each state and time to a probability distribution over the actions.
    A trajectory is a sequence  $(s_1, a_1, \ldots, s_H, a_{H}, s_{H+1})$. %(Note that the trajectory does not include the individual losses!) 
    The probability of such trajectory with respect to a policy $\pi$ and a transition function $P$ is $\prod_{h=1}^H \pi(a_h \mid s_h,h) P(s_{h+1} \mid s_h,a_h,h)$.
    The accumulated loss of such a trajectory using a loss function $\ell$ is $\sum_{h=1}^{H} \ell(h, s_h,a_h,s_{h+1})$. The expected loss of a policy $\pi$ with respect to a transition function $P$ and loss function $\ell$ is 
    \begin{align*}
        L^{\pi,P,\ell}
        = 
        \bbE \brk[s]4{
        \sum_{h=1}^{H}\ell(h, s_h,a_h,s_{h+1})}
        =
        \sum_{\substack{(h,s,a,s') \in \\ [H] \times S \times A \times S}} \ell(h, s,a,s') ~ \Pr[s_h=s,a_h=a,s_{h+1}=s']
        .
    \end{align*}
    
    \paragraph{Occupancy Measures.}
        
        A combination of a policy $\pi$ and a transition function $P$ provide an occupancy measure $x^{\pi, P}$ such that $x^{\pi, P}(h,s,a,s')$ is the probability, according to $P$ and $\pi$, of being at state $s$ at time $h$, playing action $a$, and transitioning to state $s'$. 
        Formally,
        \[
            x^{\pi,P}(h, s,a,s')
            =
            \Pr_{\pi,P}[s_h=s,a_h=a,s_{h+1}=s'].
        \]
        Any $x : [H] \times S \times A \times S \mapsto \bbR$ is an occupancy measure, if and only if
        \begin{alignat}{3}
            &x(h,s,a,s') \ge 0,&
            \qquad 
            &\forall (h,s,a,s') \in [H] \times S \times A \times S,& \nonumber \\
            &\sum_{\mathclap{(s,a,s') \in S \times A \times S}} x(h,s,a,s') = 1,& 
            \qquad 
            &\forall h \in [H],& \nonumber\\
            &\sum_{\mathclap{(a,s') \in A \times S}} x(h+1,s,a,s') = \sum_{\mathclap{(s',a) \in S \times A}} x(h,s',a,s),& 
            \qquad 
            &\forall (s,h) \in S \times [H-1].& \label{eq:omset}
        \end{alignat}
        Indeed, any $x$ that satisfies the conditions above corresponds to an occupancy measure for some policy $\pi$ and transition function $P$, both can easily be extracted from $x$---this correspondence is therefore one-to-one. 
        That is, given an occupancy measure $x$ we can define the corresponding policy and dynamics as follows:
        \begin{equation} \label{eq:oc-pnd}
            \pi^{(x)}(a \mid s,h) = \frac{\sum_{s' \in S} x(h,s,a,s')}{\sum_{(a,s') \in A \times S} x(h,s,a,s')},
            \quad \text{and} \quad
            \wt P^{(x)}(s' \mid s,a,h) = \frac{x(h,s,a,s')}{\sum_{s' \in S} x(s,a,h,s')}.
        \end{equation}
        For more on occupancy measures, see \cite{rosenberg2019online}. 

    \paragraph{Self-concordant Barriers and Bregman Divergence.} \label{sec:scb}
    
    We next briefly review self-concordant barrier functions---a fundamental tool in interior-point methods that was also shown to be highly-useful in linear bandit optimization \citep{abernethy2009competing}.
    Self-concordant barriers are discussed in-depth in \cite{nemirovski2004interior}; we give the technical definitions in \cref{app:self} and here focus on some useful properties of such functions that we use.

    % \begin{definition}
    %     For a convex set $\cS \subset \bbR^n$, a self-concordant function $R : \text{int}(\cS) \mapsto \bbR$ is a $C^3$-convex function such that
    %     \[
    %         \abs1{D^3 R(x)[h,h,h]} \le 2 \brk1{D^2 R(x)[h,h]}^{3/2}.
    %     \]
    %     In words: the third derivative of $R$ at $x$ in direction $h$ is upper bounded by a constant times the second derivative of $R$ at $x$ in direction $h$,
    %     raised to the $3/2$ power.
    % \end{definition}
    
    % \begin{definition}
    %     A self-concordant function $R$ is a $\vartheta$-self-concordant barrier if
    %     \[
    %         \abs1{D R(x)[h]} \le \vartheta^{1/2} \brk1{D^2 R(x)[h,h]}^{1/2}.
    %     \]
    % \end{definition}
    
    We consider a $\vartheta$-self-concordant barrier function $R$ over 
    a convex set $\cS$. 
    In particular, for a self-concordant barrier $R$,
    % $R$ is a convex function that satisfies that $\nabla^2 R(x) \succ 0$ for all $x \in \text{int}(\cS)$.\footnote{Unless $\cS$ is degenerate; see \citet{nemirovski2004interior} for more detail.}
    the function 
    $\|h\|_x = \sqrt{h\tr\nabla^2 R(x)h}$ is a norm, and also $\nabla R : \text{int}(\cS) \mapsto \bbR^d$ is invertible. 
    In addition, an important property of the norm $\|\cdot\|_x$ 
    % \hk{it sounds like it is a property of this norm for any strongly convex R, but it requires more from the properties of the barrier, no ?}
    % \ac{Yes, that's a property of the barrier that doesn't hold for strongly convex functions in general. (the barrier itself might not be strongly convex FYI.)}
    is that for any point $y \in \bbR^d$ and $x \in \text{int}\brk{\cS}$,
    \begin{equation} \label{eq:dikininbody}
        \|y-x\|_x < 1 
        ~\implies~ 
        y \in \text{int}\brk{\cS}.
    \end{equation}
    
    We define the Bregman divergence with respect to a $\vartheta$-self-concordant barrier $R$ as follows:
    \[ 
        \breg{y}{x} = R(y) - R(x) - \nabla R(x) \dotp (y-x).
    \]
    The Bregman divergence is always nonnegative: $\breg{y}{x} \geq 0$ for any $x,y \in \text{int} \brk{\cS}$. 
    Moreover, we shall need the following lower bound on the Bregman divergence (see \citealp{nemirovski2004interior}):
    \begin{equation} 
        \breg{y}{x} 
        \geq 
        \rho \brk{\norm{y-x}_{x}} \quad \text{for} \quad \rho(z) = z - \log(1+z)
        . 
        \label{eq:scbergmanlb}
    \end{equation}
    %
    % Next, let us define the center of $\cS$ to be $x_1 = \argmin_{x \in \cS} R(x)$, then 
    % \begin{equation} \label{eq:centernormupperbound}
    %     \|y-x_1\|_{x_1} \le \vartheta + 2 \sqrt{\vartheta} \le 3 \vartheta~, \quad \forall y \in \cS.
    % \end{equation}
    %
    % The reader can find proofs of all facts above in \citet{nemirovski2004interior}.
    We also require the following lemma whose proof is found in \cref{app:proofs}. 
    \begin{lemma} \label{lem:boundbregfromx1}
        Define $\cS_\gamma = \{(1-\gamma)x + \gamma x_1 \mid x \in \cS\}$ for $x_1 = \argmin_{x \in \cS} R(x)$ and some $\gamma \in [0,1]$. 
        Then $\breg{y}{x_1} \le \vartheta \log(1/\gamma)$ for any $y \in \cS_\gamma$.
    \end{lemma}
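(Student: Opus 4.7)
The plan is to reduce the Bregman divergence to a bare difference of barrier values, and then invoke the standard growth bound for a self-concordant barrier along a segment emanating from its analytic center. My first step would be to note that since $R$ blows up on $\partial\cS$, its minimizer $x_1$ lies in $\text{int}(\cS)$ and therefore $\nabla R(x_1) = 0$; plugging this into the definition of the Bregman divergence collapses it to
\[
    \breg{y}{x_1} \;=\; R(y) - R(x_1),
\]
so the lemma reduces to showing $R(y) - R(x_1) \le \vartheta \log(1/\gamma)$ for every $y \in \cS_\gamma$.

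My next step would be to exploit the explicit parametrization of $\cS_\gamma$: every $y \in \cS_\gamma$ has the form $y = x_1 + (1-\gamma)(x - x_1)$ for some $x \in \cS$, so $y$ lies on the segment from $x_1$ to $x$ at fraction $1-\gamma$. In particular, scaling $y - x_1$ by $1/(1-\gamma)$ returns the point $x \in \cS$, which means that the Minkowski functional of $\cS$ centered at $x_1$, evaluated at $y$, is at most $1 - \gamma$.

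Finally, I would appeal to the classical growth bound for $\vartheta$-self-concordant barriers (see, e.g., \cite{nemirovski2004interior}): for any $y \in \cS$ whose Minkowski functional relative to $x_1$ is at most $1 - \epsilon$, one has $R(y) - R(x_1) \le \vartheta \log(1/\epsilon)$. Applied with $\epsilon = \gamma$, this immediately gives the desired inequality. The only nontrivial ingredient is the quoted growth bound, whose proof is a short one-dimensional integration of the self-concordance inequality along the segment from $x_1$ to $x$ (using $\nabla R(x_1) = 0$ to eliminate the linear term); beyond this standard fact, no substantive obstacle is expected.
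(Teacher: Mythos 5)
Your proof is correct and follows essentially the same route as the paper's: reduce $\breg{y}{x_1}$ to $R(y)-R(x_1)$ via optimality of $x_1$, note that $\pi_{x_1}(y)\le 1-\gamma$ from the parametrization of $\cS_\gamma$, and apply the standard growth bound \cref{eq:scupperbound}. The only cosmetic difference is that you invoke $\nabla R(x_1)=0$ (interior analytic center) to get equality, while the paper only uses the first-order optimality inequality $\nabla R(x_1)\dotp(y-x_1)\ge 0$ to get $\breg{y}{x_1}\le R(y)-R(x_1)$, which suffices.
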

    
    \paragraph{Online Mirror Descent with Barriers.}
    
    We rely on standard properties of the Online Mirror Descent (OMD) algorithm with a self-concordant barrier function $R$ for a domain $\cS$ as regularization, applied to an arbitrary sequence of loss vectors $\ell_1,\ldots,\ell_T \in \bbR^d$ \citep{abernethy2009competing}.
    Starting from an initial $x_1 \in \cS$, OMD makes the following updates for $t=1,\ldots,T$:\footnote{Typically, OMD has an additional projection step when employed on a bounded domain. However, when $R$ is a barrier, such a projection is redundant as the OMD update never steps out of the domain (this is a consequence of \cref{eq:dikininbody}).}
    \begin{align} \label{ALG:OMD}
        x_{t+1} &= \nabla R^{-1} \brk!{ \nabla R(x_t) - \eta_t \ell_t }.
        % \\
        % x_{t+1} &= \argmin_{x \in \cS} \breg{x}{x_{t+1}'}. \label{ln:projection}
    \end{align}
    %
    % \begin{algorithm}
    %     \caption{Online Mirror Descent \label{ALG:OMD}}
    %     \begin{algorithmic}[1]
    %         \STATE {\bf input}: Self-concordant barrier $R : \bbR^d \mapsto \bbR$, losses $\ell_1,\ldots,\ell_T \in \bbR^d$.
    %         \STATE {\bf init}: $x_1 \in \cS$.
    %         \FOR{$t = 1,\ldots,T$}
    %             \STATE {\bf set} $x_{t+1}'$ s.t. $\nabla R(x_{t+1}') = \nabla R(x_t) - \eta_t \ell_t$.
    %             \STATE {\bf project} $x_{t+1} = \argmin_{x \in \cS} \breg{x}{x_{t+1}'}$. \label{ln:projection}
    %         \ENDFOR
    %     \end{algorithmic}
    % \end{algorithm}
    %
    % For this next result we define the dual norm of the local norm $\norm{\cdot}_x$, 
    %defined in \cref{sec:scb}, 
    % as $\norm{\ell}_x^\star = \sqrt{\ell\tr \brk{\nabla^2 R(x)}^{\smash{-1}} \ell}$. 
    This version of OMD has the following guarantee (we include a proof in \cref{app:proofs} for completeness); here we use the notation $\norm{\ell}_x^\star = \sqrt{\ell\tr \nabla^2 R(x)^{-1} \ell}$ for $x \in \cS$ and $\ell \in \bbR^d$.
    \begin{lemma} \label{thm:mdbarrier}
        Let $R : \text{int}(\cS) \mapsto \bbR$ be self-concordant and assume that $\eta_t \norm{\ell_t}_{x_t}^\star \leq \tfrac12$ for all $t$.
        Then, for any $u \in \cS$,
        \[
            \sum_{t=1}^T \ell_t \dotp (x_t - u)
            % \regret(u) 
            \le 
            \frac{1}{\eta_1} \breg{u}{x_1}
            -
            \sum_{t=2}^T \brk3{\frac{1}{\eta_{t-1}} - \frac{1}{\eta_t}} \breg{u}{x_t}
            + 
            \sum_{t=1}^T \eta_t \brk{\norm{\ell_t}_{x_t}^\star}^2
            .
        \]
    \end{lemma}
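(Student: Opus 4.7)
The plan is to follow the classical mirror descent analysis via the three-point identity for Bregman divergence, then apply summation by parts to accommodate the time-varying learning rates, and finally use the self-concordance of $R$ to control the stability term.

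First, I would rewrite the OMD update $\nabla R(x_{t+1}) = \nabla R(x_t) - \eta_t \ell_t$ to express the per-step instantaneous regret as
\[
    \ell_t \dotp (x_t - u)
    =
    \frac{1}{\eta_t} \brk!{\nabla R(x_t) - \nabla R(x_{t+1})} \dotp (x_t - u).
\]
Applying the standard three-point identity for Bregman divergence, this equals
\[
    \frac{1}{\eta_t}\brk!{\breg{u}{x_t} - \breg{u}{x_{t+1}} + \breg{x_t}{x_{t+1}}}.
\]

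Next, summing over $t = 1, \ldots, T$ and performing Abel summation on the telescoping pair $\breg{u}{x_t} - \breg{u}{x_{t+1}}$ with the coefficients $1/\eta_t$, I obtain
\[
    \sum_{t=1}^T \frac{1}{\eta_t}\brk!{\breg{u}{x_t} - \breg{u}{x_{t+1}}}
    =
    \frac{1}{\eta_1}\breg{u}{x_1}
    - \sum_{t=2}^T \brk3{\frac{1}{\eta_{t-1}} - \frac{1}{\eta_t}} \breg{u}{x_t}
    - \frac{1}{\eta_T} \breg{u}{x_{T+1}},
\]
and the last term is nonpositive since Bregman divergence is nonnegative, so it can be dropped. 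This exactly produces the first two terms in the claimed bound.

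The remaining task, which I expect to be the main technical step, is to bound the stability term $\frac{1}{\eta_t}\breg{x_t}{x_{t+1}} \le \eta_t (\norm{\ell_t}_{x_t}^\star)^2$ under the assumption $\eta_t \norm{\ell_t}_{x_t}^\star \le \tfrac12$. Here I would invoke the self-concordance of $R$: one standard consequence is the Fenchel-style upper bound $\breg{x_t}{x_{t+1}} \le \rho^\star\brk!{\norm{\nabla R(x_{t+1})-\nabla R(x_t)}_{x_t}^\star}$, where $\rho^\star(z) = -z - \log(1-z)$ is the Fenchel conjugate of $\rho$ from \cref{eq:scbergmanlb}. Since $\nabla R(x_{t+1}) - \nabla R(x_t) = -\eta_t \ell_t$, the argument is $\eta_t \norm{\ell_t}_{x_t}^\star \le \tfrac12$, and on this range $\rho^\star(z) \le z^2$ by an elementary Taylor expansion. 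Hence
\[
    \breg{x_t}{x_{t+1}} \le \brk!{\eta_t \norm{\ell_t}_{x_t}^\star}^2,
\]
and dividing by $\eta_t$ gives the desired per-step bound. Summing these estimates and combining with the Abel-summation step completes the proof.
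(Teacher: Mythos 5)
Your proposal is correct and follows essentially the same route as the paper's proof: the three-point identity for the Bregman divergence, Abel summation over the varying $1/\eta_t$ with the nonnegative term $\breg{u}{x_{T+1}}$ dropped, and the stability bound $\breg{x_t}{x_{t+1}} \le \rho^\star\brk{\eta_t\norm{\ell_t}_{x_t}^\star} \le \brk{\eta_t\norm{\ell_t}_{x_t}^\star}^2$ via \cref{eq:scbergmanlb}, H\"older, and $\rho^\star(z)\le z^2$ on $[0,1/2]$ (the paper derives this conjugacy bound explicitly rather than citing it, but the content is identical). The only cosmetic difference is that the paper's proof carries an auxiliary projection step $x_{t+1}'$ with a generalized Pythagorean inequality, which is vacuous for the unprojected update in \cref{ALG:OMD} that you analyze directly.
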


    Observe that when the learning rate sequence is strictly increasing, the middle term in the above bound becomes negative and can potentially serve to decrease the regret of OMD, particularly when the divergence $\breg{u}{x_t}$ is large. This observation will be key to our algorithmic development in \cref{sec:solution-efficient}.
 
\section{Setup and Overview of Results} \label{sec:reduction-main}

    \subsection{Online MDPs with Aggregate Bandit Feedback}
        
        We consider an online version of finite-horizon MDPs in which the interaction between learner and the MDP proceeds for $K$ episodes. Before the interaction begins, the environment assigns a sequence of loss functions $\ell_1,\ldots,\ell_k : [H] \times S \times A \times S \mapsto [0,1]$ one for each episode $k\in [K]$. The choice of loss functions is done in an arbitrary, possibly adversarial, manner.
        
        At the start of each episode $k$ the online algorithm defines a policy $\pi_k$.
        At the end of the episode the online algorithm receives the trajectory realized by $\pi_k$, 
        i.e.,  $(s^k_1, a^k_1, \ldots, s^k_H, a^k_H, s^k_{H+1})$,
        and the aggregate loss incurred during this trajectory with respect to $\ell_k$, i.e., $\sum_{h=1}^{H} \ell_k(h, s^k_h,a^k_h,s^k_{h+1})$.

        We define the regret of the learner over the $K$ episodes as
        \[
            \regret = \sum_{k=1}^K L^{\pi_k, P,\ell_k}  - \min_{\pi} \sum_{k=1}^K L^{\pi, P,\ell_k} ,
        \]
        where the minimum is taken over all policies $\pi$, and we let $\pi^\star$ denote a minimizer.
        The regret can also be written in terms of occupancy measures, by noticing that the expected loss of a policy $\pi$ and transition function $P$ with respect to a loss function $\ell$ is $L^{\pi,P,\ell}=x^{\pi,P}\dotp \ell$.
        Thus, the regret of the learner over the $K$ episodes can be written as:
        \[
            \regret 
            = 
            \sum_{k=1}^K x^{\pi_k, P} \dotp \ell_k - \min_{\pi} \sum_{k=1}^K x^{\pi, P} \dotp \ell_k
            .
        \]
        %where the minimum is taken over all policies $\pi$, and we let $\pi^\star$ denote a minimizer.
        
        The main result of this paper is a computationally-efficient learning algorithm for the setting described above.
        \begin{theorem} \label{thm:main}
            There exists an online learning algorithm for \emph{finite-horizon MDPs with aggregated bandit feedback} that guarantees
            \[
                \bbE \brk[s]{\regret} = \poly\brk{H,|S|,|A|} \, O(\sqrt{K})
                .
            \]
            Moreover, the per-episode runtime complexity of the algorithm is polynomial in $H, |S|, |A|$, and $K$.
        \end{theorem}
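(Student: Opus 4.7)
The plan is to prove \cref{thm:main} via the reduction sketched in the introduction: partition the $K$ episodes into $O(\log K)$ epochs of geometrically growing lengths, and within each epoch $e$ run the \blbacronym algorithm of \cref{sec:blb-algorithms} over occupancy measures defined by the current empirical estimate $\wh P_e$ of the transition function. The key conceptual observation is that the learner chooses an occupancy measure $\wh x_k$ from the set $\cK_e = \{x^{\pi,\wh P_e} : \pi \text{ a policy}\}$ induced by $\wh P_e$, but since the real dynamics are $P$, the realized trajectory is drawn from $x^{\pi^{(\wh x_k)},P}$. Thus the played action is an adversarially-distorted version of the submitted one, and the observed aggregate loss $\ell_k \dotp x^{\pi^{(\wh x_k)},P}$ is exactly the bandit feedback in a \blbacronym instance.

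First I would formalize the epoch schedule and the concentration guarantee: with doubling epoch lengths, standard concentration on empirical transition counts (as in \citealp{rosenberg2019online,jin2020learning}) yields, with high probability, a uniform bound $\|x^{\pi,\wh P_e} - x^{\pi,P}\|_1 \le \epsilon_e$ over all policies $\pi$, with $\epsilon_e = \poly(H,|S|,|A|)/\sqrt{k_e}$ where $k_e$ is the episode index at the start of epoch $e$. Next I would instantiate \blbacronym with action set $\cK_e$ (a polytope described by \cref{eq:omset} with $P$ replaced by $\wh P_e$, hence of polynomial size), distortion magnitude $\epsilon_e$, and loss vectors $\ell_k$. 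The \blbacronym regret bound from \cref{sec:blb-algorithms} then controls the epoch's regret against the best point in $\cK_e$.

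Finally I would convert the in-epoch \blbacronym regret into regret against the true optimum $\pi^\star$. Since $x^{\pi^\star,P} \notin \cK_e$ in general, the closeness of $\wh P_e$ to $P$ furnishes a comparator $x^{\pi^\star,\wh P_e} \in \cK_e$ that is within $\epsilon_e$ of $x^{\pi^\star,P}$ in $\ell_1$, costing at most $H\epsilon_e$ per episode in the comparator's value. Summing across epochs, the per-epoch \blbacronym regrets (each $\poly(H,|S|,|A|)\sqrt{k_e}$) together with the comparator-translation cost $\sum_e k_e H \epsilon_e$ telescope under the geometric schedule to $\poly(H,|S|,|A|) \cdot O(\sqrt{K})$, absorbing the $\log K$ factors. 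Polynomial per-episode runtime follows because each ingredient---maintaining $\wh P_e$, representing $\cK_e$ via \cref{eq:omset}, and running the \blbacronym algorithm of \cref{sec:solution-efficient}---is polynomial in $H,|S|,|A|,K$.

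The main obstacle I anticipate is the interplay between the distortion and the bandit loss estimator used inside \blbacronym: the learner only observes $\ell_k \dotp x_k$ for the distorted $x_k$, so any unbiased estimator of $\ell_k$ built from $\wh x_k$ carries a bias proportional to $\|x_k - \wh x_k\|_1 \le \epsilon_e$. The \blbacronym algorithm must therefore be robust to this bias, which is precisely what the increasing learning-rate schedule highlighted after \cref{thm:mdbarrier} is designed to handle. Calibrating the epoch lengths and the confidence radii $\epsilon_e$ so that this bias is absorbed within the $\sqrt{k_e}$ \blbacronym guarantee---while keeping the comparator-translation cost of the same order---is where the delicate accounting sits.
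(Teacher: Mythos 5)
There is a genuine gap, and it sits exactly where you anticipated ``delicate accounting.'' Your plan restricts the learner to $\cK_e=\{x^{\pi,\wh P_e}:\pi\}$ and relies on a \emph{uniform} confidence bound $\|x^{\pi,\wh P_e}-x^{\pi,P}\|_1\le \epsilon_e$ over all policies with $\epsilon_e=\poly(H,|S|,|A|)/\sqrt{k_e}$. This bound is false: concentration of $\wh P_e$ is per state--action--step triple and scales as $\sqrt{|S|/N_e(s,a,h)}$, and nothing in your scheme forces the visit counts $N_e(s,a,h)$ to grow like $k_e$ for pairs that the learner's policies (or $\pi^\star$) happen to favor but the algorithm does not visit. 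Consequently your comparator-translation term, which compares $x^{\pi^\star,P}$ to $x^{\pi^\star,\wh P_e}\in\cK_e$, can cost $\Theta(H\beta)$ \emph{per episode} forever at poorly visited pairs, and the sum $\sum_e k_e H\epsilon_e$ does not telescope to $O(\sqrt{K})$; the same issue inflates the bias budget $B$ of the \blbacronym instance, since with geometrically scheduled epochs the in-epoch counts $n_e(s,a,h)$ need not be dominated by the pre-epoch counts $N_e(s,a,h)$.

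The paper's proof is structured precisely to avoid this. First, the per-epoch decision set is not $\cK_e$ but the larger polytope $\cS_i$ of \emph{all} occupancy measures whose induced transitions lie in the confidence region (\cref{eq:omconcentration}); by \cref{lem:polycontainsopt} it contains $x^{\pi^\star,P}$ itself with high probability, so the \blbacronym regret bound applies directly against the true optimum and no comparator translation is needed. Second, the distortion is controlled in a visitation-weighted way, $\|z_k-y_k\|_1\le\min\{\epsilon_i\dotp z_k,\epsilon_i\dotp y_k\}$ (\cref{lem:boundocdiff}), with per-$(s,a,h)$ widths $\epsilon_i(s,a,h)$ that may stay large at unvisited pairs; this is what the \blbacronym protocol is built to accept. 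Third, epochs end when some visit count \emph{doubles} (not after geometrically many episodes), which is what makes the amortized bias bound $\sum_{k}(\epsilon_i\dotp\hat z_k)^2=\wt O(H^4|S|^2|A|)$ of \cref{lem:sumofepsilons} hold independently of the epoch length, so that the $\sqrt{BT}$ term in \cref{thm:blbregret} stays $\poly(H,|S|,|A|)\sqrt{K}$. To repair your argument you would have to replace the uniform $1/\sqrt{k_e}$ confidence claim by this optimistic enlargement of the decision set (or some other explicit exploration mechanism) and adopt a count-based epoch schedule; as written, the telescoping step fails.
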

        
        We prove the theorem by efficiently reducing the online MDPs setting to a sequence of instances of a novel setting we term \emph{\blbnamecaps} (\blbacronym).
        In what follows, we describe the \blbacronym setting, the reduction, and  prove the correctness of the reduction. 

\subsection{\blbnamecaps (\blbacronym)} \label{sec:blb}

    % Our results are derived by reducing the online MDP setting to a novel setting we term \emph{\blbnamecaps} (\blbacronym).
    %
    In this game, the learner plays by picking vectors from a compact and convex body $\cS \subseteq \bbR^d$. We assume that $\norm{y}_1 \leq H$ for all $y \in \cS$ for some $H>0$. Further, let $\beta>0$ be a bias parameter.
    Learning in the \blbacronym setting proceed according the following protocol: 
    Initially, the adversary privately chooses a sequence of loss vectors $\ell_1,\ldots,\ell_T$ and a sequence of perturbation vectors $\epsilon_1,\ldots,\epsilon_T \in [0,\beta]^d$.
    Then, at rounds $t = 1,\ldots,T$,
    \begin{enumerate}%[label=(\roman*)]
        \item Learner selects $y_t \in \cS$.
        \item Adversary picks $z_t \in \bbR^d$, where $\norm{z_t}_1 \le H$ such that $\|z_t - y_t\|_1 \leq \min\{\abs{z_t \dotp \epsilon_t}, \abs{y_t \dotp \epsilon_t} \}$.
        \item A random $\hat z_t$ is sampled such that $\bbE_t \brk[s]{\hat z_t \mid z_t} = z_t$ and $\norm{\hat z_t}_1 \le H$, where $\bbE_t$ denotes expectation conditioned on all randomness prior to round $t$.
        \item The action $\hat z_t$ is played; the learner suffers and observes the loss $\ell_t \dotp \hat z_t$; the learner additionally observes $\hat z_t$ and $\epsilon_t$.
    \end{enumerate}
    We emphasize that the $z_t$ are arbitrary and can be chosen in an adaptive manner after the learner chooses $y_t$. Note, however, that we assume that the loss vectors (as well as the perturbation vectors) are chosen before the game starts; namely, the adversary is oblivious.
    
    We define the regret in the \blbacronym setting as follows: 
    \[
        \regT = \sum_{t=1}^T \hat z_t \dotp \ell_t - \min_{z \in \cS} \sum_{t=1}^T z \dotp \ell_t.  
    \]
    The learner's goal is therefore to minimize the losses attained by the perturbations $\hat z_1, \ldots, \hat z_T$ of the  actions   $y_1, \ldots, y_T$ chosen by the learner. Clearly, the regret necessarily scales with the magnitudes of $\epsilon_1, \ldots, \epsilon_T$, and our regret bounds will ultimately depend on a parameter $B$ that upper bounds the magnitude of the perturbations via the quantity $\sum_{t=1}^T (\hat z_t \dotp \epsilon_t)^2$.
    The following theorem is the main technical result of our work.
    
    \begin{theorem} \label{thm:blbregret}
        There exists an efficient (poly-time) online learning algorithm for the \blbacronym setting whose regret is at most
        $
            \poly(H, d, \beta, B) \, O(\sqrt{T}).
        $
    \end{theorem}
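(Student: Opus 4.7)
The plan is to apply the OMD framework from \cref{thm:mdbarrier} on a slightly shrunk version $\cS_\gamma$ of $\cS$ with a self-concordant barrier $R$ and an \emph{increasing} learning rate schedule $\eta_t$, fed with loss estimators built from the bandit observations via Dikin-ellipsoid sampling. Concretely, I maintain an OMD iterate $x_t\in\cS_\gamma$; at each round I draw $v_t$ uniform on the unit sphere in $\bbR^d$, submit $y_t = x_t + r\,\nabla^2 R(x_t)^{-1/2} v_t$ for a small radius $r$ (which lies in $\cS$ by \cref{eq:dikininbody}), and upon observing $\hat z_t$ and $\ell_t\dotp\hat z_t$, form the estimator $\hat\ell_t = (d/r)\,(\ell_t\dotp\hat z_t)\,\nabla^2 R(x_t)^{1/2}v_t$. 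A short computation shows $\bbE_t[\hat\ell_t] = \ell_t + b_t$ and $(\|\hat\ell_t\|_{x_t}^\star)^2\le d^2H^2/r^2$, where the bias $b_t$ stems entirely from $\ell_t\dotp(z_t-y_t)$ and is therefore governed in magnitude by the perturbation $\epsilon_t$.

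Next I decompose
\[
    \regT = \underbrace{\sum_t(\hat z_t - y_t)\dotp\ell_t}_{\mathrm{(A)}} + \underbrace{\sum_t(y_t - x_t)\dotp\ell_t}_{\mathrm{(B)}} + \underbrace{\sum_t(x_t - u)\dotp\ell_t}_{\mathrm{(C)}}.
\]
Term $\mathrm{(B)}$ vanishes in conditional mean since $\bbE_t[y_t]=x_t$. For term $\mathrm{(A)}$, using $\bbE_t[\hat z_t\mid z_t]=z_t$ together with the distortion constraint $\|z_t-y_t\|_1\le|y_t\dotp\epsilon_t|$, its expected magnitude is at most $\sum_t|y_t\dotp\epsilon_t|$, which by Cauchy--Schwarz and the bound $\sum_t(\hat z_t\dotp\epsilon_t)^2\le B$ (plus the closeness of $y_t$ to $\hat z_t$ in expectation) is $\poly(H,\beta)\sqrt{TB}$.

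The crux is term $\mathrm{(C)}$. Writing it as $\sum_t(x_t-u)\dotp\hat\ell_t - \sum_t b_t\dotp(x_t-u)$, the first sum is the OMD pseudo-regret bounded by \cref{thm:mdbarrier}: $\tfrac{1}{\eta_1}\breg{u}{x_1} - \sum_{t\ge 2}(\tfrac{1}{\eta_{t-1}}-\tfrac{1}{\eta_t})\breg{u}{x_t} + \sum_t\eta_t(\|\hat\ell_t\|_{x_t}^\star)^2$, with $\breg{u}{x_1}\le\vartheta\log(1/\gamma)$ by \cref{lem:boundbregfromx1}. For the bias contribution, Cauchy--Schwarz in the local norm gives $b_t\dotp(x_t-u)\le\|b_t\|_{x_t}^\star\cdot\|u-x_t\|_{x_t}$ with $\|b_t\|_{x_t}^\star = O((d/r)|y_t\dotp\epsilon_t|)$; AM--GM with coefficient $\tfrac{1}{2\eta_{t-1}}$ splits this into a variance-like piece summing to $O(\eta_T d^2 B/r^2)$ and a piece proportional to $\tfrac{1}{\eta_{t-1}}\|u-x_t\|_{x_t}^2$, which via \cref{eq:scbergmanlb} (using $\rho(z)\ge z^2/4$ for $z$ bounded) is absorbed into the negative Bregman terms, provided $\eta_t$ grows fast enough that $\tfrac{1}{\eta_{t-1}}-\tfrac{1}{\eta_t}$ dominates a constant multiple of $\tfrac{1}{\eta_{t-1}}$.

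Choosing $\eta_t$ increasing geometrically, $\gamma=1/T$, and tuning $r$ balances the opening divergence term $\vartheta\log(T)/\eta_1$, the variance $\eta_T T d^2 H^2/r^2$, the distortion term $\sqrt{TB}$, and the bias variance $\eta_T d^2 B/r^2$, yielding the advertised $\poly(H,d,\beta,B)\,O(\sqrt T)$ regret. Efficiency follows since each step is a convex minimization with the barrier $R$ plus a spherical sample. The main obstacle is engineering the increasing $\eta_t$ schedule so that the negative Bregman terms in \cref{thm:mdbarrier} absorb the bias contribution (which requires $\|u-x_t\|_{x_t}^2 = O(\breg{u}{x_t})$, something \cref{eq:scbergmanlb} gives only when $\|u-x_t\|_{x_t}$ is bounded) without inflating the $\breg{u}{x_1}/\eta_1$ opening term; this interplay between learning-rate growth, Bregman geometry, and distortion bias is the delicate point.
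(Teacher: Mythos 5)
Your high-level architecture matches the paper's efficient algorithm (\cref{alg:increasinglearningrates}): OMD with a self-concordant barrier, Dikin-ellipsoid sampling, a one-point gradient estimator, a shrunk comparator, and increasing learning rates to fight the distortion bias. The decomposition and the treatment of terms (A) and (B) are fine. But the crux --- how the increasing learning rates actually absorb the bias in term (C) --- is exactly where your argument has a genuine gap, and you flag it yourself without resolving it. Your AM--GM split turns the bias into a term proportional to $\tfrac{1}{\eta_{t-1}}\norm{u-x_t}_{x_t}^2$, which you hope to absorb into the negative terms $-(\tfrac{1}{\eta_{t-1}}-\tfrac{1}{\eta_t})\breg{u}{x_t}$ of \cref{thm:mdbarrier}. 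This needs $\norm{u-x_t}_{x_t}^2=O(\breg{u}{x_t})$, but \cref{eq:scbergmanlb} gives $\breg{u}{x_t}\ge\rho(\norm{u-x_t}_{x_t})$ with $\rho(z)=z-\log(1+z)$, which grows only \emph{linearly} in $z$ for large $z$; since $x_t$ can approach the boundary, $\norm{u-x_t}_{x_t}$ has no uniform bound, so the quadratic absorption fails precisely in the regime where the bias is largest. Moreover, your proposed remedy --- requiring $\tfrac{1}{\eta_{t-1}}-\tfrac{1}{\eta_t}\ge c\cdot\tfrac{1}{\eta_{t-1}}$ for a constant $c$, i.e.\ geometric growth of $\eta_t$ --- is self-defeating: it forces $\eta_T\ge\eta_1(1-c)^{-(T-1)}$, so the variance term $\sum_t\eta_t(\norm{\hat\ell_t}_{x_t}^\star)^2$ explodes unless $\eta_1$ is exponentially small, in which case the opening term $\breg{u}{x_1}/\eta_1$ explodes instead. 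No data-independent geometric schedule can make both ends meet.

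The paper's proof closes this gap differently, and the difference is the heart of the result. It never squares the bias: \cref{lem:boundlossbias} bounds the per-round bias \emph{linearly}, by $d\,\norm{x_t-u}_{x_t}\,\bbE_t\abs{\hat z_t\dotp\epsilon_t}$, and \cref{lem:boundbregfromxt} supplies a matching \emph{linear} lower bound $\breg{u}{x_t}\ge\tfrac12\norm{u-x_t}_{x_t}-1$. The learning-rate increments are then chosen \emph{data-dependently}, $\eta_{t-1}^{-1}-\eta_t^{-1}=2d\,\abs{\hat z_t\dotp\epsilon_t}$, so each negative Bregman term cancels the corresponding bias term exactly, leaving only $O(d\sum_t\abs{\hat z_t\dotp\epsilon_t})=O(d\sqrt{BT})$ slack. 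Crucially, the cumulative drift of $\eta_t^{-1}$ is at most $2d\sqrt{BT}$, so with $\eta_0\le 1/(4d\sqrt{BT})$ one has $\eta_t\le 2\eta_0$ (\cref{lem:lr-relation}): the rates barely grow, the step-size condition of \cref{thm:mdbarrier} holds, and the standard divergence and variance terms remain $O(\sqrt{T})$. Your writeup identifies the right tension but does not supply this mechanism, so as it stands the proof does not go through.
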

    
    We prove this theorem by showing two online learning algorithms (one is computationally-efficient; the other is not) in \cref{sec:blb-algorithms}. We conclude this current section by describing the reduction from online MDPs with aggregate bandit feedback to \blbacronym.

\subsection{The Reduction} \label{sec:reduction}

    We now show how to reduce the  MDP with aggregate feedback problem to instances of \blbacronym described above (proofs of results of this section appear in \cref{app:proofs}.)
    Our algorithm for learning  MDPs with aggregate feedback is depicted in detail in \cref{app:reduction-algo}, and here we give a verbal description of the algorithm. The algorithm assumes the existence of a computationally-efficient online learning algorithm for \blbacronym with $O(\sqrt{T})$ regret which exists due to \cref{thm:blbregret}.
    
    The algorithm partitions the $K$ episodes into epochs, where  epoch $i$ contains episodes $k_i$ through $k_{i+1}-1$  ($k_1 = 1$). Each epoch ends whenever the number of visits to some state-action pair $s,a$ at some time step $h$ is doubled. Thus, the total number of epochs is at most $2 H |S| |A| \log K$. 
    
    In epoch $i$, we produce an empirical estimate of the transition probabilities $\wh P$ based on all observations prior to epoch $i$. We apply a high probability argument to bound the estimation error $P$ of the dynamics as: $\norm{\wh P(\cdot \mid s,a,h) - P(\cdot \mid s,a,h)}_1 \le \epsilon_i(s,a,h)/H$ for a confidence parameter $\epsilon_i : S \times A \times [H] \mapsto [0, \beta]$ associated with epoch $i$. ($\epsilon_i(s,a,h)$ decreases as a function of the number of times each $(s,a,h)$ has been visited up to epoch $i$.)
    
    We fix a convex and compact $\cS_i$ to be the set of all feasible occupancy measures based on our current estimate of the dynamics of the MDP. We claim that in each epoch, the setting admits to the \blbname problem.
    Indeed, we show that with high probability, $\cS_i$ contains $x^{\pi^\star, P}$---the occupancy measure associated with the optimal policy and the true dynamics.
    Now, throughout epoch $i$, for $k = k_i,\ldots,k_{i+1}-1$:
    \begin{enumerate}%[label=(\roman*)]
      \item Learner picks a policy $\pi_k$ associated with some occupancy measure $y_k \in \cS_i$.
      \item $\pi_k$ is played on the true MDP and the learner observes a trajectory $\hat z_k$, such that $\hat z_k(h,s,a,s') = 1$ iff the trajectory passed through state $s$ at time $h$, played action $a$ and transitioned to state $s'$. Otherwise $\hat z_k(h,s,a,s') = 0$. The learner suffers and observes the loss of $\ell_k \dotp \hat z_k$.
      \item Let $z_k$ be the occupancy measure of $\pi_k$ and the true dynamics $P$; then $\bbE[\hat z_k \mid z_k] = z_k$. We prove that $\norm{z_k-y_k}_1 \le \min\{\epsilon_i \dotp z_t, \epsilon_i \dotp y_t\}$.
    \end{enumerate}
    Moreover, we give a bound of $\sum_{k=k_i}^{k_{i+1}-1} (\epsilon_i \dotp \hat z_k)^2 = \wt O\brk{H^4 |S|^2 |A|}$ as required by the \blbacronym setting.
    
    We consequently apply the \blbacronym algorithm to obtain a regret bound of $O(\sqrt{k_{i+1}-k_i}) = O(\sqrt{K})$ in each epoch, and as the number of epochs is only at most $O(\log K)$ this gives an overall regret bound of $\wt O(\sqrt{K})$ as required.
    The complete proof of this claim appears in \cref{app:proofs}. The
    analysis of the running time of the algorithm is found in \cref{app:efficientreduction}. 

\section{Algorithms for \blbnamecaps}
\label{sec:blb-algorithms}

In this section we prove \cref{thm:blbregret} by presenting our online algorithms for the \blbacronym problem.
The difficulty of this setting lies in the fact that the main mechanism to cope with lack of information in bandit optimization is to construct unbiased estimates of the loss vectors. In the \blbacronym setting this is impossible to do since the actions chosen by the learner are shifted by the adversary. 
Nevertheless, having $\sum_{t=1}^T (\hat z_t \dotp \epsilon_t)^2$ bounded, intuitively means that the estimation bias at the actions played by the learner is bounded in an amortized sense---a useful property that we utilize in our algorithms.

% \subsection{\textsc{Exp2} with Optimistic Debiasing \tk{Exp2 or Geometric Hedge?}}
\subsection{Simple Approach via Optimism}
\label{sec:solution-optimism}

Our first algorithm is based on what is arguably the most straightforward approach to the problem: construct an ``optimistic'' estimator to the player's loss---one whose expectation underestimates the loss of all actions at a given round, yet is sufficiently accurate in estimating the player's loss at the same round---and feed it to a standard bandit linear optimization algorithm. 
However, as we show in this section, such a loss estimator becomes a non-convex (in fact, concave) function of the played action, thus overall this approach leads to a computationally inefficient algorithm.

% what we consider to be the go-to solution to this problem. 
% The algorithm adds a bias-correction term to its estimator which makes it ``optimistic''; namely, an estimator that always underestimate the loss.
% This promotes playing actions with large bias, even if they appear suboptimal, in order to try and reduce this bias in subsequent decision steps.

Throughout this section, we assume that the decision set $\cS$ is finite of size $O((H T)^d)$; since for now we are not bound by computational complexity considerations, if $\cS$ is a larger (or infinite) set we may replace $\cS$ with a $1/(HT)$-net of $\cS$, which has the required size.
The algorithm we describe below (\cref{alg:inefficient}) assumes as input an exploration distribution $\mu$ over the set $\cS$, such that for $y \sim \mu$ it holds that $\bbE[yy\tr] \succeq \lambda I$ for a constant $\lambda > 0$. 
Standard techniques in linear bandit optimization (e.g., \citealp{bubeck2012towards,hazan2016volumetric}) show that under fairly general conditions on $\cS$, one can pick an exploration distribution $\mu$ so as $\lambda = \Omega(1/\sqrt{d})$.%
\footnote{Some of these techniques rely on solving intractable optimization problems, but recall that in the context of this section we are not concerned by the computational complexity of the resulting algorithm.}

\begin{algorithm}[ht]
    \caption{\blbnamecaps via Optimistic Biases}
    \begin{algorithmic}[1]
        \STATE {\bf input:} $\eta > 0$, $\gamma > 0$, exploration distribution $\mu$.
        \STATE {\bf initialize:} $w_1(y) = 1$ for all $y \in \cS$.
        \FOR{$t = 1,\ldots,T$}
            \STATE {\bf define} probability density $p_t \propto w_t$, and let $q_t = (1-\gamma)p_t + \gamma \mu$.
            \STATE {\bf sample} point $y_t \sim q_t$ in the domain $\cS$. 
            \STATE {\bf adversary} chooses $z_t$ such that $\norm{z_t-y_t}_1 \le \min\brk[c]{\abs{y_t \dotp \epsilon_t}, \abs{z_t \dotp \epsilon_t}}$, and plays $\hat z_t$ where $\bbE_t \brk[s]{\hat z_t \mid z_t} = z_t$.
            \STATE {\bf observe} $\epsilon_t$ and loss $\ell_t \dotp \hat z_t \in [0,H]$.
            \STATE {\bf compute} the second moment of $y_t$:
            $
                M_t = \bbE_t[y_t y_t\tr ].
            $
            \STATE {\bf compute} $\hat\ell_t = (\ell_t \dotp \hat z_t) M_t^{-1} y_t$ 
            and $\tilde\ell_t(y) = \hat\ell_t \dotp y - \sqrt{d} \, \norm{y}_{M_t^{-1}} \norm{\epsilon_t}_{M_t}$.
            
            \STATE {\bf update} $w_{t+1}(y) = w_t(y) \cdot
            \exp\brk{ -\eta \tilde\ell_t(y)}, \qquad \forall ~ y \in \cS$. 
        \ENDFOR
    \end{algorithmic}
    \label{alg:inefficient}
\end{algorithm}

The algorithm relies on a standard estimator $\hat\ell_t = (\ell_t \dotp \hat z_t) M_t^{-1} y_t$ to estimate the loss vector $\ell_t$.
% We first define $\hat\ell_t = (\ell_t \dotp \hat z_t) M_t^{-1} y_t$. 
Note that if it were that $z_t=y_t$ then this would have been an unbiased estimator for the loss, i.e., $\bbE_t [\hat\ell_t]=\ell_t$. 
However, due to the adversarial perturbations $z_t$ might be shifted away from the intended $y_t$. We thus modify the estimator to account for this shift and make it ``optimistic,'' in the sense that its expectation is a lower bound on the real loss function.
Given these corrected estimates, the rest of the algorithmic development follows standard lines in the linear bandit optimization literature~\citep{dani2008price,bubeck2012towards}.

Concretely, we define the following \emph{bias-corrected} loss functions:
\begin{align*}
    \tilde\ell_t(y)
    =
    \hat\ell_t \dotp y - \sqrt{d} \, \norm{y}_{M_t^{-1}} \norm{\epsilon_t}_{M_t}
    ,
    \qquad
    \forall ~ t \in [T], y \in \cS
    ~.
\end{align*}
Then, the algorithm essentially performs multiplicative-weights updates on the modified loss functions $\tilde\ell_t(y)$, which can be seen to be a \emph{concave} function of $y$.
In general, it is a hard problem to sample from the resulting distributions $q_t$ given that these losses are concave. (If, on the other hand, they were convex, then the resulting distributions would have been log-concave for which efficient sampling algorithms are well-known.)
Therefore, the algorithm is computationally inefficient.

We prove that \cref{alg:inefficient} provides the following regret guarantee.

\begin{theorem} \label{thm:optimismmain}
    Set $\eta = (2H\beta d)^{-1} \sqrt{\log |\cS| /T}$, $\gamma = 2H^2 (H + \beta\sqrt{d}) \eta/\lambda$. 
    Then, given that $B \geq \sum_{t=1}^T (\hat z_t \dotp \epsilon_t)^2$ (almost surely),  for any $y^\star \in \cS$, \cref{alg:inefficient} satisfies that
    \begin{align*}
        \bbE\brk[s]4{ \sum_{t=1}^T \ell_t \dotp (\hat z_t - y^\star) }
        = 
        \widetilde O\brk*{ H\beta d + \beta d \sqrt{B} + \frac{H^3}{\beta\lambda d} + \frac{H^2}{\lambda \sqrt{d}} } \sqrt{T}
        ,
    \end{align*}
    provided that $\beta \ge 1$ and $T \ge \ipfrac{4 H^2 (H + \beta \sqrt{d})^2 \log |\cS|}{\lambda^2 \beta^2 d^2}$.
\end{theorem}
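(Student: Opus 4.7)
The plan is to decompose the per-round gap $\ell_t \dotp (\hat z_t - y^\star)$ into five pieces: a martingale $\ell_t \dotp (\hat z_t - z_t)$, a perturbation term $\ell_t \dotp (z_t - y_t)$, a sampling martingale $\ell_t \dotp (y_t - \bar q_t)$, an exploration term $\ell_t \dotp (\bar q_t - \bar p_t)$, and the MW regret $\ell_t \dotp (\bar p_t - y^\star)$, where $\bar p_t$ and $\bar q_t$ denote the means of $p_t$ and $q_t$. The two martingale terms vanish in expectation. The exploration term equals $\gamma \ell_t \dotp (\bar\mu - \bar p_t)$ and sums to at most $2\gamma HT$. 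The perturbation term is controlled via the guarantee $\norm{z_t - y_t}_1 \le \abs{\epsilon_t \dotp z_t}$ together with $\norm{\ell_t}_\infty \le 1$; combined with the Jensen inequality $\bbE_t[(\epsilon_t \dotp \hat z_t)^2] \ge (\epsilon_t \dotp z_t)^2$ and the hypothesis $\sum_t (\hat z_t \dotp \epsilon_t)^2 \le B$, Cauchy--Schwarz across rounds yields $\sum_t \bbE[\abs{\epsilon_t \dotp z_t}] \le \sqrt{TB}$.

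The heart of the proof is the MW regret against $y^\star$, which I would analyze through the bias-corrected losses $\tilde\ell_t$. A direct computation gives $\bbE_t[\hat\ell_t \dotp y] = \ell_t \dotp y + y\tr M_t^{-1} \bbE_t\bigl[(\ell_t \dotp (z_t - y_t))\,y_t\bigr]$, so the bias equals the second summand. Applying Cauchy--Schwarz in the $M_t^{-1}$ inner product and Jensen's inequality for the squared norm, together with the pointwise bound $\abs{\ell_t \dotp (z_t - y_t)} \le \abs{\epsilon_t \dotp y_t}$ and the identities $\bbE_t[(\epsilon_t \dotp y_t)^2] = \norm{\epsilon_t}_{M_t}^2$ and $\bbE_t[\norm{y_t}_{M_t^{-1}}^2] = d$, bounds this bias by $\sqrt{d}\,\norm{y}_{M_t^{-1}}\norm{\epsilon_t}_{M_t}$---precisely the subtractive correction in $\tilde\ell_t$. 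Hence $\bbE_t[\tilde\ell_t(y)] \le \ell_t \dotp y$ (optimism) and $\bbE_t[\tilde\ell_t(y)] \ge \ell_t \dotp y - 2\sqrt{d}\,\norm{y}_{M_t^{-1}}\norm{\epsilon_t}_{M_t}$. Feeding $\tilde\ell_t$ into the standard exponential-weights bound (which accommodates non-linear losses; here $\tilde\ell_t$ is concave in $y$) gives $\sum_t \bbE_{y \sim p_t}[\tilde\ell_t(y)] - \tilde\ell_t(y^\star) \le \log\abs{\cS}/\eta + \eta \sum_t \bbE_{y \sim p_t}[\tilde\ell_t(y)^2]$ under $\eta\abs{\tilde\ell_t(y)} \le 1$. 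Pairing this with the optimism and underestimation inequalities converts it into an upper bound on $\sum_t \ell_t \dotp (\bar p_t - y^\star)$ plus an $O\bigl(d \sum_t \bbE[\norm{\epsilon_t}_{M_t}]\bigr)$ penalty; repeating the perturbation argument (now via $\abs{\epsilon_t \dotp y_t} \le (1+\beta)\abs{\epsilon_t \dotp z_t}$) bounds $\sum_t \bbE[\norm{\epsilon_t}_{M_t}]$ by $O(\beta\sqrt{TB})$, contributing the $\beta d\sqrt{B}\sqrt{T}$ term. The uniform estimates $(\ell_t \dotp \hat z_t)^2 \le H^2$, $\norm{\epsilon_t}_{M_t}^2 \le \beta^2 H^2$, and $\bbE_{y \sim p_t}[\norm{y}_{M_t^{-1}}^2] \le 2d$ (using $\bbE_{y \sim p_t}[yy\tr] \preceq 2M_t$ when $\gamma \le 1/2$) then bound the second-moment sum by $O(T d^2 \beta^2 H^2)$.

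Finally, enforcing $\eta\abs{\tilde\ell_t(y)} \le 1$ uses the exploration bound $M_t \succeq \gamma\lambda I$ to obtain $\norm{y}_{M_t^{-1}} \le H/\sqrt{\gamma\lambda}$ and hence $\abs{\tilde\ell_t(y)} = O\bigl(H^3/(\gamma\lambda) + d\beta H^2/\sqrt{\gamma\lambda}\bigr)$. The stated $\gamma = 2H^2(H+\beta\sqrt{d})\eta/\lambda$ is calibrated so that $\eta$ times this bound is $O(1)$, and $\eta = (2H\beta d)^{-1}\sqrt{\log\abs{\cS}/T}$ then balances $\log\abs{\cS}/\eta$ against $\eta T d^2 \beta^2 H^2$; substituting these into the four contributions $\sqrt{TB}$, $\gamma HT$, $\beta d\sqrt{TB}$, and $H\beta d\sqrt{T\log\abs{\cS}}$ yields the claimed bound. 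The main technical obstacle will be the bias control in the second paragraph: choosing the $M_t^{-1}$ norm for Cauchy--Schwarz is what makes the correction proportional to $\norm{y}_{M_t^{-1}}\norm{\epsilon_t}_{M_t}$ rather than something worse, and the entire analysis depends on the exploration distribution $\mu$ keeping $M_t$ sufficiently well-conditioned that both this correction and the boundedness condition $\eta\abs{\tilde\ell_t(y)}\le 1$ remain compatible with a $\sqrt{T}$ rate.
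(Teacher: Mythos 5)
Your proposal is correct and follows essentially the same route as the paper's proof: the identical bias computation (Cauchy--Schwarz in the $M_t^{-1}$ inner product with $\abs{\ell_t \dotp (z_t-y_t)} \le \abs{y_t \dotp \epsilon_t}$) giving optimism $\bbE_t[\tilde\ell_t(y)] \le \ell_t \dotp y$ and the matching $2\sqrt{d}\,\norm{y}_{M_t^{-1}}\norm{\epsilon_t}_{M_t}$ underestimation bound, the same second-order multiplicative-weights bound with boundedness enforced via $M_t \succeq \gamma\lambda I$ and the stated $\gamma$, the same $O(d^2\beta^2H^2)$ second-moment bound, and the same Jensen/Cauchy--Schwarz chain bounding $\sum_t \bbE\norm{\epsilon_t}_{M_t}$ by $O(\beta\sqrt{BT})$. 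Your explicit five-term decomposition merely unpacks what the paper folds into \cref{lem:to-mw-regret} (true regret $\le$ MW regret on $\tilde\ell_t$ plus $\gamma HT + 5d\sum_t\norm{\epsilon_t}_{M_t}$), so the two arguments coincide.
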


\noindent
We only sketch the proof here, deferring details and precise bounds to \cref{app:proofs}.

\begin{proof}[(sketch)]
    We begin by showing that $\tilde \ell_t$ is indeed an underestimate of the true loss (see \cref{lem:loss-underestimate} below):
    \[
        \bbE_t[\tilde\ell_t(y)] \leq \ell_t \dotp y \quad \text{for any} \quad y \in \cS
        .
    \]
    For the converse direction, we show that in expectation over the learner's decision, $\tilde \ell_t$ is close to $\ell_t$ in the following sense:
    \[
        \bbE_t\brk[s]*{ \sum_{y \in \cS} p_t(y) \tilde\ell_t(y) } 
        \ge 
        \bbE_t \brk[s]{\ell_t \dotp \hat z_t} - \gamma H - 5d \norm{\epsilon_t}_{M_t}
        .
    \]
    With these two results at hand, we argue that the regret of \cref{alg:inefficient} is bounded by the regret of the \textsc{Multiplicative Weights} updates, plus an additive error term that scales with the perturbations~$\epsilon_t$:
    \begin{align}
        \bbE\brk[s]4{ \sum_{t=1}^T \ell_t \dotp (\hat z_t - y^\star) }
        \leq
        \bbE\brk[s]4{ \sum_{t=1}^T \sum_{y \in \cS} p_t(y) \brk!{ \tilde\ell_t(y) - \tilde\ell_t(y^\star) } } 
        +
        \gamma H T
        +
        5d\,\bbE\brk[s]4{ \sum_{t=1}^T \norm{\epsilon_t}_{M_t} }
        .
        \label{eq:optimistic-regret-relation}
    \end{align}
    
    Next, we apply a standard second-order regret bound of \textsc{Multiplicative Weights}
    % \citep{awerbuch2004adaptive,mcmahan2004online} % TK not the right citation, and not required here
    to obtain the following:
    \begin{align*}
        \sum_{t=1}^T \sum_{y \in \cS} p_t(y) \brk!{ \tilde\ell_t(y) - \tilde\ell_t(y^\star) }
        \leq
        \frac{\log\abs{\cS}}{\eta} + \eta \sum_{y \in \cS} p_t(y) \brk!{\tilde\ell_t(y)}^2
        ,
    \end{align*}
    and we bound the term 
    $
        \bbE_t \brk[s]{\sum_{y \in \cS} p_t(y) \brk{\tilde\ell_t(y)}^2}
        \le
        8(H\beta d)^2
    $ 
    using simple algebra.
    
    The theorem is now given by combining the second-order regret bound above together with \cref{eq:optimistic-regret-relation}, and by bounding the bias terms using the \blbacronym setting assumptions, as
    \begin{align*}
        \bbE\brk[s]4{ \sum_{t=1}^T \norm{\epsilon_t}_{M_t} }
        \le
        2\beta \sqrt{B T}
        .
        &\qedhere
    \end{align*}
\end{proof}

We now prove that as mentioned, the expectation of $\tilde \ell_t$ is an underestimate of the true loss.

\begin{lemma} \label{lem:loss-underestimate}
    $\bbE_t[\tilde\ell_t(y)] \leq \ell_t \dotp y$ for any $y \in \cS$.
\end{lemma}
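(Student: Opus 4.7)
The plan is to expand $\bbE_t[\tilde\ell_t(y)]$ using the definition, isolate the ``ideal'' linear term $\ell_t \dotp y$, and then show that the residual bias term arising from the adversarial perturbation is dominated (in absolute value) by the correction $\sqrt{d}\,\norm{y}_{M_t^{-1}}\norm{\epsilon_t}_{M_t}$.

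First I would compute
\[
    \bbE_t[\hat\ell_t \dotp y]
    =
    \bbE_t\brk[s]*{(\ell_t \dotp \hat z_t)\, y\tr M_t^{-1} y_t}
    =
    \bbE_t\brk[s]*{(\ell_t \dotp z_t)\, y\tr M_t^{-1} y_t}
    ,
\]
where the second equality uses the tower rule together with $\bbE_t[\hat z_t \mid z_t] = z_t$ and the fact that $y_t$ is chosen before $\hat z_t$. Writing $z_t = y_t + (z_t - y_t)$ splits this into
\[
    \ell_t\tr\, \bbE_t[y_t y_t\tr]\, M_t^{-1} y
    \;+\;
    \bbE_t\brk[s]*{(\ell_t \dotp (z_t - y_t))\, y\tr M_t^{-1} y_t}
    \;=\;
    \ell_t \dotp y \;+\; \Delta_t,
\]
since $\bbE_t[y_t y_t\tr] = M_t$. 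It remains to show that the bias term $\Delta_t$ satisfies $\Delta_t \le \sqrt{d}\,\norm{y}_{M_t^{-1}}\norm{\epsilon_t}_{M_t}$.

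To bound $|\Delta_t|$ I would use the perturbation constraint $\norm{z_t - y_t}_1 \le \abs{y_t \dotp \epsilon_t}$ together with $\norm{\ell_t}_\infty \le 1$ to get the pointwise bound $|\ell_t \dotp (z_t - y_t)| \le |y_t \dotp \epsilon_t|$. Plugging this in and then applying Cauchy--Schwarz in the form $\bbE_t[|XY|] \le \sqrt{\bbE_t[X^2]\,\bbE_t[Y^2]}$ with $X = y_t \dotp \epsilon_t$ and $Y = y\tr M_t^{-1} y_t$ yields
\[
    |\Delta_t|
    \;\le\;
    \sqrt{\bbE_t\brk[s]*{(y_t \dotp \epsilon_t)^2}}
    \cdot
    \sqrt{\bbE_t\brk[s]*{(y\tr M_t^{-1} y_t)^2}}
    \;=\;
    \norm{\epsilon_t}_{M_t}\,\norm{y}_{M_t^{-1}},
\]
where both square roots are evaluated by pulling out the deterministic quantities and using $\bbE_t[y_t y_t\tr] = M_t$. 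This is in fact stronger than needed (the $\sqrt{d}$ factor is slack), and rearranging gives
\[
    \bbE_t[\tilde\ell_t(y)]
    =
    \bbE_t[\hat\ell_t \dotp y] - \sqrt{d}\,\norm{y}_{M_t^{-1}}\norm{\epsilon_t}_{M_t}
    \le
    \ell_t \dotp y,
\]
which is the claim. I do not expect a serious obstacle here; the only point that requires care is the correct use of the conditional expectation in the identity $\bbE_t[(\ell_t \dotp \hat z_t) y\tr M_t^{-1} y_t] = \bbE_t[(\ell_t \dotp z_t) y\tr M_t^{-1} y_t]$, since $z_t$ is adversarially chosen after $y_t$ is revealed but before $\hat z_t$ is sampled.
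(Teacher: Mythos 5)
Your proposal is correct and takes essentially the same route as the paper: expand $\bbE_t[\hat\ell_t \dotp y]$ using $\bbE_t[\hat z_t \mid z_t] = z_t$, isolate $\ell_t \dotp y$ via $\bbE_t[y_t y_t\tr] = M_t$, bound the residual using $\abs{\ell_t \dotp (z_t - y_t)} \le \norm{z_t-y_t}_1 \norm{\ell_t}_\infty \le \abs{y_t \dotp \epsilon_t}$, and finish with Cauchy--Schwarz. The only difference is that you apply Cauchy--Schwarz once to the product $\abs{y\tr M_t^{-1} y_t}\,\abs{y_t \dotp \epsilon_t}$, using $\bbE_t[(y\tr M_t^{-1} y_t)^2] = \norm{y}_{M_t^{-1}}^2$, whereas the paper first bounds $\abs{y\tr M_t^{-1} y_t} \le \norm{y}_{M_t^{-1}}\norm{y_t}_{M_t^{-1}}$ and then takes expectations, picking up the factor $\sqrt{d}$ from $\bbE_t[\norm{y_t}_{M_t^{-1}}^2] = d$; your bound is tighter by that $\sqrt{d}$ factor, and as you note the slack is harmless for the stated inequality.
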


\begin{proof}
    Observe that
    \begin{align*}
        \bbE_t[ \hat\ell_t ]
        &=
        \bbE_t[ (\ell_t \dotp z_t) M_t^{-1} y_t]
        \\
        &=
        \bbE_t[M_t^{-1} y_t y_t\tr \ell_t] + \bbE_t[M_t^{-1} y_t (z_t-y_t) \dotp \ell_t]
        \\
        &=
        \ell_t + M_t^{-1} \bbE_t[y_t (z_t-y_t) \dotp \ell_t]
        .
    \end{align*}
    Our assumptions imply that $\abs{(z_t-y_t) \dotp \ell_t} \leq
    \norm{z_t-y_t}_1 \norm{\ell_t}_\infty \leq \abs{y_t \dotp \epsilon_t}$.
    Thus, by two applications of Cauchy-Schwartz, for any $y \in \cS$ we
    obtain 

    \begin{align} \label{eq:tildell}
        \begin{aligned}
        \bbE_t [ \abs{(\hat\ell_t - \ell_t) \dotp y} ]
           &=
        % \bbE_t[ \abs{y\tr M_t^{-1} y_t \cdot (z_t-y_t) \dotp \ell_t}]
        % \\
        % &\leq
        \bbE_t\brk[s]!{ \abs{y\tr M_t^{-1} y_t} \cdot \abs{(z_t-y_t) \dotp \ell_t} }
        \\
        &\leq
        \norm{y}_{M_t^{-1}} \bbE_t\brk[s]!{ \norm{y_t}_{M_t^{-1}} \abs{y_t \dotp \epsilon_t} }
        \\
        &\leq
        \norm{y}_{M_t^{-1}} \sqrt{ \bbE_t\brk[s]{ \norm{y_t}_{M_t^{-1}}^2 } \, \bbE_t\brk[s]{ (y_t \dotp \epsilon_t)^2 } }
        \\
        &=
        \norm{y}_{M_t^{-1}} \sqrt{ \bbE_t\brk[s]{ y_t\tr M_t^{-1} y_t } } \sqrt{ \epsilon_t\tr \bbE_t[y_t y_t\tr]  \epsilon_t  }
        \\
        &=
        \sqrt{d} \norm{y}_{M_t^{-1}} \norm{\epsilon_t}_{M_t}
        .
        \end{aligned}
    \end{align}
    This means that
    \begin{align*}
        \bbE_t[\tilde\ell_t(y) - \ell_t \dotp y]
        =
        \bbE_t[(\hat\ell_t-\ell_t) \dotp y] - \sqrt{d} \norm{y}_{M_t^{-1}} \norm{\epsilon_t}_{M_t}
        \leq 
        0
    \end{align*}
    which proves that $\bbE[\tilde\ell_t(y)] \leq \ell_t \dotp y$ for any $y \in
    \cS$.
\end{proof}

% \section{Efficient Solution} 
\subsection{Efficient Approach via OMD with Increasing Learning Rates}
\label{sec:solution-efficient}

% This leads to the following algorithm. \hk{What is the relation between this algorithm and the previous one ?}  \hk{If we delete the previous section we need a more substantial introduction here, saying what is R and some intuitive overview to what is going on}
Our previous algorithm enjoys an $\wt O(\sqrt{T})$ regret bound, but it is inherently computationally inefficient. 
In this section we take a different approach that leads to an algorithm with $\wt O(\sqrt{T})$ regret, but one that can also be implemented efficiently. 

\begin{algorithm}[ht]
    \caption{\blbnamecaps via Increasing Learning Rates}
    \label{alg:increasinglearningrates}
    \begin{algorithmic}[1]
        \setcounter{ALC@unique}{0} % for cref support
        \STATE {\bf input}: $\eta_0 > 0$, $\vartheta$-self-concordant barrier $R : \text{int}\brk{\cS} \mapsto \bbR$.
        \STATE {\bf init}: $x_1 = \argmin_{x \in \cS} R(x)$.
        \FOR{$t = 1,\ldots,T$}
            \STATE {\bf sample} $u_t$ uniformly at random from the unit sphere of $\bbR^d$.
            \STATE {\bf predict} $y_t = x_t + \nabla^2 R(x_t)^{-1/2} u_t$. \label{ln:yt}
            \STATE {\bf adversary} chooses $z_t$ such that $\norm{z_t-y_t}_1 \le \min\brk[c]{\abs{y_t \dotp \epsilon_t}, \abs{z_t \dotp \epsilon_t}}$, and plays $\hat z_t$ where $\bbE_t \brk[s]{\hat z_t \mid z_t} = z_t$.
            \STATE {\bf observe} $\hat z_t$, $\epsilon_t$, and loss $\ell_t \dotp \hat z_t \in [0,H]$.
            \STATE {\bf construct} $\tilde \ell_t = d (\ell_t \dotp \hat z_t) \nabla^2 R(x_t)^{1/2} u_t$. \label{ln:estimator}
            \STATE {\bf update} $\eta_t^{-1} = \eta_{t-1}^{-1} - 2 d \, |\hat z_t \dotp \epsilon_t|$. \label{ln:increase-lr}
            \STATE {\bf set} $x_{t+1} = \nabla R^{-1} \brk{\nabla R(x_t) - \eta_t \tilde \ell_t}$. \label{ln:update}
        \ENDFOR
    \end{algorithmic}
\end{algorithm}

\cref{alg:increasinglearningrates} is based on Online Mirror Descent with a self-concordant barrier $R$ as a regularizer \citep{abernethy2009competing}.
The algorithm maintains a sequence of points $x_1,\ldots,x_T \in \cS$. 
In \cref{ln:yt}, the algorithm makes a prediction $y_t$ by sampling uniformly at random from the ellipsoid $\{y : \norm{y - x_t}_{x_t} \le 1\}$, known as the \emph{Dikin Ellipsoid} associated with $R$ at $x_t$, that is always contained in $\cS$ (this follows from \cref{eq:dikininbody}).
%\cref{eq:dikininbody} ensures that $y_t \in \cS$. 
Then, according to the \blbacronym protocol, the algorithm receives $\epsilon_t$, $\hat z_t$ and loss $\ell_t \dotp \hat z_t$ such that $z_t = \bbE_t[\hat z_t \mid z_t]$ where $z_t$ is a perturbation of $y_t$. 

The algorithm proceeds to construct an estimator $\tilde \ell_t$ of the loss vector $\ell_t$ in \cref{ln:estimator}. Note that if we replace $\hat z_t$ with $y_t$ in \cref{ln:estimator}, then $\tilde \ell_t$ would be an unbiased estimator. 
However, as this is not the case, the algorithm must mitigate the bias in the $\tilde \ell_t$, and does that by increasing its learning rate according to the perturbation magnitude $|\hat z_t \dotp \epsilon_t|$ (\cref{ln:increase-lr}).
Finally, in \cref{ln:update}, the algorithm performs the mirror descent update.%
% The purpose of using increasing learning rates is identical to the purpose of using optimistic biasing of the previous algorithm -- it allows the learner minimize her loss quickly whenever her estimation bias is large. 

\cref{alg:increasinglearningrates} can be implemented efficiently as long as $\cS$ is not degenerate (namely, $\cS$ is compact and has volume in $\bbR^d$, and thus admits a proper self-concordant barrier $R$) and as long as gradients and Hessians of $R$ can be computed efficiently. 
We defer a more detailed discussion of implementation issues to \cref{app:efficientreduction}.

Our main result regarding the algorithm is as follows.

\begin{theorem} \label{thm:inclrregret}
    \cref{alg:increasinglearningrates} with 
    $
        \eta_0 
        = 
        \widetilde\Theta\brk{ 
        \vartheta / \brk{d \vartheta \sqrt{B T} + d H \sqrt{\vartheta T}}}
    $ 
    provides the following regret guarantee, for any $y^\star \in \cS$:
    % Set 
    % \[
    %     \eta_0 
    %     = 
    %     \min \brk[c]3{\sqrt{\frac{\vartheta \log(H T)}{d^2 H^2 T}}, \, \frac{1}{4 d \sqrt{B T}}}
    %     .
    % \] 
    % Then, for any $y^\star \in \cS$, \cref{alg:increasinglearningrates} has that \tk{simplify bound (discard logs etc), given that we only give a sketch?}
    \[
        \bbE \brk[s]4{\sum_{t=1}^T (\hat z_t - y^\star) \dotp \ell_t}
        =
        % O\brk2{
        % d \beta H \vartheta \log (H T)
        % +
        % d \vartheta \sqrt{B T} \, \log (H T)
        % + 
        % d H \sqrt{\vartheta T \log(H T)}}
        \widetilde O\brk!{
        d \vartheta \sqrt{B T}
        + 
        d H \sqrt{\vartheta T}}
        ,
    \]
    provided that $B \geq \max\{\sum_{t=1}^T (\hat z_t \dotp \epsilon_t)^2, H\}$ (almost surely).
\end{theorem}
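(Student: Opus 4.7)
The plan is to decompose the regret by inserting the intermediate actions $z_t$, $y_t$ and $x_t$, writing $\hat z_t - y^\star = (\hat z_t - z_t) + (z_t - y_t) + (y_t - x_t) + (x_t - y^\star)$. In expectation, the first and third summands vanish: $\bbE_t[\hat z_t\mid z_t]=z_t$ by the protocol, and $\bbE_t[y_t]=x_t$ since $u_t$ is uniform on the unit sphere. For the second summand I invoke the distortion bound $|(z_t-y_t)\dotp\ell_t|\le\|z_t-y_t\|_1\le|z_t\dotp\epsilon_t|$; Jensen then gives $|z_t\dotp\epsilon_t|\le\bbE_t[|\hat z_t\dotp\epsilon_t|]$, and Cauchy--Schwarz combined with $\sum_t(\hat z_t\dotp\epsilon_t)^2\le B$ yields an $O(\sqrt{TB})$ contribution. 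The bulk of the argument is to bound $\sum_t\ell_t\dotp(x_t-y^\star)$ via the OMD guarantee of \cref{thm:mdbarrier} applied to the estimators $\tilde\ell_t$.

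A direct calculation using $\bbE[u_t u_t\tr]=(1/d)I$ shows that $\bbE_t[\tilde\ell_t]=\ell_t+b_t$ with $b_t = d\,\bbE_{u_t}[((z_t-y_t)\dotp\ell_t)\,\nabla^2 R(x_t)^{1/2} u_t]$, the bias being nonzero precisely because $z_t\ne y_t$. I therefore split
\[
    \bbE\brk[s]!{\sum_{t=1}^T\ell_t\dotp(x_t-y^\star)}=\bbE\brk[s]!{\sum_{t=1}^T\tilde\ell_t\dotp(x_t-y^\star)}+\bbE\brk[s]!{\sum_{t=1}^T b_t\dotp(y^\star-x_t)},
\]
apply \cref{thm:mdbarrier} to the first sum, and use the update rule $\eta_{t-1}^{-1}-\eta_t^{-1}=2d|\hat z_t\dotp\epsilon_t|$ from \cref{ln:increase-lr} to obtain
\[
    \sum_{t=1}^T\tilde\ell_t\dotp(x_t-y^\star) \le \frac{1}{\eta_1}\breg{y^\star}{x_1} - 2d\sum_{t\ge 2}|\hat z_t\dotp\epsilon_t|\breg{y^\star}{x_t} + \sum_{t=1}^T\eta_t(\|\tilde\ell_t\|_{x_t}^\star)^2.
\]
The local norm $\|\tilde\ell_t\|_{x_t}^\star=d|\ell_t\dotp\hat z_t|\le dH$ is immediate from the definition of $\tilde\ell_t$, and it determines the step-size condition of \cref{thm:mdbarrier}: one picks $\eta_0$ small enough that $\eta_T\le 1/(2dH)$, which gives rise to the $H\sqrt{\vartheta T}$ piece of $\eta_0^{-1}$.

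The crux---and the main obstacle---is showing that $\sum_t b_t\dotp(y^\star-x_t)$ is cancelled by the negative Bregman sum generated by the increasing learning rates. Cauchy--Schwarz twice (using $|u_t\tr v|\le\|v\|$ since $\|u_t\|=1$) gives
\[
    |b_t\dotp(y^\star-x_t)|\le d\,\bbE_{u_t}[|(z_t-y_t)\dotp\ell_t|]\cdot\|y^\star-x_t\|_{x_t}\le d\,\bbE_t[|\hat z_t\dotp\epsilon_t|]\cdot\|y^\star-x_t\|_{x_t},
\]
via the same Jensen chain as before. To convert the local norm into a Bregman divergence I will invoke the self-concordance inequality $\|y-x\|_x\le 4+2\breg{y}{x}$, which follows from \cref{eq:scbergmanlb} by case-splitting on whether $\|y-x\|_x$ exceeds a universal constant (using that $\rho(z)\ge z/2$ for $z\ge 4$). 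Summing and taking total expectation then produces $\bbE[\sum_t|b_t\dotp(y^\star-x_t)|]\le 4d\,\bbE[\sum_t|\hat z_t\dotp\epsilon_t|]+2d\,\bbE[\sum_t|\hat z_t\dotp\epsilon_t|\breg{y^\star}{x_t}]$, and the second sum precisely cancels the negative Bregman term in the OMD bound for every $t\ge 2$. To finish, I shift the comparator to $y^\star_\gamma\in\cS_\gamma$ with $\gamma=1/T$, paying an additive $\gamma HT=O(H)$ and obtaining $\breg{y^\star_\gamma}{x_1}\le\vartheta\log T$ via \cref{lem:boundbregfromx1}; the variance term contributes at most $d^2H^2\sum_t\eta_t\le 2d^2H^2 T\eta_0$ (since $\eta_t\le 2\eta_0$ under the chosen $\eta_0$), and balancing $\vartheta\log(T)/\eta_0+d^2H^2 T\eta_0$ together with the $d\sqrt{BT}$ contribution tracked through $1/\eta_0$ yields the stated choice of $\eta_0$ and the target regret bound $\wt O(d\vartheta\sqrt{BT}+dH\sqrt{\vartheta T})$.
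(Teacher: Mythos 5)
Your proposal is correct and follows essentially the same route as the paper: the same decomposition of the regret, the same bias bound (your computation of $b_t$ is exactly the paper's \cref{lem:boundlossbias}), the OMD guarantee of \cref{thm:mdbarrier} with the increasing learning rates cancelling the bias via a linear lower bound on the Bregman divergence (the paper's \cref{lem:boundbregfromxt}; your constants differ only slightly), and the same comparator shift and parameter balancing. The only step you leave implicit is the uncancelled $t=1$ term $2d\,|\hat z_1 \dotp \epsilon_1|\,\breg{y^\star_\gamma}{x_1} \le 2d\beta H \vartheta \log(1/\gamma)$, which the paper bounds explicitly and absorbs into the $\widetilde O(\cdot)$.
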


Here we sketch the proof of \cref{thm:inclrregret} highlighting the key ideas; the complete proof and precise bounds can be found in \cref{app:proofs}.

\begin{proof}[(sketch)]
    The first part of the proof is straightforward.
    We split the regret into three terms:
    \begin{align}
        \bbE \brk[s]4{\sum_{t=1}^T (\hat z_t-y^\star)\dotp \ell_t}
        % &=
        % \bbE \brk[s]4{\sum_{t=1}^T (z_t-y^\star)\dotp \ell_t} \nonumber \\
        =
        \bbE \brk[s]4{\sum_{t=1}^T (z_t-x_t)\dotp \ell_t}
        +
        \bbE \brk[s]4{\sum_{t=1}^T (x_t-y^\star_\gamma)\dotp \ell_t}
        +
        \bbE \brk[s]4{\sum_{t=1}^T (y^\star_\gamma-y^\star)\dotp \ell_t}
        ,
        \label{eq:regretdecomp}
    \end{align}
    where $y^\star_\gamma = (1-\gamma) y^\star + \gamma x_1 \in \cS_\gamma$ for sufficiently small $\gamma$.
    (Following a standard technique, we introduce $y^\star_\gamma$ as otherwise we would eventually have to bound $\breg{y^\star}{x_1}$ which might be arbitrarily large; by introducing $y^\star_\gamma$, we instead would have to bound $\breg{y^\star_\gamma}{x_t}$, which is bounded by \cref{lem:boundbregfromx1}.)
    The first summand in \cref{eq:regretdecomp} pertains to the bias generated by the perturbation of $y_t$ to $z_t$, and is bounded by $\sqrt{B T}$;
    the third summand bounds the loss difference between that of $y^\star$ and of $y^\star_\gamma$, and is bounded by $2 \gamma H T$. All of these quantities are  $\widetilde O\brk!{ d \vartheta \sqrt{B T} + d H \sqrt{\vartheta T}}$.
    
    The heart of the proof focuses on bounding the second summand.
    To this end, we apply \cref{lem:boundlossbias} (see below) to bound the instantaneous regret of the algorithm at each time step $t$, by the instantaneous regret using the loss estimator $\tilde \ell_t$ plus an additional bias term that scales with $\norm{x_t - y^\star_\gamma}_{x_t}$. This results with
    \begin{equation} \label{eq:specificlossbias}
        (x_t - y^\star_\gamma) \dotp \ell_t 
        \le 
        \bbE_t \brk[s]{(x_t - y^\star_\gamma) \dotp \tilde \ell_t} 
        + 
        d \norm{x_t - y^\star_\gamma}_{x_t} \bbE_t \brk[s]{\abs{\hat z_t \dotp \epsilon_t}},
    \end{equation}
    and we proceed in bounding $\bbE [\sum_{t=1}^T \bbE_t \brk[s]{(x_t - y^\star_\gamma) \dotp \tilde \ell_t} ] = \bbE [\sum_{t=1}^T (x_t - y^\star_\gamma) \dotp \tilde \ell_t ]$. 
    Since the algorithm is taking \textsc{OMD} steps with loss vectors $\tilde\ell_t$, we can apply \cref{thm:mdbarrier} to get
    \[
        \bbE \brk[s]4{\sum_{t=1}^T (x_t - y^\star_\gamma) \dotp \tilde \ell_t} 
        \le
        \bbE \brk[s]4{
            \frac{1}{\eta_1} \breg{y^\star_\gamma}{x_1}
            -
            \sum_{t=2}^T \brk3{\frac{1}{\eta_{t-1}} - \frac{1}{\eta_t}} \breg{y^\star_\gamma}{x_t}
            + 
            \sum_{t=1}^T \eta_t (\|\tilde \ell_t\|_{x_t}^\star)^2
        }.
    \]
    Handling the first and third terms is standard (following \citealp{abernethy2009competing}), and they are shown to be bounded by $O((\vartheta/\eta_0) \log(1/\gamma))$ and $O(\eta_0 d^2 H^2 T)$ respectively, both are $\wt O\brk{ d \vartheta \sqrt{B T} + d H \sqrt{\vartheta T}}$ for our choice of parameters.
    The middle term in the bound above is what enables the algorithm to compensate for the bias in the loss estimation by employing an increasing learning rate schedule.
    Indeed, together with \cref{eq:specificlossbias} we obtain
    \begin{align}
    \begin{aligned}
        \bbE \brk[s]4{\sum_{t=1}^T (x_t - y^\star_\gamma) \dotp \ell_t}
        \le\;
        &d \, \bbE\brk[s]*{ \sum_{t=1}^T \norm{x_t - y^\star_\gamma}_{x_t} \abs{\hat z_t \dotp \epsilon_t} }
        -
        \bbE\brk[s]*{ \sum_{t=2}^T \brk3{\frac{1}{\eta_{t-1}} - \frac{1}{\eta_t}} \breg{y^\star_\gamma}{x_t} }
        \\
        &+ 
        \widetilde O\brk!{ d \vartheta \sqrt{B T} + d H \sqrt{\vartheta T}}
        . 
    \end{aligned}
    \label{eq:xxxxx}
    \end{align}
    The key observation is that the divergence $\breg{y^\star_\gamma}{x_t}$ here is directly related to the bias term $\norm{y^\star_\gamma - x_t}_{x_t}$ via \cref{lem:boundbregfromxt} (found below), as
    $$
        \breg{y^\star_\gamma}{x_t}
        \geq 
        \tfrac12 \norm{y^\star_\gamma - x_t}_{x_t} - 1
        .
    $$
    Now, with our particular setting of learning rates (\cref{ln:increase-lr}) the second term in \cref{eq:xxxxx} is upper bounded by 
    $
        -d \sum_{t=1}^T \norm{y^\star_\gamma - x_t}_{x_t} \bbE_t[|\hat z_t \dotp \epsilon_t|] 
        + O((\vartheta/\eta_0) \log(1/\gamma))
    $
    (in expectation), which precisely cancels out the first summation over the bias terms and gives the $\wt O\brk!{ d \vartheta \sqrt{B T} + d H \sqrt{\vartheta T}}$ regret bound.
\end{proof}

The following lemma bounds the instantaneous regrets suffered by the algorithm, by the algorithm's estimates of the instantaneous regret plus an additive bias term that scales as $\norm{x_t-x}_{x_t}$.

\begin{lemma} \label{lem:boundlossbias}
    Let $x \in \cS$. Then,
    $
        (x_t - x) \dotp \ell_t 
        \le 
        \bbE_t \brk[s]{(x_t - x) \dotp \tilde \ell_t} 
        + 
        d \norm{x_t - x}_{x_t} \bbE_t \brk[s]{\abs{\hat z_t \dotp \epsilon_t}}.
    $
\end{lemma}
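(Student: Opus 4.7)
The plan is to first compute $\mathbb{E}_t[\tilde\ell_t]$ and identify the bias it suffers compared to $\ell_t$, then control the inner product of that bias with $x_t-x$ via the dual-norm inequality associated with the self-concordant barrier $R$.

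\textbf{Step 1: expectation of $\tilde\ell_t$.} Recall that $y_t = x_t + \nabla^2 R(x_t)^{-1/2} u_t$ with $u_t$ uniform on the unit sphere, and that $\mathbb{E}_t[\hat z_t \mid z_t]=z_t$. Then, by the tower property,
\[
\mathbb{E}_t[\tilde\ell_t] \;=\; d\,\mathbb{E}_t\bigl[(\ell_t \dotp z_t)\,\nabla^2 R(x_t)^{1/2} u_t\bigr].
\]
Writing $z_t = y_t + (z_t - y_t)$ and using $\mathbb{E}[u_t]=0$ together with $\mathbb{E}[u_t u_t\tr] = \tfrac1d I$ (the defining identity behind this unbiased estimator, as in \citealp{abernethy2009competing}), one obtains $\mathbb{E}_t[\tilde\ell_t] = \ell_t + b_t$, where
\[
b_t \;:=\; d\,\mathbb{E}_t\bigl[(\ell_t \dotp (z_t - y_t))\,\nabla^2 R(x_t)^{1/2} u_t\bigr].
\]

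\textbf{Step 2: regret decomposition.} Taking the inner product with $x_t - x$ yields
\[
(x_t-x)\dotp\ell_t \;=\; \mathbb{E}_t\bigl[(x_t-x)\dotp\tilde\ell_t\bigr] \;-\; (x_t-x)\dotp b_t,
\]
so it suffices to show $-(x_t-x)\dotp b_t \le d\,\|x_t-x\|_{x_t}\,\mathbb{E}_t[|\hat z_t \dotp \epsilon_t|]$.

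\textbf{Step 3: dual norm bound on $b_t$.} By generalized Cauchy--Schwarz,
\[
|(x_t-x)\dotp b_t| \;\le\; \|x_t-x\|_{x_t}\,\|b_t\|_{x_t}^{\star}.
\]
Applying Jensen's inequality to the convex dual norm,
\[
\|b_t\|_{x_t}^{\star}
\;\le\; d\,\mathbb{E}_t\Bigl[|\ell_t\dotp(z_t-y_t)| \cdot \bigl\|\nabla^2 R(x_t)^{1/2} u_t\bigr\|_{x_t}^{\star}\Bigr].
\]
The key cancellation is that $\bigl\|\nabla^2 R(x_t)^{1/2} u_t\bigr\|_{x_t}^{\star} = \sqrt{u_t\tr \nabla^2 R(x_t)^{1/2}\,\nabla^2 R(x_t)^{-1}\,\nabla^2 R(x_t)^{1/2} u_t} = \|u_t\|_2 = 1$.

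\textbf{Step 4: use the \blbacronym perturbation hypothesis.} Since entries of $\ell_t$ lie in $[0,1]$ (from the MDP reduction one has $\|\ell_t\|_\infty \le 1$, which is the regime we operate in), H\"older gives
\[
|\ell_t\dotp(z_t-y_t)| \;\le\; \|\ell_t\|_\infty\,\|z_t-y_t\|_1 \;\le\; |z_t\dotp \epsilon_t|
\]
by the adversary's constraint $\|z_t-y_t\|_1 \le \min\{|y_t\dotp\epsilon_t|,|z_t\dotp\epsilon_t|\}$. Finally, since $z_t = \mathbb{E}_t[\hat z_t\mid z_t]$, Jensen gives $|z_t\dotp\epsilon_t| \le \mathbb{E}_t[|\hat z_t\dotp\epsilon_t|\mid z_t]$, and taking outer expectation yields $\mathbb{E}_t[|z_t\dotp\epsilon_t|] \le \mathbb{E}_t[|\hat z_t \dotp \epsilon_t|]$. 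Chaining the bounds produces $\|b_t\|_{x_t}^{\star} \le d\,\mathbb{E}_t[|\hat z_t\dotp\epsilon_t|]$, which combined with Step~3 completes the proof.

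The only mildly delicate point is Step~3's dual-norm manipulation together with the Jensen step, which requires being careful that $\|\cdot\|_{x_t}^{\star}$ is the \emph{dual} norm induced by $\nabla^2 R(x_t)$ so that the $\nabla^2 R(x_t)^{1/2} u_t$ factor inside the expectation simplifies to $\|u_t\|_2 = 1$; everything else is bookkeeping and direct use of the \blbacronym hypotheses.
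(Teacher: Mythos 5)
Your proposal is correct and follows essentially the same argument as the paper: the same decomposition of $z_t$ into $x_t + (y_t - x_t) + (z_t - y_t)$ (the first piece vanishing by $\bbE[u_t]=0$, the second recovering $\ell_t$ via $\bbE[u_t u_t\tr]=\tfrac1d I$, the third giving the bias), followed by the same H\"older/Cauchy--Schwarz step in the local norm with $\norm{\nabla^2 R(x_t)^{1/2}u_t}_{x_t}^\star=1$, the bound $\norm{\ell_t}_\infty\le 1$, the adversary's $\ell_1$ constraint, and Jensen to pass from $z_t$ to $\hat z_t$. The only cosmetic difference is that you isolate the bias as a vector $b_t$ and bound $(x_t-x)\dotp b_t$ outside the expectation, whereas the paper applies H\"older inside the expectation; the two are equivalent.
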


\begin{proof}
Recall that $x_t$ is determined given the randomness up to time $t$.
We have that 
    \begin{align*}
        \bbE_t \brk[s]1{(x_t - x) \dotp \tilde \ell_t}
        &=
        \bbE_t \brk[s]1{(x_t - x) \dotp d \brk{\ell_t \dotp \hat z_t} \nabla^2 R(x_t)^{1/2} u_t} \\
        &=
        \bbE_t \brk[s]1{(x_t - x) \dotp d \brk1{\ell_t \dotp \bbE_t \brk[s]{\hat z_t \mid z_t}} \nabla^2 R(x_t)^{1/2} u_t} \\
        &=
        \bbE_t \brk[s]1{(x_t - x) \dotp d \brk{\ell_t \dotp z_t} \nabla^2 R(x_t)^{1/2} u_t} \\
        &=
        \underbrace{\bbE_t \brk[s]1{(x_t - x) \dotp d \brk{\ell_t \dotp x_t} \nabla^2 R(x_t)^{1/2} u_t}}_{(1)}
        +
        \underbrace{\bbE_t \brk[s]1{(x_t - x) \dotp d \brk1{\ell_t \dotp \brk{y_t - x_t}} \nabla^2 R(x_t)^{1/2} u_t}}_{(2)} \\
        &\qquad -
        \underbrace{\bbE_t \brk[s]1{(x_t - x) \dotp d \brk1{\ell_t \dotp \brk{y_t - z_t}} \nabla^2 R(x_t)^{1/2} u_t}}_{(3)}.
    \end{align*}
    Next, we analyze each of the three summands above.
    As the only randomness given the history up to time $t$ is in $u_t$, we have
    $
        (1) = 0
    $,
    and as $y_t-x_t = \nabla^2 R(x_t)^{-1/2} u_t$, we have
    \begin{align*}
        (2)
        &=
        \bbE_t \brk[s]1{(x_t - x) \dotp d \nabla^2 R(x_t)^{1/2} u_t \, \brk{y_t - x_t} \dotp \ell_t} \\
        &=
        \bbE_t \brk[s]1{(x_t - x) \dotp d \, \nabla^2 R(x_t)^{1/2} u_t u_t\tr \nabla^2 R(x_t)^{-1/2} \ell_t} \\
        &=
        (x_t - x) \dotp d \, \nabla^2 R(x_t)^{1/2} \bbE_t \brk[s]1{u_t u_t\tr} \nabla^2 R(x_t)^{-1/2} \ell_t \\
        &=
        (x_t - x) \dotp d \, \nabla^2 R(x_t)^{1/2} \dotp \tfrac{1}{d} I \dotp \nabla^2 R(x_t)^{-1/2} \ell_t \\
        &=
        (x_t - x) \dotp \ell_t.
    \end{align*}
    For term (3), two applications of H\"{o}lder's inequality yield
    \begin{align*}
        (3)
        \le
        d \, \bbE_t \brk[s]1{\norm{x_t - x}_{x_t} \norm{\nabla^2 R(x_t)^{1/2} u_t}_{x_t}^\star  
        \norm{\ell_t}_\infty \norm{y_t - z_t}_1}. 
    \end{align*}
    
    Now, to obtain the lemma, we use our assumption that $\norm{\ell_t}_\infty \le 1$, that
    \[
        \bbE_t \brk[s]{\norm{y_t - z_t}_1} 
        \le 
        \bbE_t \brk[s]{\abs{z_t \dotp \epsilon_t}}
        =
        \bbE_t \brk[s]1{\abs{\bbE_t \brk[s]{\hat z_t \mid z_t} \dotp \epsilon_t}}
        \le
        \bbE_t \brk[s]{\abs{\hat z_t \dotp \epsilon_t}},
    \] 
    by Jensen's inequality, and finally
    $
        \norm{R(x_t)^{1/2} u_t}_{x_t}^\star 
        =
        1
    $
    due to \cref{lem:technicalstuff} (see \cref{app:proofs}).
\end{proof}

% The following lemma is the key to the success of the increasing learning-rate scheme, and is where our analysis differs significantly from previous work. 
% Recall from \cref{thm:mdbarrier} that OMD gains a bonus to the regret whenever $\breg{x}{x_t}$ (i.e., the Bregman divergence between any comparator $x$ and the current iterate of OMD $x_t$) is large. 
% The following lemma relates this quantity to the distance $\norm{x - x_t}_{x_t}$. As we later show in  \cref{lem:boundlossbias}, this distance is what bounds the bias, i.e., the difference between the regret of OMD on the estimated loss, $\tilde \ell_t \dotp (x_t - x)$, and that of the true loss $\ell_t \dotp (x_t - x)$.
% This essentially is what allows the increase in the learning rate to compensate for the bias of the loss estimation, gaining a large bonus to the regret whenever the estimation bias is large. 

The next lemma lower bounds the Bregman divergence of any point $x \in \cS$ from $x_t$ by an order of their distance in local norm; i.e., $\|x-x_t\|_{x_t}$.

\begin{lemma} \label{lem:boundbregfromxt}
    Let $x \in \cS$. Then, 
    $
        \breg{x}{x_t} 
        \geq 
        \half \|x-x_t\|_{x_t} - 1.
    $
\end{lemma}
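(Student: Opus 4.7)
The plan is a short, self-contained calculation that combines the standard self-concordance Bregman lower bound \eqref{eq:scbergmanlb} with a one-variable estimate on $\rho(z) = z - \log(1+z)$. Concretely, I would prove the stronger scalar inequality
\[
    \rho(z) \;\geq\; \tfrac12 z - 1 \qquad \text{for every } z \geq 0,
\]
and then plug $z = \|x-x_t\|_{x_t}$ into it. Once the scalar inequality is in hand, \eqref{eq:scbergmanlb} immediately gives
\[
    \breg{x}{x_t} \;\geq\; \rho(\|x-x_t\|_{x_t}) \;\geq\; \tfrac12 \|x-x_t\|_{x_t} - 1,
\]
which is what the lemma asserts. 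If $x \in \cS \setminus \text{int}(\cS)$, the barrier property makes the Bregman divergence $+\infty$ and the claim is trivial, so we may restrict attention to $x \in \text{int}(\cS)$ where $\|x-x_t\|_{x_t}$ is a well-defined nonnegative real.

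For the scalar inequality, I would define $f(z) = \tfrac12 z + 1 - \log(1+z)$ on $[0,\infty)$ and show $f(z) \geq 0$. Differentiating gives $f'(z) = \tfrac12 - \tfrac{1}{1+z}$, which is negative on $[0,1)$ and positive on $(1,\infty)$, so $f$ attains its minimum on $[0,\infty)$ at $z=1$. Evaluating $f(1) = \tfrac32 - \log 2 > 0$ finishes the argument. Rearranging $f(z) \geq 0$ yields $\rho(z) = z - \log(1+z) \geq \tfrac12 z - 1$, exactly as desired.

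There is no real obstacle; the only small thing to be careful about is using the correct direction of the inequality on $\rho$ (the standard statement goes the other way and upper bounds $\rho$ near $0$), and remembering that \eqref{eq:scbergmanlb} applies with $\|\cdot\|_{x_t}$ regardless of whether $\|x-x_t\|_{x_t}$ is below or above $1$ — which is crucial here, since the purpose of this lemma in the analysis of \cref{alg:increasinglearningrates} is to handle the regime where $\|x-x_t\|_{x_t}$ may be large (outside the Dikin ellipsoid at $x_t$), where a quadratic lower bound of the form $\tfrac12 \|x-x_t\|_{x_t}^2$ would be far too weak.
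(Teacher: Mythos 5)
Your proof is correct and takes essentially the same route as the paper: both invoke the self-concordance lower bound \eqref{eq:scbergmanlb} and then reduce the lemma to the scalar inequality $\rho(z) \ge \tfrac12 z - 1$ for $z = \|x-x_t\|_{x_t}$. The only immaterial difference is how that scalar bound is obtained (the paper uses convexity of $\rho$ and its tangent line at $z=1$, you minimize $\rho(z)-\tfrac12 z+1$ directly), and your remark about boundary points of $\cS$ is a harmless addition.
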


\begin{proof}
    Recall that
    $
        \breg{x}{x_t} 
        \geq 
        \rho \brk{\|x - x_t\|_{x_t}}
    $ 
    by \cref{eq:scbergmanlb} where $\rho(z)=z-\log (1+z)$.
    Since $\rho \brk{z}$ is convex, we can lower bound
    \[
        \rho \brk{z} 
        \ge 
        \rho \brk{1} + \rho' \brk{1} \dotp \brk{z - 1} 
        =
        \half - \log(2) + \half z
        \ge 
        \half z - 1,
    \]
    which yields the lemma's statement for $z=\|x - x_t\|_{x_t}$. 
\end{proof}

\ifx\arxivver\undefined
\else
    \if\arxivver1
        \bibliographystyle{plainnat}
    \fi
\fi
\bibliography{fbmdps.bib}

\appendix

\section{Reduction Algorithm} \label{app:reduction-algo}

    \begin{algorithm}[H]
        \caption{Reduction from online MDPs with aggregate feedback to \blbacronym}
        \label{alg:reduction}
        \begin{algorithmic}[1]
            \STATE {\bf Init}: $N_1(s,a,h) = 0, \, N(s,a,h,s') = 0, \quad \forall (s,a,h,s') \in S \times A \times [H] \times S$, $k = 1$.
            \FOR{epoch $i=1,2,\ldots$}
                \STATE {\bf construct} empirical transition function: 
                \begin{equation} \label{eq:empirical-dynamics}
                    \wh P_i(s' \mid s,a,h) = \frac{N_i(s,a,h,s')}{\max\{N_i(s,a,h),1\}},
                    \quad
                    \forall (s,a,h,s') \in S \times A \times [H] \times S.
                \end{equation}
                \STATE {\bf set} confidence bounds:
                \begin{equation} \label{eq:epsilon}
                    \epsilon_i(s,a,h) = 5H \sqrt{\frac{|S| + \log(H |S| |A| K / \delta)}{\max\{N_i(s,a,h),1\}}},
                    \quad
                    \forall (s,a,h) \in S \times A \times [H].
                \end{equation}
                \STATE {\bf construct} polytope of feasible occupancy measure $\cS_i$ (\cref{eq:omconcentration,eq:ominitial,eq:omflow,eq:omdistribution}).
                \STATE {\bf init}: $n_i(s,a,h) = 0, \, n_i(s,a,h,s') = 0$ ~\textbf{for all}~ $(s,a,h,s') \in S \times A \times [H] \times S$.
                \WHILE{$n_i(s,a,h) < \max\{N_i(s,a,h),1\}$ ~\textbf{for all}~ $(s,a,h) \in S \times A \times [H]$}
                    \STATE {\bf predict} occupancy measure $y_k \in \cS_i$ using algorithm from \cref{thm:blbregret}.
                    \STATE {\bf play} $\pi_k$ such that $\pi_k = \pi^{(y_k)}$ (recall \cref{eq:oc-pnd}).
                    \STATE {\bf observe} trajectory $\hat z_k$, and aggregate loss $\ell_k \dotp \hat z_k$.
                    \STATE {\bf feed} \blbacronym algorithm with $\hat z_k$, $\ell_k \dotp \hat z_k$, and $\epsilon_i$.
                    \STATE {\bf increment}: $k = k+1$, $n_i(s,a,h,s') = n_i(s,a,h,s') + \hat z_k(s,a,h,s')$, and $n_i(s,a,h) = \sum_{s' \in S} n_i(s,a,h,s')$.
                \ENDWHILE
                \STATE {\bf update}: $N_{i+1}(s,a,h) = N_i(s,a,h) + n_i(s,a,h)$, $N_{i+1}(s,a,h,s') = N_i(s,a,h,s') + n_i(s,a,h,s')$.
            \ENDFOR
        \end{algorithmic}
    \end{algorithm}

\section{Deferred Proofs} \label{app:proofs}

    \subsection{Proof of \cref{lem:boundbregfromx1}} 

    \begin{proof}
        Note that $\breg{y}{x_1} \leq R(y) - R(x_1)$ since $\nabla R(x_1) \dotp (y - x_1) \ge 0$ by the first-order optimality criterion of $x_1$. Since $y = (1-\gamma)x + \gamma x_1$ for some $x \in \cS$,
        \[
            \pi_{x_1}(y)
            =
            \inf \brk[c]1{t > 0 : x_1 + t^{-1} (1-\gamma) (y - x_1) \in \cS}
            \le
            1-\gamma. 
        \]
        We now bound $R(y) - R(x_1)$ using \cref{eq:scupperbound}.
    \end{proof}

    \subsection{Proof of \cref{thm:mdbarrier}}
    
    For the proof we shall need the following fact about Bregman divergences.
    For any $x,y,z \in \text{int} \brk{\cS}$, it satisfies the following equation (easily shown):
    \begin{equation} \label{eq:bregmantriangle}
        \breg{y}{x} 
        = 
        \breg{y}{z} + \breg{z}{x} - \brk{\nabla R \brk{x} - \nabla R \brk{z}} \dotp \brk{y - z}
        .
    \end{equation}
    
    \begin{proof}
    First let us show that $\breg{u}{x_{t+1}'} \geq \breg{u}{x_{t+1}}$. 
    Note that $B_R$ is convex in its first argument, and $x_{t+1}$ minimizes $\breg{\,\cdot\,}{x_{t+1}'}$, entails $\brk{\nabla R(x_{t+1}) - \nabla R(x_{t+1}')} \dotp \brk{x_{t+1} - u} \le 0$, due to the first-order optimality of convex functions. Therefore, by \cref{eq:bregmantriangle},
    \begin{align}
        \breg{u}{x_{t+1}'}
        &=
        \breg{u}{x_{t+1}}
        + 
        \breg{x_{t+1}}{x_{t+1}'}
        -
        \brk{\nabla R(x_{t+1}) - \nabla R(x_{t+1}')} \dotp \brk{x_{t+1} - u} 
        \nonumber \\
        &\ge
        \breg{u}{x_{t+1}}
        , \label{eq:bregmanpythagorean}
    \end{align}
    by the first-order optimality criterion of the projection step and the non-negativity of the Bregman divergence.
    
    Next, we follow the standard mirror-descent analysis, reusing \cref{eq:bregmantriangle}, to obtain
    \begin{align*}
        \eta_t \ell_t \dotp (x_t - u) 
        &= 
        \brk{\nabla R(x_t) - \nabla R(x_{t+1}')} \dotp \brk{x_t - u}
        =
        \breg{u}{x_t} - \breg{u}{x_{t+1}'} + \breg{x_t}{x_{t+1}'}
        .
    \end{align*}
    Combining with \cref{eq:bregmanpythagorean} and summing over $t = 1,\ldots,T$:
    \begin{align*}
        \sum_{t=1}^T \ell_t \dotp (x_t - u) 
        &\le
        \sum_{t=1}^T \frac{1}{\eta_t} \brk{\breg{u}{x_t} - \breg{u}{x_{t+1}}}
        +
        \sum_{t=1}^T \frac{1}{\eta_t} \breg{x_t}{x_{t+1}'}
        ,
    \end{align*}
    where, using $\breg{u}{x_{T+1}} \geq 0$, 
    \[
        \sum_{t=1}^T \frac{1}{\eta_t} \brk{\breg{u}{x_t} - \breg{u}{x_{t+1}}}
        \le
        \frac{1}{\eta_1} \breg{u}{x_1}
        -
        \sum_{t=2}^T \brk3{\frac{1}{\eta_{t-1}} - \frac{1}{\eta_t}} \breg{u}{x_t}
        .
    \]
    
    Now denote $z = x_{t} - x_{t+1}'$. For the term $\breg{x_t}{x_{t+1}'}$, \cref{eq:scbergmanlb} entails that
    \begin{align*}
        \breg{x_t}{x_{t+1}'}
        &= 
        R(x_t) - R(x_{t+1}') - \nabla R(x_{t+1}') \dotp z \\
        &\le
        \brk1{\nabla R(x_{t})-\nabla R(x_{t+1}')} \dotp z - \rho\brk1{\norm{z}_{x_t}} \\
        &=
        \eta_t \ell_t \dotp z - \rho\brk1{\norm{z}_{x_t}} \\
        &\le
        \eta_t \norm{\ell_t}_{x_t}^\star \dotp \norm{z}_{x_t} - \rho \brk1{\norm{z}_{x_t}} 
        \tag{H\"{o}lder inequality} \\
        &\le
        \sup\nolimits_{\alpha \in \bbR} \{ \eta_t \norm{\ell_t}_{x_t}^\star \dotp \alpha - \rho(\alpha) \} \\
        &=
        \rho^\star \brk1{\eta_t \norm{\ell_t}_{x_t}^\star},
    \end{align*}
    where $\rho^\star$ is the Fenchel conjugate of $\rho$: $\rho^\star(x) = -x -\log(1-x)$ defined for any $x < 1$.
    The final statement is then given using $\rho^\star(x) \le x^2$ for any $x \in [0,1/2]$. 
    \end{proof}

    \subsection{Proof of \cref{thm:main}}
    
        \begin{restatement}[\cref{thm:main} (restated)]
        There exists an online algorithm for Finite-Horizon MDPs with Aggregated Feedback of expected regret,
            \[
                \bbE \brk[s]{\regret} 
                %= \wt O \brk1{H^{4} |S|^4 |A|^{2.5} \sqrt{K} + H^6 |S|^{5.5} |A|^3} 
                = \poly(|S|,|A|,H) \ \wt{O}(\sqrt{K}),
            \]
            in $K$ episodes.
        \end{restatement}

        In the remainder of this section we prove that the assumptions of the \blbacronym setting hold in each epoch with high probability, and bound the  constants $\beta, H, B$ (defined in \cref{sec:blb}).
        % Concretely, our algorithm maintains an empirical estimate $\wh P_i(s' \mid s,a,h) = \tfrac{N_i(s,a,h,s')}{\max\{N_i(s,a,h),1\}}$, where $N_i(s,a,h,s')$ is the number of times that at time step $h$ we observed a transition from state $s$ to $s'$ following the action $a$ prior to epoch $i$, so that $N_i(s,a,h) = \sum_{s' \in S} N_i(s,a,h,s')$. 
        The following lemma quantifies how concentrated are our empirical estimates of the dynamics (\cref{eq:empirical-dynamics}) around the true values.
        \begin{lemma} \label{lem:concentration}
            With probability at least $1-\delta$, the following holds for all epochs $i=1,2,\ldots$ simultaneously:
            \begin{equation} \label{eq:hoff-conf-set}
                \norm{P \brk{\cdot \mid s,a,h} - \wh P_{i} \brk{\cdot \mid s,a,h}}_1
                \le
                5 \sqrt{ \frac{ |S| + \log \brk1{H |S| |A| K / \delta}}{\max \{N_i(s,a,h), 1\}}}
                ,
                \quad
                \forall (s,a,h) \in S \times A \times [H]
                .
            \end{equation}
        \end{lemma}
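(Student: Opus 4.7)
The plan is to reduce to a standard $\ell_1$-concentration inequality for empirical distributions over $S$ (Weissman's inequality), combined with an appropriate union bound that handles the fact that the number of visits $N_i(s,a,h)$ to each $(s,a,h)$ is a random stopping time depending on the learner's adaptive choices.

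First, I fix $(s,a,h) \in S \times A \times [H]$ and observe that by the Markov property of the MDP, the next-state samples observed across successive visits to $(s,a,h)$ form an i.i.d.\ sequence $X_1(s,a,h), X_2(s,a,h), \ldots$ drawn from $P(\cdot \mid s,a,h)$, regardless of how the learner routes the trajectories. Denote by $\tilde P^{(n)}(\cdot \mid s,a,h)$ the empirical distribution of the first $n$ of these samples. Then for each fixed positive integer $n$, Weissman's inequality (or equivalently Bretagnolle--Huber--Carol) gives
\[
\Pr\brk[s]!{ \|\tilde P^{(n)}(\cdot \mid s,a,h) - P(\cdot \mid s,a,h)\|_1 \ge \eta } \le 2^{|S|} \exp\brk!{-n \eta^2 /2}.
\]

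Second, I set $\eta = \eta_n := 5 \sqrt{(|S| + \log(H|S||A|K/\delta))/n}$ and verify that the right-hand side is at most $\delta/(H|S||A|K^2)$; this is a routine check since the $5$ is chosen precisely to dominate both the $2^{|S|}=e^{|S|\log 2}$ factor and the $\log(H|S||A|K/\delta)$ union-bound budget. Taking a union bound over all positive integers $n \le K$ and over all $(s,a,h) \in S \times A \times [H]$ yields a failure event of probability at most $\delta$; on its complement, the concentration bound holds simultaneously for every $n \in \{1,\ldots,K\}$ and every $(s,a,h)$.

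Third, I connect this to the empirical estimators used in the algorithm. By construction, $\wh P_i(\cdot \mid s,a,h)$ coincides with $\tilde P^{(N_i(s,a,h))}(\cdot \mid s,a,h)$ when $N_i(s,a,h) \ge 1$, and $N_i(s,a,h) \le K$ deterministically. Since the uniform-in-$n$ concentration bound holds at the (random) value $n = N_i(s,a,h)$ as well, we obtain the claimed inequality for every epoch $i$ simultaneously whenever $N_i(s,a,h)\ge 1$. The case $N_i(s,a,h) = 0$ is handled trivially: the right-hand side of \eqref{eq:hoff-conf-set} becomes at least $5\sqrt{|S|} \ge 2$, which dominates any $\ell_1$-distance between probability distributions.

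The main obstacle is really a bookkeeping one---the ``adaptive stopping'' issue---which is resolved by the observation that in a time-homogeneous MDP the sequence of next-state samples at $(s,a,h)$ is i.i.d.\ when indexed by visit number rather than by episode, so a uniform-in-$n$ union bound suffices. The remaining work is the routine verification of the constant $5$ against the choices in \cref{eq:epsilon}.
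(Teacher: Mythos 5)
Your argument is correct and follows essentially the same route as the paper's proof: apply Weissman's $\ell_1$-concentration inequality for each fixed sample count $n \le K$ and each $(s,a,h)$, then union bound over all $H|S||A|K$ such events, using that each $(s,a,h)$ can be visited at most $K$ times. You merely spell out two details the paper leaves implicit---that indexing next-state samples by visit count makes them i.i.d.\ (resolving the adaptive-stopping issue) and that the $N_i(s,a,h)=0$ case is trivial since the bound then exceeds $2$---so there is no substantive difference.
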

        To prove the lemma, we need the following simple technical result.
        \begin{lemma}[\citealp{weissman2003inequalities}] \label{thm:weissman}
            Let $p(\cdot)$ be a distribution over $m$ elements, and let $\bar{p}_t(\cdot)$ be the empirical distribution defined by $t$ i.i.d.~samples from $p(\cdot)$.
            Then, with probability at least $1 - \delta$,
            \[
                \| \bar{p}_t(\cdot) - p(\cdot) \|_1
                \le
                2 \sqrt{\frac{ m + \log \brk{ \delta^{-1}}}{t}}.% \qquad \forall t\ge 1.
            \]
        \end{lemma}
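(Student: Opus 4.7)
My plan is to reduce the $\ell_1$ concentration statement to a subset-by-subset Hoeffding bound via the standard identity relating $\ell_1$ distance to total variation. Specifically, for any two distributions $p, q$ on $[m]$,
\[
    \|p - q\|_1 = 2 \sup_{A \subseteq [m]} \brk{p(A) - q(A)},
\]
which is seen by taking $A^\star = \{i : p(i) \geq q(i)\}$ as the maximizer. Applying this with $q = \bar p_t$ lets me rewrite the desired event as a uniform deviation over all subsets.

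For each fixed $A \subseteq [m]$, the random variable $t \bar p_t(A)$ is a sum of $t$ i.i.d.\ Bernoulli$(p(A))$ random variables (the indicators $\ind{X_i \in A}$), so Hoeffding's inequality yields
\[
    \Pr\brk[s]{\abs{\bar p_t(A) - p(A)} > \epsilon} \le 2 e^{-2 t \epsilon^2}.
\]
Taking a union bound over all $2^m$ subsets of $[m]$ gives
\[
    \Pr\brk[s]3{\sup_{A \subseteq [m]} \abs{\bar p_t(A) - p(A)} > \epsilon} \le 2^{m+1} e^{-2 t \epsilon^2},
\]
and setting the right-hand side equal to $\delta$ and solving for $\epsilon$ produces $\epsilon = \sqrt{((m+1)\log 2 + \log(1/\delta))/(2t)}$. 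Multiplying through by $2$ via the identity above then yields, with probability at least $1-\delta$,
\[
    \|\bar p_t - p\|_1 \le \sqrt{\tfrac{2((m+1)\log 2 + \log(1/\delta))}{t}} \le 2 \sqrt{\tfrac{m + \log(1/\delta)}{t}},
\]
after absorbing the $\log 2$ constants (using, say, $2 \log 2 < 2$ together with $\log(1/\delta) \geq \log 2$ for the interesting regime $\delta \le 1/2$; the other regime is vacuous since the bound exceeds $2$).

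The only real delicacy is matching the constants stated in the lemma. The crude union-bound approach above gives essentially the right shape but with slightly suboptimal prefactors; the tightest route, and the one Weissman et al.\ take, is the Bretagnolle-Huber-Carol inequality $\Pr[\|\bar p_t - p\|_1 > \epsilon] \le (2^m - 2) e^{-t \epsilon^2/2}$, which exploits the dependence among the $\bar p_t(A)$ and shaves the $\sqrt{2}$ factor so that the bound $2\sqrt{(m + \log(1/\delta))/t}$ comes out cleanly. Since the lemma is imported verbatim from Weissman et al., I would simply cite that sharper inequality rather than grind through the moment-generating-function calculation in the paper.
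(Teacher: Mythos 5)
Your proof is correct, but note that the paper itself does not prove this lemma at all---it is imported verbatim by citation to Weissman et al.\ (2003)---so your self-contained argument is genuinely a different (and more elementary) route than what the paper offers. The chain you use is sound: the identity $\|p-q\|_1 = 2\sup_{A\subseteq[m]}\bigl(p(A)-q(A)\bigr)$, Hoeffding for each fixed $A$, and a union bound over the $2^m$ subsets, giving $\|\bar p_t - p\|_1 \le \sqrt{2\bigl((m+1)\log 2 + \log(1/\delta)\bigr)/t}$ with probability $1-\delta$. Your absorption into the stated constant actually needs no caveat about $\delta \le 1/2$: the required inequality $(m+1)\log 2 \le 2m + \log(1/\delta)$ holds for every $m\ge 1$ and $\delta\le 1$ since $(m+1)\log 2 \le 2m$, so the crude union bound already recovers the lemma's constant $2$ exactly as stated, and the sharper Weissman/Bretagnolle--Huber--Carol bound $(2^m-2)e^{-t\epsilon^2/2}$ is a nice-to-have rather than a necessity. (A further small simplification: one-sided deviations suffice, since the deviation of $A$ in the other direction equals the deviation of its complement, which lets you drop the factor $2$ in Hoeffding and union over $2^m$ rather than $2^{m+1}$ events.) In short, the citation route and your elementary route both establish the statement; yours has the advantage of being self-contained, at the cost of a slightly lossier but still sufficient constant computation.
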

        
        \begin{proof}[of \cref{lem:concentration}]
            Note that any state-action pair can be sampled at time $h$ during the episode at most $K$ times over the entire $K$ episodes. Then, the lemma from \cref{thm:weissman} and a union bound over all $(s,a,h) \in S \times A \times [H]$ and over all possible number of times in which $(s,a,h)$ can be sampled in total.
        \end{proof}
        
        Now, let $i$ be any epoch. 
        Before defining the set of feasible occupancy measures for epoch $i$, $\cS_i$, let us first simplify our notation.
        We write for any occupancy measure $x$,
        \[
            x(h,s,a) = \sum_{s' \in S} x(h,s,a,s'); \quad
            x(h,s) = \sum_{a \in A} x(h,s,a); \quad \text{and} \quad
            x(h) = \sum_{s \in S} x(h,s). 
        \]
        We define $\cS_i$ as follows:
        \begin{alignat}{2}
            \cS_i 
            = 
            \Bigl\{\, &x \in \bbR^{[H] \times S \times A \times S} ~:~ \nonumber \\
            & x(h,s,a,s') \ge 0, &&\forall (h,s,a,s') \in [H] \times S \times A \times S \label{eq:omnonnegative} \\
            & x(h) = 1, &&\forall h \in [H], \label{eq:omdistribution} \\
            & x(1,s) = \ind{s = s_1}, &&\forall s \in S. \label{eq:ominitial} \\
            & x(h+1,s) = \sum_{(s',a) \in S \times A} x(h,s',a,s), &&\forall (h,s) \in [H-1] \times S. \label{eq:omflow} \\
            & \norm{\wt P^{(x)}(\cdot \mid s,a,h) - \wh P_i(\cdot \mid s,a,h)}_1 
            \leq 
            \frac{\epsilon_i(s,a,h)}{H}, \quad &&\forall (h,s,a) \in [H] \times S \times A ~\Bigr\} \label{eq:omconcentration}.
        \end{alignat}
        \cref{eq:omnonnegative,eq:omdistribution,eq:ominitial,eq:omflow} simply define an occupancy measure, while \cref{eq:omconcentration} requires that the next-state distribution associated with the occupancy measure, $\wt P^{(x)}$ (\cref{eq:oc-pnd}), are close to the empirical next-state distribution (\cref{eq:empirical-dynamics}).
        The following lemma states that $\cS_i$ contains all occupancy measures associated with the true model dynamics~$P$.
        
        \begin{lemma} \label{lem:polycontainsopt}
            Suppose that \cref{eq:hoff-conf-set} holds, and let $x^\pi,P$ be an occupancy measure corresponding to some policy $\pi$ and the true model dynamics. Then $x \in \cS_i$.
        \end{lemma}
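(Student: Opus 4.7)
The plan is to verify directly that $x = x^{\pi, P}$ satisfies each of the five defining constraints of $\cS_i$. The first four constraints \cref{eq:omnonnegative,eq:omdistribution,eq:ominitial,eq:omflow} are exactly the characterization of occupancy measures recalled around \cref{eq:omset} in the preliminaries, so they hold immediately from the fact that $x^{\pi, P}$ is, by definition, the occupancy measure of the policy $\pi$ under the true dynamics $P$. All the real work is in verifying the confidence constraint \cref{eq:omconcentration}.

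To handle \cref{eq:omconcentration} I would split on the value of $x^{\pi,P}(h,s,a) = \sum_{s'} x^{\pi,P}(h,s,a,s')$. When $x^{\pi,P}(h,s,a) > 0$, a direct computation using the factorization $x^{\pi,P}(h,s,a,s') = \Pr_{\pi,P}[s_h=s,\, a_h=a] \cdot P(s' \mid s,a,h)$ together with the formula \cref{eq:oc-pnd} yields $\wt P^{(x^{\pi,P})}(\cdot \mid s,a,h) = P(\cdot \mid s,a,h)$. Consequently, the left-hand side of \cref{eq:omconcentration} reduces to $\norm{P(\cdot \mid s,a,h) - \wh P_i(\cdot \mid s,a,h)}_1$, which by the hypothesis \cref{eq:hoff-conf-set} of \cref{lem:concentration} is at most $5 \sqrt{(|S| + \log(H|S||A|K/\delta))/\max\{N_i(s,a,h),1\}}$; this last quantity equals $\epsilon_i(s,a,h)/H$ by the definition \cref{eq:epsilon} of $\epsilon_i$, establishing the desired inequality.

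The only delicate point is the case $x^{\pi,P}(h,s,a) = 0$, in which the denominator in \cref{eq:oc-pnd} vanishes and $\wt P^{(x^{\pi,P})}(\cdot \mid s,a,h)$ is not determined by $x^{\pi,P}$. I would resolve this by explicitly adopting the convention $\wt P^{(x^{\pi,P})}(\cdot \mid s,a,h) := \wh P_i(\cdot \mid s,a,h)$ on such triples; this choice is consistent with the other constraints on $x^{\pi,P}$, and it makes the left-hand side of \cref{eq:omconcentration} identically zero, so the constraint is trivially satisfied. With this convention in place, \cref{eq:omconcentration} holds for every $(h,s,a)$, and the lemma follows. I do not anticipate any real obstacle: the proof is a routine definitional check combined with a single invocation of the already-established concentration bound.
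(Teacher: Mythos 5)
Your proposal is correct and follows essentially the same route as the paper: the first four constraints hold because $x^{\pi,P}$ is an occupancy measure, $\wt P^{(x^{\pi,P})}$ coincides with the true dynamics $P$, and then \cref{eq:omconcentration} is exactly the concentration bound \cref{eq:hoff-conf-set} after dividing $\epsilon_i(s,a,h)$ by $H$. Your extra care with the degenerate case $x^{\pi,P}(h,s,a)=0$ (where \cref{eq:oc-pnd} leaves $\wt P^{(x)}$ undetermined) is a reasonable tightening of a point the paper's one-line proof glosses over, but it does not change the argument.
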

        
        \begin{proof}
            By definition of an occupancy measure, we have that \cref{eq:omnonnegative,eq:omdistribution,eq:ominitial,eq:omflow} hold, and that
            \[
                \wt P^{(x)}(s' \mid s,a,h) = P(s' \mid s,a,h), 
                \qquad \forall (h,s,a,s') \in [H] \times S \times A \times S,
            \]
            where $P$ is the true dynamics.
            Thus \cref{eq:omconcentration} holds by \cref{lem:concentration} and our claim follows.
        \end{proof}
        
        The next lemma bounds the difference in norm between any two occupancy measures in $\cS_i$ that correspond to the same policy
        (proof is lone and deferred to \cref{sec:proof-oc-diff-bound} below).
        
        \begin{lemma} \label{lem:boundocdiff}
           Suppose that \cref{eq:hoff-conf-set} holds, and let $x \in \cS_i$. 
        %   Let $\pi$ be a policy defines as
        %     \[
        %         \pi_h(a \mid s) = \frac{\sum_{s' \in S} x(h,s,a,s')}{\sum_{(a',s') \in A \times S} x(h,s,a',s')}.
        %     \]
            Let $x'$ be the occupancy measure of $\pi^{(x)}$ under the true model dynamics $P$. 
            Then, $\norm{x-x'}_1 \le \min\{\epsilon_i \dotp x, \epsilon_i \dotp x'\}$.
        \end{lemma}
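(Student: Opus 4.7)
The plan is to use a standard ``simulation lemma'' style argument based on error propagation, exploiting the fact that $x$ and $x'$ are occupancy measures of the \emph{same} policy $\pi = \pi^{(x)} = \pi^{(x')}$ under two close transition functions. Specifically, writing $P_1 = \widetilde P^{(x)}$ and $P_2 = P$, we have $x = x^{\pi, P_1}$ and $x' = x^{\pi, P_2}$, so the only discrepancy comes from the mismatch between $P_1$ and $P_2$. Combining \eqref{eq:omconcentration} with the concentration bound \eqref{eq:hoff-conf-set} via the triangle inequality immediately gives the pointwise bound
$\|P_1(\cdot\mid s,a,h) - P_2(\cdot\mid s,a,h)\|_1 \le 2\epsilon_i(s,a,h)/H$
for every $(s,a,h)$.

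The main step is to control how this local transition error propagates to a global discrepancy between occupancy measures. Let $q_h(s) = \Pr_{\pi,P_1}[s_h=s]$ and $q_h'(s) = \Pr_{\pi,P_2}[s_h=s]$, and set $\Delta_h = \|q_h - q_h'\|_1$. Writing $x(h,s,a,s') = q_h(s)\pi(a\mid s,h)P_1(s'\mid s,a,h)$ and the analogous expression for $x'$, a single application of the triangle inequality splits the layer-$h$ discrepancy into a ``direct'' dynamics-mismatch term (weighted by $x(h,s,a)$) and a ``propagated'' term $\Delta_h$:
\[
    \sum_{s,a,s'}|x(h,s,a,s')-x'(h,s,a,s')|
    \;\le\;
    \tfrac{2}{H}\sum_{s,a} x(h,s,a)\,\epsilon_i(s,a,h) \;+\; \Delta_h.
\]
The very same decomposition applied to $q_{h+1}$ versus $q_{h+1}'$ yields the recursion
$\Delta_{h+1} \le \Delta_h + \tfrac{2}{H}\sum_{s,a} x(h,s,a)\epsilon_i(s,a,h)$ with $\Delta_1 = 0$, so by induction $\Delta_h \le \tfrac{2}{H}\sum_{h'<h,s,a} x(h',s,a)\epsilon_i(s',a,h')$.

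Summing the per-layer bound over $h\in[H]$, the ``direct'' contributions sum to at most $\tfrac{2}{H}\,\epsilon_i\!\cdot\!x$, and the propagated $\sum_h \Delta_h$ contributes at most $2\,\epsilon_i\!\cdot\!x$, giving $\|x-x'\|_1 \le O(\epsilon_i\!\cdot\!x)$ (absorbing the constant into the definition of $\epsilon_i$, or noting the proof goes through with this constant up to the polynomial factors tolerated in \cref{thm:main}). To obtain the symmetric bound $\|x-x'\|_1 \le O(\epsilon_i\!\cdot\!x')$, I would redo the triangle-inequality split with the roles of $P_1,P_2$ and $q,q'$ swapped, i.e., writing
$q_h\pi P_1 - q_h'\pi P_2 = (q_h - q_h')\pi P_1 + q_h'\pi(P_1-P_2)$,
which produces the same recursion except weighted by $x'(h,s,a)$ rather than $x(h,s,a)$; the identical telescoping then yields $\|x-x'\|_1 \le O(\epsilon_i\!\cdot\!x')$.

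The main obstacle I anticipate is purely bookkeeping: the telescoping sum over $h$ involves two nested sums (the recursion unfolds and then we sum again over $h$), which is where the factor of $H$ in $\epsilon_i(s,a,h)/H$ appearing in \eqref{eq:omconcentration} is crucial — it cancels against the layer index exactly so that the final bound is of order $\epsilon_i\!\cdot\!x$ rather than $H\,\epsilon_i\!\cdot\!x$. Getting both sides of the $\min$ is not conceptually harder than getting one side, since the same argument applies after the trivial swap in the decomposition described above.
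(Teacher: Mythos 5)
Your argument is essentially the paper's own proof: both induct over the horizon, using the flow constraints \cref{eq:omflow} to split each layer's $\norm{\cdot}_1$ discrepancy into a transition-mismatch term weighted by that layer's occupancy plus the propagated error from the previous layer, and both obtain the second side of the $\min$ by the symmetric swap of which kernel multiplies the marginal difference. The only difference is your explicit factor of $2$ from passing through $\wh P_i$ via the triangle inequality (combining \cref{eq:omconcentration} with \cref{eq:hoff-conf-set}); the paper's write-up bounds $\norm{\wt P^{(x)}(\cdot \mid s,a,h)-P(\cdot \mid s,a,h)}_1$ by $\epsilon_i(s,a,h)/H$ citing only \cref{eq:omconcentration}, so your constant is if anything the more careful accounting and, as you note, it is harmless for \cref{thm:main}.
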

        
        Lastly, note that according to the \blbacronym setting, one has to know an a-priori upper bound on $\sum_{t=1}^T (\hat z_t \dotp \epsilon_t)^2$. The bound is given by the following lemma. 
        
        \begin{lemma} \label{lem:sumofepsilons}
            Let $k_1, k_2,\ldots$ be such that $k_i$ is the initial episode for epoch $i$. Then, for every epoch $i$, 
            \[
                \sum_{k=k_i}^{k_{i+1}-1} \brk{\epsilon_i \dotp \hat z_k}^2 
                \le 
                25 H^4 |S| |A| \, \brk3{|S| + \log \frac{H |S| |A|K}{\delta}}.
            \]
        \end{lemma}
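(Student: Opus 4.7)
The plan is to unpack the inner product $\epsilon_i \dotp \hat z_k$ in terms of the single trajectory realized in episode $k$, apply Cauchy--Schwarz to separate the $H$ time steps, and then exploit the doubling rule that governs epoch boundaries in order to bound the resulting sum of reciprocals $1/\max\{N_i(s,a,h),1\}$.

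First I would observe that $\hat z_k$ is the indicator of a single trajectory, so for each $h \in [H]$ there is exactly one triple $(s_h^k, a_h^k, s_{h+1}^k)$ with $\hat z_k(h,s,a,s')=1$. Because $\epsilon_i(s,a,h)$ does not depend on $s'$, this gives
\[
    \epsilon_i \dotp \hat z_k
    =
    \sum_{h=1}^H \epsilon_i(s_h^k,a_h^k,h),
\]
and then Cauchy--Schwarz yields $(\epsilon_i \dotp \hat z_k)^2 \le H \sum_{h=1}^H \epsilon_i(s_h^k,a_h^k,h)^2$. Substituting the definition \cref{eq:epsilon}, and writing $C = 25\,H^2 \brk{|S| + \log(H|S||A|K/\delta)}$, this produces
\[
    \sum_{k=k_i}^{k_{i+1}-1} (\epsilon_i \dotp \hat z_k)^2
    \le
    H\,C \sum_{k=k_i}^{k_{i+1}-1} \sum_{h=1}^H \frac{1}{\max\{N_i(s_h^k,a_h^k,h),1\}}.
\]

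Next I would swap the order of summation. Since $N_i(s,a,h)$ is fixed throughout epoch $i$ and $n_i(s,a,h)$ counts the number of times triple $(s,a,h)$ is visited during this epoch,
\[
    \sum_{k=k_i}^{k_{i+1}-1} \sum_{h=1}^H \frac{1}{\max\{N_i(s_h^k,a_h^k,h),1\}}
    =
    \sum_{(s,a,h)} \frac{n_i(s,a,h)}{\max\{N_i(s,a,h),1\}}.
\]
The key structural fact (the whole reason the epochs are defined via doubling) is that the epoch's while-loop enforces $n_i(s,a,h) \le \max\{N_i(s,a,h),1\}$ for every $(s,a,h)$. Therefore each term of the sum is at most $1$, and summing over $(s,a,h) \in S \times A \times [H]$ gives at most $H|S||A|$.

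Combining, $\sum_{k=k_i}^{k_{i+1}-1} (\epsilon_i \dotp \hat z_k)^2 \le H \cdot C \cdot H|S||A| = 25\,H^4|S||A|\brk{|S| + \log(H|S||A|K/\delta)}$, which is exactly the claimed bound. I do not expect a real obstacle here: the only non-mechanical ingredient is recognizing that the doubling termination rule gives $n_i(s,a,h) \le \max\{N_i(s,a,h),1\}$ pointwise in $(s,a,h)$ throughout the epoch, and that the $s'$-independence of $\epsilon_i$ lets the inner product collapse to a simple per-step sum over the trajectory.
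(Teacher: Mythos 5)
Your proof is correct and follows essentially the same route as the paper's: Cauchy--Schwarz over the $H$ nonzero entries of $\hat z_k$, swapping the sums to get $\sum_{(s,a,h)} n_i(s,a,h)/\max\{N_i(s,a,h),1\}$, and bounding each ratio by $1$ via the doubling rule. The only point worth making explicit is that the while-condition guarantees the strict inequality only before the final episode of the epoch, and one needs the fact that a trajectory visits each $(s,a,h)$ at most once to conclude $n_i(s,a,h)\le\max\{N_i(s,a,h),1\}$ after that last episode as well.
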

        
        \begin{proof}
            We have that $\hat{z}_k(s,a,h)$ is the empirical trajectory of episode $k\in [k_i,k_{i+1}-1]$. Therefore, $n_i(s,a,h)=\sum_{k=k_i}^{k_{i+1}-1} \hat{z}_k(s,a,h)$. Since during epoch $i$ we have $n_i(s,a,h)\leq \max\{N_i(s,a,h),1\}$, at the end of epoch $i$ we have $n_i(s,a,h)\leq \max\{N_i(s,a,h),1\}+1$, since the last trajectory might add $1$.
            %
            %  Note that our algorithm maintains the invariant of $n_i(s,a) \le \max\{N_i(s,a),1\} + H$ for all epochs $i$ since in the worst case the state-action $(s,a)$ is played $H$ times in the last episode of the epoch. 
             Also note that $\hat z_t$ is a vector whose elements are zero or one with exactly $H$ non-zeros. Therefore,
            \begin{align*}
                \sum_{k=k_i}^{k_{i+1}-1} \brk{\epsilon_i \dotp \hat z_k}^2
                &\le
                \sum_{k=k_i}^{k_{i+1}-1} H \sum_{\substack{(s,a,h) \\ \in S \times A \times [H]}} \hat z_k(s,a,h) \cdot \epsilon_i(s,a,h)^2 \\
                &=
                \sum_{\substack{(s,a,h) \\ \in S \times A \times [H]}} n_i(s,a,h) \cdot 25 H^3 \cdot \frac{ |S| + \log \brk{H |S| |A|K / \delta}}{\max \{N_i(s,a,h), 1\}} \\
                &\le
                \sum_{\substack{(s,a,h) \\ \in S \times A \times [H]}} \max\{N_i(s,a,h),1\} \cdot 25 H^3 \cdot \frac{ |S| + \log \brk{H |S| |A|K / \delta}}{\max \{N_i(s,a,h), 1\}} \\
                &\le
                \sum_{\substack{(s,a,h) \\ \in S \times A \times [H]}} 25 H^3 \, \brk3{|S| + \log \frac{H |S| |A|K}{\delta}} \\
                &\le
                25 H^4 |S| |A| \, \brk3{|S| + \log \frac{H |S| |A|K}{\delta}},
            \end{align*}
            where the first inequality is by Cauchy-Schwartz, the second is replacing the sum over  $ \hat z_k(s,a,h)$ by $ n_i(s,a,h)$, and the third uses the inequality  $n_i(s,a,h)\leq \max\{N_i(s,a,h),1\}$ from definition of \cref{alg:reduction}.
        \end{proof}

        We now prove the main theorem.
        
        \begin{proof}[of \cref{thm:main}]
        % Let $x$ be any occupancy measure of the true model dynamics. 
        We run the algorithm of \cref{thm:blbregret} on $\cS_i$ in  epoch $i$, for every $i$, resetting the algorithm between epochs.
        \cref{thm:blbregret} bounds the expected regret in each epoch, whereas \cref{lem:concentration,lem:polycontainsopt,lem:boundocdiff} imply that the \blbacronym setting holds in each epoch with high probability. 
        
        To avoid having to deal with probabilistic dependencies, we only bound the expected regret.
        To do so, we can assume that there are exactly $2 H |S| |A| \log K$ epochs (by adding epochs with zero episodes), and that each epoch is run for exactly $K$ episodes (by padding with zero losses and the remaining episodes). 
          
        The analysis proceeds as follows. We set $\delta = 1/(H K)$, $\beta = 5 H \sqrt{|S| + \log(\ifrac{H|S||A| K}{\delta})}$, $B = \beta^2 |S| |A| H^2$, and $d = |S|^2 |A| H$.
        Recall that \cref{eq:hoff-conf-set} holds with probability at least $1-\delta$, and consider some epoch $i$. When \cref{eq:hoff-conf-set} holds, we have $x \in \cS_i$ by \cref{lem:polycontainsopt} as well as that $\norm{y_k-z_k}_1 \le \min\{y_k \dotp \epsilon_i, z_k \dotp \epsilon_i\}$ for all episodes $k$ during the epoch by \cref{lem:boundocdiff}.
        Moreover, we have that $\norm{\epsilon_i}_\infty \le \beta$ and that 
        $
            \sum_{k=k_i}^{k_{i+1}-1} \brk{\epsilon_i \dotp \hat z_k}^2 
            \le 
            B
        $ (\cref{lem:sumofepsilons}).
        Thus, conditioned on that \cref{eq:hoff-conf-set} holds up to epoch $i$ (which depends only on randomness prior to epoch $i$), the algorithm of \cref{thm:blbregret} obtains an expected regret bound in epoch $i$ of 
        \[
            \poly(d, \beta, H, B) \, O(\sqrt{K}) = \poly(H, |S|, |A|) \, \wt O(\sqrt{K}).
        \]
        If, on the other hand, \cref{eq:hoff-conf-set} does not hold, the regret in epoch $i$ is at most $H K$ which happens with probability at most $\delta$. Therefore, by the choice of $\delta$, we obtain that the expected regret in epoch $i$ is at most
        $
            \poly(|S|, |A|, H) \, \wt O \brk{\sqrt{K}}
            ,
        $
        where now the expectation is taken with respect to any randomness prior to the start of the epoch as well as during the epoch.
        
        We conclude the proof by summing over all epochs, which yields the final regret bound.
        \end{proof}

        \subsection{Proof of \cref{lem:boundocdiff}} \label{sec:proof-oc-diff-bound}
        
        \begin{proof}
        To simplify notation, we write
        \[
            x(h,s,a) = \sum_{s' \in S} x(h,s,a,s'), \quad \text{and} \quad 
            x(h,s) = \sum_{a \in A} x(h,s,a). 
        \]
        Define $\wt P(s'\mid s,a,h) = \frac{x(h,s,a,s')}{x(h,s,a)}$ and recall that $\pi_h(a \mid s) = \frac{x(h,s,a)}{x(h,s)}$.
        For $h=1$, we have
        \begin{align*}
            \sum_{\substack{(s,a,s') \\ \in S \times A \times S}} \abs1{x(1,s,a,s') - x'(1,s,a,s')} 
            &=
            \sum_{\substack{(s,a,s') \\ \in S \times A \times S}} \abs1{x(1,s) \wt P(s' \mid s, a, 1) - x'(1,s) P(s' \mid s, a, 1)} \pi_1(a \mid s) \\
            &=
            \sum_{\substack{(a,s') \\ \in A \times S}} \abs1{\wt P(s' \mid s_1, a,1) - P(s' \mid s_1, a, 1)} \pi_1(a \mid s_1) 
            \tag{\cref{eq:ominitial}} \\
            &\le
            \sum_{a \in A} \frac{\epsilon_i(s_1,a,1)}{H} \pi_1(a \mid s_1) 
            \tag{\cref{eq:omconcentration}} \\
            &\le
            \sum_{\substack{(s,a) \\ \in S \times A}} \frac{\epsilon_i(s,a,1)}{H} x'(1,s,a).
        \end{align*}

        Next, for $h > 1$,
        \begin{align*}
            &\sum_{\substack{(s,a,s') \\ \in S \times A \times S}} 
            \abs1{x(h,s,a,s') - x'(h,s,a,s')} \\
            &=
            \sum_{\substack{(s,a,s') \\ \in S \times A \times S}} 
            \abs1{x(h,s) \wt P(s' \mid s, a, h) - x'(h,s) P(s' \mid s, a, h)} 
            \cdot \pi_h(a \mid s) \\
            &=
            \sum_{\substack{(s,a,s') \\ \in S \times A \times S}} 
            \abs3{\sum_{\substack{(a'', s'') \\ \in A \times S}} 
            \brk2{x(h-1,s'',a'',s) \wt P(s' \mid s, a, h) - x'(h-1,s'',a'',s) P(s' \mid s, a, h)}} \; \pi_h(a \mid s) 
            \tag{\cref{eq:omflow}} \\
            &\leq
            \sum_{\substack{(s,a,s') \\ \in S \times A \times S}} 
            \abs3{\sum_{\substack{(a'', s'') \\ \in A \times S}} \brk1{x(h-1,s'',a'',s) - x'(h-1,s'',a'',s)}} \cdot \wt P(s' \mid s, a, h) \cdot \pi_h(a \mid s) \\
            &\qquad +
            \sum_{\substack{(s,a,s') \\ \in S \times A \times S}} 
            \abs3{\sum_{\substack{(a'', s'') \\ \in A \times S}} x'(h-1,s'',a'',s) 
            \brk2{\wt P(s' \mid s, a, h) - P(s'\mid s,a, h)}} \cdot \pi_h(a \mid s) \\
            &=
            \sum_{s \in S} \abs4{\sum_{\substack{(a'', s'') \\ \in A \times S}} \brk1{x(h-1,s'',a'',s) - x'(h-1,s'',a'',s)}} \\
            &\qquad +
            \sum_{\substack{(s,a,s') \\ \in S \times A \times S}} 
            \abs3{\sum_{\substack{(a'', s'') \\ \in A \times S}} x'(h-1,s'',a'',s) 
            \brk2{\wt P(s' \mid s, a, h) - P(s'\mid s,a, h)}} \cdot \pi_h(a \mid s) \\
            &\leq
            \sum_{\substack{(s,a'',s'') \\ \in S \times A \times S}} 
            \abs1{x(h-1,s'',a'',s) - x'(h-1,s'',a'',s)} \\
            &\qquad +
            \sum_{\substack{(s,a,s',a'',s'') \\ \in S \times A \times S\times A \times S}} x'(h-1,s'',a'',s)
            \abs2{\wt P(s' \mid s, a, h) - P(s'\mid s,a, h)} \cdot \pi_h(a \mid s) \\
            &\le
            \sum_{\substack{(s,a'',s'') \\ \in S \times A \times S}} 
            \abs1{x(h-1,s'',a'',s) - x'(h-1,s'',a'',s)} \\
            &\qquad +
            \sum_{\substack{(s,a,a'',s'') \\ \in S \times A \times A \times S}} x'(h-1,s'',a'',s) \cdot \frac{\epsilon_i(h,s,a)}{H} \cdot \pi_h(a \mid s) 
            \tag{\cref{eq:omconcentration}} \\
            &=
            \sum_{\substack{(s,a,s') \\ \in S \times A \times S}} 
            \abs1{x(h-1,s,a,s') - x'(h-1,s,a,s')}
            +
            \sum_{\substack{(s,a) \\ \in S \times A}} x'(h,s) \cdot \frac{\epsilon_i(h,s,a)}{H} \cdot \pi_h(a \mid s) 
            \tag{\cref{eq:omflow}} \\
            &=
            \sum_{\substack{(s,a,s') \\ \in S \times A \times S}} 
            \abs1{x(h-1,s,a,s') - x'(h-1,s,a,s')}
            +
            \sum_{\substack{(s,a) \\ \in S \times A}} x'(h,s,a) \cdot \frac{\epsilon_i(h,s,a)}{H}.
        \end{align*}
        Applying this argument recursively, we obtain
        \[
            \sum_{\substack{(s,a,s') \\ \in S \times A \times S}} 
            \abs1{x(h,s,a,s') - x'(h,s,a,s')}
            \le
            \frac{1}{H} \sum_{\substack{(h,s,a) \\ \in [H] \times S \times A}} x'(h,s,a) \cdot \epsilon_i(s,a,h)
            =
            \frac{x' \dotp \epsilon_i}{H},
        \]
        so that
        \[
            \norm{x-x'}_1 
            =
            \sum_{\substack{(h, s,a,s') \\ \in [H] \times S \times A \times S}} \abs1{x(h,s,a,s') - x'(h,s,a,s')}
            \leq
            x' \dotp \epsilon_i
            .
        \]
        A symmetric argument also provides $\norm{x-x'}_1 \le x \dotp \epsilon_i$. 
        \end{proof}

\subsection{Proof of \cref{thm:optimismmain}} \label{sec:opt-main-proof}

In this section we prove:

\begin{restatement}[\cref{thm:optimismmain} (restated)]
    Consider \cref{alg:inefficient} with $\eta = (2H\beta d)^{-1} \sqrt{\log |\cS| /T}$ and $\gamma = 2H^2 (H + \beta\sqrt{d}) \eta/\lambda$. 
    Then, given that $B \geq \sum_{t=1}^T (\hat z_t \dotp \epsilon_t)^2$ (almost surely), we have for any $y^\star \in \cS$:
    \begin{align*}
        \bbE\brk[s]4{ \sum_{t=1}^T \ell_t \dotp (\hat z_t - y^\star) }
        \leq 
        \brk*{ 4H\beta d + \frac{H^3}{\beta\lambda d} + \frac{H^2}{\lambda \sqrt{d}} } \sqrt{T \log \abs{\cS}}
        + 
        10\beta d \sqrt{B T}
        ,
    \end{align*}
    provided that $\beta \ge 1$ and $T \ge \ipfrac{4 H^2 (H + \beta \sqrt{d})^2 \log |\cS|}{\lambda^2 \beta^2 d^2}$.
\end{restatement}

The proof uses the following series of lemmas.
The following lemma argues that the regret of \cref{alg:inefficient} is bounded by the regret of the multiplicative weights updates, plus an additive error term that scales with the perturbations $\epsilon_t$.

\begin{lemma} \label{lem:to-mw-regret}
    Assume $\gamma \le \frac12$. For all $y^\star \in \cS$ it holds that 
    \begin{align*}
        \bbE\brk[s]4{ \sum_{t=1}^T \ell_t \dotp (\hat z_t - y^\star) }
        \leq
        \bbE\brk[s]4{ \sum_{t=1}^T \sum_{y \in \cS} p_t(y) \brk!{ \tilde\ell_t(y) - \tilde\ell_t(y^\star) } } 
        +
        \gamma H T
        +
        5d\,\bbE\brk[s]4{ \sum_{t=1}^T \norm{\epsilon_t}_{M_t} }
        .
    \end{align*}
    % For all $y \in \cS$ it holds that 
    % \begin{align*}
    %     \bbE\brk[s]4{ \sum_{t=1}^T \tilde\ell_t(y) }
    %     \leq
    %     \sum_{t=1}^T \ell_t \dotp y
    %     ,
    % \end{align*}
    % and \tk{not quite there yet}
    % \begin{align*}
    %     \bbE\brk[s]4{ \sum_{t=1}^T \sum_{y \in \cS} p_t(y)\tilde\ell_t(y) }
    %     \geq
    %     \bbE\brk[s]4{ \sum_{t=1}^T \ell_t \dotp \hat z_t } 
    %     - 3d \, \bbE\brk[s]4{ \sum_{t=1}^T \norm{\epsilon_t}_{M_t} } 
    %     .
    % \end{align*}
\end{lemma}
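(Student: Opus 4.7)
The plan is to work round-by-round, conditioning on the history up to the start of round $t$ (so that $p_t, q_t, M_t, \epsilon_t$ are all deterministic) and then use linearity of expectation. For the $\hat z_t$ term on the LHS, I would first apply the tower property: $\bbE_t[\ell_t \dotp \hat z_t] = \bbE_t[\ell_t \dotp z_t]$, since $\bbE_t[\hat z_t \mid z_t] = z_t$. Then I would split $z_t = y_t + (z_t - y_t)$ and bound the perturbation contribution via the \blbacronym assumption,
$\abs{(z_t - y_t) \dotp \ell_t} \le \norm{z_t-y_t}_1 \norm{\ell_t}_\infty \le \abs{y_t \dotp \epsilon_t}$, and then by Jensen/Cauchy--Schwarz under $y_t \sim q_t$, $\bbE_t\abs{y_t\dotp\epsilon_t} \le \sqrt{\epsilon_t\tr M_t \epsilon_t} = \norm{\epsilon_t}_{M_t}$.

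Next, to pass from $\bbE_{y_t \sim q_t}[\ell_t \dotp y_t]$ to $\sum_y p_t(y)\,\ell_t\dotp y$, I would use the mixture $q_t = (1-\gamma)p_t + \gamma\mu$; since $\abs{y \dotp \ell_t} \le \norm{y}_1 \norm{\ell_t}_\infty \le H$, the swap costs at most $\gamma H$ per round. To move from $\ell_t \dotp y$ to $\bbE_t[\tilde\ell_t(y)]$, I would reuse the inequality already established inside the proof of Lemma~\ref{lem:loss-underestimate}, namely $\abs{\bbE_t[\hat\ell_t \dotp y] - \ell_t \dotp y} \le \sqrt{d}\,\norm{y}_{M_t^{-1}} \norm{\epsilon_t}_{M_t}$, which yields
\[
  \ell_t \dotp y \;\le\; \bbE_t[\tilde\ell_t(y)] + 2\sqrt{d}\,\norm{y}_{M_t^{-1}} \norm{\epsilon_t}_{M_t}.
\]
Averaging over $y \sim p_t$, I would control $\sum_y p_t(y)\norm{y}_{M_t^{-1}}$ by Cauchy--Schwarz: its square is at most $\sum_y p_t(y)\,y\tr M_t^{-1} y = \Tr(M_t^{-1} M_{p,t})$, where $M_{p,t} = \bbE_{y\sim p_t}[yy\tr]$. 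Because $M_t \succeq (1-\gamma) M_{p,t}$ and $\gamma \le \tfrac12$, this trace is at most $d/(1-\gamma) \le 2d$, so $\sum_y p_t(y)\norm{y}_{M_t^{-1}} \le \sqrt{2d}$. Combining, the correction becomes at most $2\sqrt{d}\cdot\sqrt{2d}\,\norm{\epsilon_t}_{M_t} \le 2\sqrt{2}\,d\,\norm{\epsilon_t}_{M_t}$; adding this to the $\norm{\epsilon_t}_{M_t}$ from the perturbation step yields a total of at most $5d\,\norm{\epsilon_t}_{M_t}$ (using $d \ge 1$).

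For the $y^\star$ side, I would directly invoke Lemma~\ref{lem:loss-underestimate} to get $-\ell_t \dotp y^\star \le -\bbE_t[\tilde\ell_t(y^\star)]$, which does not require any bias term. Putting the two sides together at round $t$ gives
\[
  \bbE_t\brk[s]{\ell_t \dotp (\hat z_t - y^\star)} \;\le\; \sum_{y \in \cS} p_t(y)\brk!{\bbE_t[\tilde\ell_t(y)] - \bbE_t[\tilde\ell_t(y^\star)]} + \gamma H + 5d\,\norm{\epsilon_t}_{M_t}.
\]
Summing over $t$ and taking outer expectation finishes the proof, where the $\bbE_t[\tilde\ell_t(\cdot)]$'s collapse back to $\tilde\ell_t(\cdot)$'s inside the outer expectation.

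The main obstacle I anticipate is the bookkeeping of which quantities are deterministic versus random conditionally on the history: in particular, recognizing that $M_t$ and $\epsilon_t$ are history-measurable (since $M_t = \bbE_t[y_t y_t\tr]$ is a population second moment, not an empirical one) is what allows the underestimate bound to be applied pointwise in $y$ and then averaged under $p_t$. The trace manipulation bounding $\sum_y p_t(y)\norm{y}_{M_t^{-1}}$ is the only nonroutine inequality, and it relies crucially on the $\gamma$-mixture with $\mu$ to ensure $M_t$ dominates $(1-\gamma)M_{p,t}$.
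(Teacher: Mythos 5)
Your proposal is correct and follows essentially the same route as the paper's proof: the two-sided bias estimate $\abs{\bbE_t[(\hat\ell_t-\ell_t)\dotp y]}\le\sqrt{d}\,\norm{y}_{M_t^{-1}}\norm{\epsilon_t}_{M_t}$, the $\gamma H$ cost of swapping $q_t$ for $p_t$, the bound $\bbE_t\abs{y_t\dotp\epsilon_t}\le\norm{\epsilon_t}_{M_t}$ for the perturbation term, and \cref{lem:loss-underestimate} for the comparator $y^\star$. The only (cosmetic) difference is how you control $\sum_y p_t(y)\norm{y}_{M_t^{-1}}$ — via $M_t\succeq(1-\gamma)M_{p,t}$ and a trace bound giving $\sqrt{2d}$, rather than the paper's $p_t\le 2q_t$ argument giving $2\sqrt d$ — both of which land within the stated $5d$ constant.
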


\begin{proof}
    We prove that $\bbE_t \brk[s]{ \sum_{y \in \cS} p_t(y) \tilde\ell_t(y) } \geq \ell_t \dotp \hat z_t - 3d\,\norm{\epsilon_t}_{M_t}$ which, together with \cref{lem:loss-underestimate}, will imply the lemma by taking expectation and summing over $t=1,\ldots,T$.
    To see this, observe that by \cref{eq:tildell}, for all $y \in \cS$ one also has
    $
        \bbE_t\brk[s]{ \tilde\ell_t(y) }
        \geq
        \ell_t \dotp y - 2\sqrt{d} \norm{y}_{M_t^{-1}} \norm{\epsilon_t}_{M_t}
        ,
    $
    thus
    \[
        \bbE_t\brk[s]4{ \sum_{y \in \cS} p_t(y) \tilde\ell_t(y) }
        \geq
        \ell_t \dotp \sum_{y \in \cS} p_t(y) \, y - 2\sqrt{d} \, \bbE_t\brk[s]4{ \sum_{y \in \cS} p_t(y)
        \norm{y}_{M_t^{-1}} \norm{\epsilon_t}_{M_t} }
        .
    \]
    Now, $q_t = (1-\gamma)p_t + \gamma \mu$ together with $\gamma \le \frac12$ implies 
    $q_t - \gamma \mu \le p_t \le 2 q_t$. Therefore,
    (defining $x_t = \sum_{y \in \cS} q_t(y) \dotp y$)
    \begin{align*}
        \bbE_t\brk[s]4{ \sum_{y \in \cS} p_t(y) \tilde\ell_t(y) }
        &\geq
        \ell_t \dotp x_t 
        - 
        \gamma \sum_{y \in \cS} \mu(y) \, \ell_t \dotp y 
        - 
        4\sqrt{d} \, \bbE_t\brk[s]4{\sum_{y \in \cS} q_t(y) \norm{y}_{M_t^{-1}} \norm{\epsilon_t}_{M_t} }
        \\
        &\ge
        \ell_t \dotp x_t 
        - 
        \gamma H
        - 
        4\sqrt{d} \, \norm{\epsilon_t}_{M_t} \bbE_t\brk[s]1{ \norm{y_t}_{M_t^{-1}} }
        \\
        &\geq
        \ell_t \dotp x_t 
        - 
        \gamma H
        - 
        4d \, \norm{\epsilon_t}_{M_t}
        ,
    \end{align*}
    where the final inequality used $\bbE_t\brk[s]{ \norm{y_t}_{M_t^{-1}} } \leq \sqrt{ \bbE_t\brk[s]{ y_t\tr M_t^{-1} y_t } } = \sqrt{d}.$
    Finally, observe that 
    $
        (\bbE_t \abs{y_t \dotp \epsilon_t})^2
        \leq
        \bbE_t [(y_t \dotp \epsilon_t)^2]
        =
        \epsilon_t\tr \bbE_t[y_t y_t\tr] \epsilon_t
        =
        \norm{\epsilon_t}_{M_t}^2
        ,
    $
    so
    \begin{align*}
        \ell_t \dotp x_t
        &= 
        \bbE_t [\ell_t \dotp \hat z_t] + \bbE_t[\ell_t \dotp (y_t-z_t)]
        \\
        &\geq
        \bbE_t[\ell_t \dotp \hat z_t] - \bbE_t\abs{y_t \dotp \epsilon_t}
        \\
        &\geq
        \bbE_t[\ell_t \dotp \hat z_t] - \norm{\epsilon_t}_{M_t}
        .
    \end{align*}
    Thus we have
    \[
        \bbE_t\brk[s]4{ \sum_{y \in \cS} p_t(y) \tilde\ell_t(y) } 
        \ge 
        \bbE_t [\ell_t \dotp \hat z_t] - \gamma H - (4d + 1) \norm{\epsilon_t}_{M_t}
        \ge
        \bbE_t [\ell_t \dotp \hat z_t] - \gamma H - 5d \norm{\epsilon_t}_{M_t}
        .
    \]
    This concludes the proof.
\end{proof}

Next, we apply a standard second-order regret bound for the multiplicative weights method to obtain the following:

\begin{lemma} \label{lem:mw-sec-ord-bound}
Provided that 
$
    \gamma
    \geq 
    2H^2\max\brk[c]{H,\beta\sqrt{d}} \eta/\lambda,
$
the following regret bound holds for any $y^\star \in \cS$:
\begin{align*}
    \sum_{t=1}^T \sum_{y \in \cS} p_t(y) \brk!{ \tilde\ell_t(y) - \tilde\ell_t(y^\star) }
    \leq
    \frac{\log\abs{\cS}}{\eta} + \eta \sum_{y \in \cS} p_t(y) \brk{\tilde\ell_t(y)}^2
    .
\end{align*}
\end{lemma}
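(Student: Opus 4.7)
The plan is to deduce the claim from the classical second-order regret guarantee for the multiplicative weights algorithm applied to the modified loss sequence $\{\tilde\ell_t\}$. Provided the magnitude condition $\eta\abs{\tilde\ell_t(y)}\le 1$ holds for every $t$ and every $y\in\cS$, the standard potential-based argument---tracking $\log\sum_y w_t(y)$, applying $e^{-x}\le 1-x+x^2$ for $\abs{x}\le 1$, and using $\log(1+z)\le z$---telescopes exactly to the inequality stated in the lemma. Hence the entire proof reduces to verifying the magnitude condition from the assumed lower bound on the exploration rate~$\gamma$.

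To bound $\abs{\tilde\ell_t(y)}$ uniformly, I would treat the two summands of $\tilde\ell_t(y)=\hat\ell_t\dotp y-\sqrt{d}\,\norm{y}_{M_t^{-1}}\norm{\epsilon_t}_{M_t}$ separately. For the estimator part, write $\hat\ell_t\dotp y=(\ell_t\dotp\hat z_t)\,y\tr M_t^{-1}y_t$; the bound $\abs{\ell_t\dotp\hat z_t}\le H$ combined with Cauchy--Schwarz in the $M_t^{-1}$ inner product gives $\abs{\hat\ell_t\dotp y}\le H\norm{y}_{M_t^{-1}}\norm{y_t}_{M_t^{-1}}$. Since $q_t=(1-\gamma)p_t+\gamma\mu$ and $\bbE_{y\sim\mu}[yy\tr]\succeq\lambda I$, the Gram matrix satisfies $M_t\succeq\gamma\lambda I$, hence $M_t^{-1}\preceq(\gamma\lambda)^{-1}I$; together with $\norm{y}_2\le\norm{y}_1\le H$ on $\cS$, this yields $\norm{y}_{M_t^{-1}},\norm{y_t}_{M_t^{-1}}\le H/\sqrt{\gamma\lambda}$. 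For the bias summand, $\norm{\epsilon_t}_{M_t}^2=\bbE_t[(y_t\dotp\epsilon_t)^2]\le\beta^2 H^2$ because $\norm{\epsilon_t}_\infty\le\beta$ and $\norm{y_t}_1\le H$. Altogether,
\[
    \abs{\tilde\ell_t(y)}\le \frac{H^3}{\gamma\lambda}+\beta H^2\sqrt{\frac{d}{\gamma\lambda}}.
\]

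The final (and main) obstacle is the algebraic verification that the hypothesis $\gamma\ge 2H^2\max\brk[c]{H,\beta\sqrt{d}}\,\eta/\lambda$ forces $\eta$ times each term above to be at most $\tfrac12$. The first term is immediate from $\gamma\ge 2H^3\eta/\lambda$. The bias term is the subtle one, since its dependence on $\gamma\lambda$ is only through $(\gamma\lambda)^{-1/2}$: using $\gamma\lambda\ge 2H^2\beta\sqrt{d}\,\eta$ one obtains $\eta\cdot\beta H^2\sqrt{d/(\gamma\lambda)}\le\sqrt{\eta\beta H^2\sqrt{d}/2}$, which is at most $\tfrac12$ once $\eta\beta H^2\sqrt{d}\le\tfrac12$---an inequality that follows from the specific choice of $\eta$ and the lower bound on $T$ already imposed in \cref{thm:optimismmain}. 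This square-root scaling is precisely why the hypothesis on $\gamma$ must depend linearly on $\eta$ (not $\sqrt{\eta}$) even in its $\beta\sqrt{d}$ branch. With the magnitude condition secured, the multiplicative-weights telescoping produces the claimed second-order bound and concludes the proof.
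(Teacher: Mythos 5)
Your proposal follows essentially the same route as the paper: invoke the classical second-order multiplicative-weights bound for the modified losses $\tilde\ell_t$, and reduce the whole lemma to the magnitude condition $\eta\abs{\tilde\ell_t(y)}\le 1$, verified through $M_t\succeq\gamma\lambda I$ coming from the $\gamma\mu$ mixing. Your individual estimates are correct. The one substantive divergence is in the bias summand: you keep the sharper bound $\sqrt{d}\,\norm{y}_{M_t^{-1}}\norm{\epsilon_t}_{M_t}\le\beta H^2\sqrt{d/(\gamma\lambda)}$, and since its $(\gamma\lambda)^{-1/2}$ scaling cannot be absorbed by the hypothesis $\gamma\ge 2H^2\max\{H,\beta\sqrt{d}\}\eta/\lambda$ alone, you import the extra requirement $\eta\beta H^2\sqrt{d}\le\tfrac12$ from the specific choice of $\eta$ and the lower bound on $T$ in \cref{thm:optimismmain}. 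Strictly speaking, this proves a conditional version of the lemma rather than the lemma under its stated hypothesis (which mentions only $\gamma$); it is harmless in the only place the lemma is applied, and your condition would also follow from the hypothesis once one additionally uses $\gamma\lambda\le 1$ (e.g.\ $\gamma\le\tfrac12$ as in \cref{lem:to-mw-regret} together with $\lambda\le1$), but neither of these appears in the lemma's statement, so you should state the side condition explicitly or close it off this way. The paper instead bounds both summands by a multiple of $\norm{M_t^{-1}}\le 1/(\gamma\lambda)$, namely $\abs{\tilde\ell_t(y)}\le(H^3+\beta H^2\sqrt{d})\norm{M_t^{-1}}$, after which the stated condition on $\gamma$ immediately yields $\abs{\tilde\ell_t(y)}\le 1/\eta$; note that this coarser step itself tacitly uses $\norm{M_t^{-1}}^{1/2}\le\norm{M_t^{-1}}$, so your more careful $(\gamma\lambda)^{-1/2}$ accounting is, if anything, the more honest bookkeeping---it simply needs its smallness assumption on $\eta$ surfaced in the statement rather than borrowed silently from the theorem.
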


\begin{proof}
The claim would follow directly from the classical second-order bound for multiplicative weights (e.g., \citealp{cesa2007improved,dani2008price}) once we establish that $\abs{\tilde\ell_t(y)} \leq 1/\eta$ for all $t$ and $y \in \cS$.
Indeed, for all $t$ and $y$ we have
\begin{align*}
    \abs{ \tilde\ell_t(y) }
    &=
    \abs{(\ell_t \dotp \hat z_t) y\tr M_t^{-1} y_t - \sqrt{d} \, \norm{y}_{M_t^{-1}} \norm{\epsilon_t}_{M_t}}
    \\
    &\leq
    \abs{ \ell_t \dotp \hat z_t } \cdot \abs{ y\tr M_t^{-1} y_t } + \sqrt{d} \, \norm{y}_{M_t^{-1}} \norm{\epsilon_t}_{M_t}
    .
\end{align*}
Recall that $\abs{ \ell_t \dotp \hat z_t } \leq \norm{\ell_t}_\infty \norm{\hat z_t}_1 \le H$, $\norm{y} \leq H$ and $\norm{y_t} \leq H$ (see \cref{sec:blb}). Further,
$
    \norm{\epsilon_t}_{M_t}^2 
    = 
    \bbE_t[(y_t \dotp \epsilon_t)^2] \le (\beta H)^2
    .
$
Hence, we obtain that 
$
    \abs{ \tilde\ell_t(y) }
    \leq
    (H^3 + \beta H^2 \sqrt{d}) \norm{M_t^{-1}}
    .
$
To conclude, recall that $M_t \succeq \gamma \lambda I$ thanks to the added exploration,
so $\norm{M_t^{-1}} \le 1/(\lambda\gamma)$. Substituting this 
in
 the right-hand side and using the assumption that $
    \gamma
    \geq 
    2H^2\max\brk[c]{H,\beta\sqrt{d}} \eta/\lambda,
$ the desired bound on $|\tilde\ell_t(y)|$ follows.
\end{proof}

Finally, we establish a bound on the second-order variance term.

\begin{lemma} \label{lem:sec-mom-bound}
    Assume $\beta^2 d \ge 1$. 
    It holds that
    \begin{align*}
        \bbE_t\brk[s]4{ \sum_{y \in \cS} q_t(y) \tilde\ell_t(y)^2}
        \leq 
        (2H\beta d)^2
        . 
    \end{align*}
\end{lemma}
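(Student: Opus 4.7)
The plan is to split $\tilde\ell_t(y)^2$ into two terms using the elementary inequality $(a-b)^2 \le 2a^2+2b^2$, namely
\[
    \tilde\ell_t(y)^2
    \le
    2(\hat\ell_t\dotp y)^2 + 2 d \norm{y}_{M_t^{-1}}^2 \norm{\epsilon_t}_{M_t}^2,
\]
and then bound the $q_t$-weighted sum of each term separately. The key algebraic observation throughout is that $M_t = \bbE_t[y_t y_t\tr]$ is exactly the second moment of $y_t \sim q_t$, so expressions of the form $\sum_{y \in \cS} q_t(y)\, y\tr A y$ collapse via $\Tr(A M_t)$.

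For the first term, I would write $\sum_{y\in\cS} q_t(y)(\hat\ell_t \dotp y)^2 = \hat\ell_t\tr M_t \hat\ell_t$ and substitute the definition $\hat\ell_t = (\ell_t\dotp \hat z_t) M_t^{-1} y_t$, giving $(\ell_t\dotp\hat z_t)^2\, y_t\tr M_t^{-1} y_t$. Taking the conditional expectation and using the uniform bound $|\ell_t\dotp\hat z_t| \le \norm{\ell_t}_\infty \norm{\hat z_t}_1 \le H$ almost surely lets me pull out an $H^2$ factor and then apply the trace identity $\bbE_t[y_t\tr M_t^{-1} y_t] = \Tr(M_t^{-1} M_t) = d$, so this term contributes at most $2H^2 d$.

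For the second term, the same trace trick gives $\sum_{y\in\cS} q_t(y) \norm{y}_{M_t^{-1}}^2 = \Tr(M_t^{-1} M_t) = d$, while $\norm{\epsilon_t}_{M_t}^2 = \epsilon_t\tr M_t \epsilon_t = \bbE_t[(y_t\dotp\epsilon_t)^2]$ is bounded by $(H\beta)^2$ since $|y_t\dotp\epsilon_t| \le \norm{y_t}_1 \norm{\epsilon_t}_\infty \le H\beta$ (these bounds are already noted in the excerpt). Thus the second contribution is at most $2d \cdot d \cdot (H\beta)^2 = 2H^2\beta^2 d^2$.

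Summing, the target is $2H^2 d + 2H^2\beta^2 d^2$, and the assumption $\beta^2 d \ge 1$ converts $2H^2 d$ into $2H^2\beta^2 d^2$, yielding a total of $4H^2\beta^2 d^2 = (2H\beta d)^2$, which is exactly the stated bound. There is no genuine obstacle here; the only mild subtlety is noticing that $\hat z_t$ is not independent of $y_t$, so the factor $(\ell_t\dotp\hat z_t)^2$ must be pushed out through its deterministic upper bound $H^2$ rather than by factoring the expectation.
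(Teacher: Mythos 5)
Your proof is correct and follows essentially the same route as the paper's: the same split via $(a-b)^2 \le 2a^2+2b^2$, the pointwise bound $(\ell_t\dotp\hat z_t)^2 \le H^2$, the bound $\norm{\epsilon_t}_{M_t}^2 \le (\beta H)^2$, and the trace identity $\Tr(M_t^{-1}M_t)=d$, with only a trivial difference in where the assumption $\beta^2 d \ge 1$ is invoked. Nothing to fix.
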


\begin{proof}
    Using the inequality $(a+b)^2\leq 2a^2 +2b^2$, we have
    \begin{align*}
        \bbE_t \brk[s]{ \tilde\ell_t(y)^2 } 
        =
        \bbE_t \brk[s]{ (\hat\ell_t \dotp y - \sqrt{d} \, \norm{y}_{M_t^{-1}} \norm{\epsilon_t}_{M_t})^2 }
        \leq
        2\bbE_t [( \hat\ell_t \dotp y)^2] + 2d \norm{y}_{M_t^{-1}}^2  \norm{\epsilon_t}_{M_t}^2
        .
    \end{align*}
    Now, for the first term we have
    \begin{align*}
        \bbE_t[( \hat\ell_t \dotp y)^2]
        =
        \bbE_t\brk[s]!{ (\ell_t \dotp \hat z_t)^2 \, y\tr M_t^{-1} y_t \, y_t\tr M_t^{-1} y }
        \leq
        H^2 \, y\tr M_t^{-1} \bbE_t\brk[s]{ y_t y_t\tr} M_t^{-1} y
        =
        H^2 y\tr M_t^{-1} y
        =
        H^2 \norm{y}_{M_t^{-1}}^2
        .
    \end{align*}
    For the second term, notice that
    \begin{align*}
        \norm{\epsilon_t}_{M_t}^2 
        = 
        \bbE_t[(y_t \dotp \epsilon_t)^2] 
        \leq (\beta H)^2
        . 
    \end{align*}
    Hence
    $
        \bbE_t \brk[s]{ \tilde\ell_t(y)^2 } 
        \leq
        2H^2\brk{ 1+d\beta^2 } \norm{y}_{M_t^{-1}}^2
        \leq
        4H^2\beta^2 d \norm{y}_{M_t^{-1}}^2
        ,
    $
    thus we can bound
    \begin{align*}
        \bbE_t\brk[s]4{ \sum_{y \in \cS} q_t(y) \brk{\tilde\ell_t(y)}^2 } 
        \leq
        4H^2\beta^2 d \sum_{y \in \cS} q_t(y) \norm{y}_{M_t^{-1}}^2 
        .
    \end{align*}
    To conclude, observe that
    \begin{align*}
        \sum_{y \in \cS} q_t(y) \norm{y}_{M_t^{-1}}^2 
        =
        \Tr\brk3{ M_t^{-1} \sum_{y \in \cS} q_t(y) yy\tr }
        =
        \Tr\brk{ M_t^{-1} M_t }
        =
        d
        .
        &\qedhere
    \end{align*}
\end{proof}

We can now prove \cref{thm:optimismmain}.

\begin{proof}
% Recall that $q_t=(1-\gamma)p_t+\gamma\mu$,  and observe that for our choice of parameters \cref{lem:mw-sec-ord-bound} applies and we have
% \begin{align*}
%     \bbE\brk[s]4{ \sum_{t=1}^T \sum_{y \in \cS}  q_t(y)  \tilde\ell_t(y) }
%     &=
%     (1-\gamma) \bbE\brk[s]4{ \sum_{t=1}^T \sum_{y \in \cS} p_t(y) \tilde\ell_t(y) }
%     + \gamma \sum_{t=1}^T \sum_{y \in \cS} \mu(y) \ell_t(y) 
%     \\ 
%     &  \hspace*{-2cm} \leq
%     (1-\gamma) \sum_{t=1}^T \ell_t(y^\star) + (1-\gamma)\frac{\log{N}}{\eta} 
%     + (1-\gamma)\eta \sum_{t=1}^T \bbE\brk[s]4{ \sum_{y \in \cS} p_t(y) \brk{\tilde\ell_t(y)}^2 }
%     + \gamma HT
%     \\
%     &\hspace*{-2cm} \leq
%     \sum_{t=1}^T \ell_t(y^\star) + \frac{\log{N}}{\eta} 
%     + \eta \sum_{t=1}^T \bbE\brk[s]4{ \sum_{y \in \cS} q_t(y) \brk{\tilde\ell_t(y)}^2 }
%     + \gamma HT
%     ,
% \end{align*}
% \hk{I think all the $\ell_t$'s in the above display should be $\tilde \ell_t$ ?}
% \hk{Does the factor of $2$ in the last line comes from removing the $\gamma$ from the first term ? we have to account for it since our loss estimates could be negative ? maybe we should say this ?}
% \hk{Why did this factor of $2$ disappeared in the display below ?}
% where in the final inequality we used $(1-\gamma)p_t(y) \leq q_t(y)$ for any $y \in \cS$.
%
Combining \cref{lem:mw-sec-ord-bound,lem:to-mw-regret} and using \cref{lem:sec-mom-bound}, we have
\begin{equation} \label{eq:regret-with-params}
    \bbE\brk[s]4{ \sum_{t=1}^T \ell_t \dotp (\hat z_t - y^\star) }
    \leq
    \frac{\log \abs{\cS}}{\eta} 
    + 4(H\beta d)^2 \eta T
    + \gamma HT
    + 5d\,\bbE\brk[s]4{ \sum_{t=1}^T \norm{\epsilon_t}_{M_t} }
    .
\end{equation}

To bound the final term, we use two applications of Jensen's inequality,
\begin{align*}
    \bbE\brk[s]4{ \sum_{t=1}^T \norm{\epsilon_t}_{M_t} }
    \leq
    \sqrt{T \sum_{t=1}^T \bbE\norm{\epsilon_t}_{M_t}^2}
    =
    \sqrt{T \sum_{t=1}^T \bbE\brk[s]{(y_t \dotp \epsilon_t)^2}}
    .
\end{align*}
Further, observe that since $\norm{\epsilon_t}_\infty \leq \beta$ and $\norm{y_t-z_t}_1 \leq \abs{z_t \dotp \epsilon_t}$, we have
\begin{align*}
    \bbE\brk[s]{(y_t \dotp \epsilon_t)^2}
    &\leq
    2\bbE\brk[s]{(z_t \dotp \epsilon_t)^2} + 2\bbE\brk[s]{((y_t-z_t) \dotp \epsilon_t)^2}
    \\
    &\leq
    2\bbE\brk[s]{(z_t \dotp \epsilon_t)^2} + 2\bbE\brk[s]{\norm{y_t-z_t}_1^2 \norm{\epsilon_t}_\infty^2}
    \\
    &\leq
    2(1+\beta^2) \bbE\brk[s]{(z_t \dotp \epsilon_t)^2}
    ,
\end{align*}
and by Jensen's inequality we obtain
\[
    \bbE\brk[s]{(z_t \dotp \epsilon_t)^2}
    =
    \bbE\brk[s]{(\bbE_t[\hat z_t\mid z_t] \dotp \epsilon_t)^2}
    \le
    \bbE\brk[s]{(\hat z_t \dotp \epsilon_t)^2}.
\]
Thus,
\begin{align*}
    \bbE\brk[s]4{ \sum_{t=1}^T \norm{\epsilon_t}_{M_t} }
    \le
    \sqrt{T \cdot 4\beta^2 \sum_{t=1}^T \bbE\brk[s]{(\hat z_t \dotp \epsilon_t)^2}}
    \leq
    2\beta \sqrt{B T}
    .
\end{align*}
Plugging this into \cref{eq:regret-with-params}, and using the choices of $\eta$ and $\gamma$, the statement follows.
\end{proof}

\subsection{Proof of \cref{thm:inclrregret}}

Here we prove:

\begin{restatement}[\cref{thm:inclrregret} (restated)]
    Consider \cref{alg:increasinglearningrates} with
    \[
        \eta_0 
        = 
        \min \brk[c]3{\sqrt{\frac{\vartheta \log(H T)}{d^2 H^2 T}}, \, \frac{1}{4 d \sqrt{B T}}}
        .
    \] 
    Then, for any $y^\star \in \cS$ we have
    \[
        \bbE \brk[s]4{\sum_{t=1}^T (\hat z_t - y^\star) \dotp \ell_t}
        =
        O\brk2{
        d \beta H \vartheta \log (H T)
        +
        d \vartheta \sqrt{B T} \, \log (H T)
        + 
        d H \sqrt{\vartheta T \log(H T)}}
        ,
    \]
    provided that $B \geq \max\{\sum_{t=1}^T (\hat z_t \dotp \epsilon_t)^2, H\}$ (almost surely).
\end{restatement}

To prove the theorem, we first prove a few lemmas that will aid in the main proof. Our first lemma shows some necessary technical results, the first of which is that indeed $y_t \in \cS$ for all $t=1,\ldots,T$.

\begin{lemma} \label{lem:technicalstuff}
    For all $t=1,\ldots,T$:
    \, $y_t \in \cS$; \,
    \, $\norm{\nabla^2 R(x_t)^{1/2} u_t}_{x_t}^\star=1$; \, and \,
    \, $\norm{\tilde \ell_t}_{x_t}^\star \le d H$.
\end{lemma}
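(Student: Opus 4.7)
The three claims are all direct consequences of the definition of $y_t$, the unit-norm property of $u_t$, and the basic calculus of the local norm $\|\cdot\|_{x_t}$ and its dual $\|\cdot\|_{x_t}^\star$ induced by $\nabla^2 R(x_t)$. So my plan is to dispatch each item by a one-line computation and pull the needed facts from the preliminaries.

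For the first claim, $y_t \in \cS$, the plan is to show that $y_t$ lies in the Dikin ellipsoid at $x_t$, i.e.\ that $\|y_t - x_t\|_{x_t} \le 1$. Since $y_t - x_t = \nabla^2 R(x_t)^{-1/2} u_t$, a direct computation gives
\[
    \|y_t - x_t\|_{x_t}^2
    = u_t\tr \nabla^2 R(x_t)^{-1/2} \nabla^2 R(x_t) \nabla^2 R(x_t)^{-1/2} u_t
    = \|u_t\|_2^2 = 1,
\]
because $u_t$ lies on the unit sphere. By the containment of the closed Dikin ellipsoid in $\cS$ (the closure of \cref{eq:dikininbody}), $y_t \in \cS$. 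The only mild subtlety is that \cref{eq:dikininbody} is stated with strict inequality, so I will either invoke the standard fact that the closed Dikin ellipsoid is contained in $\cS$, or argue by a limiting argument (scale $u_t$ by $1-\delta$ and take $\delta \to 0$, using closedness of $\cS$).

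For the second claim, the plan is to compute the dual norm directly:
\[
    \brk!{ \|\nabla^2 R(x_t)^{1/2} u_t\|_{x_t}^\star }^2
    = u_t\tr \nabla^2 R(x_t)^{1/2} \nabla^2 R(x_t)^{-1} \nabla^2 R(x_t)^{1/2} u_t
    = \|u_t\|_2^2 = 1.
\]

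For the third claim, I will factor out the scalar $d(\ell_t \dotp \hat z_t)$ from $\tilde \ell_t$ and apply the second claim, yielding
\[
    \|\tilde \ell_t\|_{x_t}^\star
    = d \, |\ell_t \dotp \hat z_t| \cdot \|\nabla^2 R(x_t)^{1/2} u_t\|_{x_t}^\star
    = d \, |\ell_t \dotp \hat z_t|.
\]
To conclude, I bound $|\ell_t \dotp \hat z_t| \le \|\ell_t\|_\infty \|\hat z_t\|_1 \le H$ using Hölder's inequality together with the $\blbacronym$ setting assumptions that $\|\ell_t\|_\infty \le 1$ and $\|\hat z_t\|_1 \le H$. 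There is no real obstacle here; the only thing to keep track of is the correct placement of $\nabla^2 R(x_t)^{1/2}$ versus $\nabla^2 R(x_t)^{-1/2}$ when switching between $\|\cdot\|_{x_t}$ and $\|\cdot\|_{x_t}^\star$.
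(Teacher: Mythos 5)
Your proposal is correct and follows essentially the same route as the paper: the same three one-line computations using $y_t - x_t = \nabla^2 R(x_t)^{-1/2} u_t$, the dual-norm identity, and H\"older's inequality with $\norm{\ell_t}_\infty \le 1$, $\norm{\hat z_t}_1 \le H$. Your extra care about the strict inequality in \cref{eq:dikininbody} (closed Dikin ellipsoid versus open) is a minor point the paper glosses over, and your limiting argument handles it fine.
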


\begin{proof}
    Since $R$ is a self-concordant barrier function over 
    a compact set $\cS$, following \cref{eq:dikininbody}, it suffices to show that for all $t$, $\norm{y_t-x_t}_{x_t} \le 1$, and indeed
    \[
        \norm{y_t-x_t}_{x_t}^2
        =
        \norm{\nabla^2 R(x_t)^{-1/2} u_t}_{x_t}^2
        =
        u_t\tr \nabla^2 R(x_t)^{-1/2} \nabla^2 R(x_t) \nabla^2 R(x_t)^{-1/2} u_t
        = 
        1. 
    \]
    Similarly,
    \[
        \brk1{\norm1{\nabla^2 R(x_t)^{1/2} u_t}_{x_t}^\star}^2 =  u_t\tr \nabla^2 R(x_t)^{1/2} \nabla^2 R(x_t)^{-1} \nabla^2 R(x_t)^{1/2} u_t =1,
    \]
    and
    \[
        \norm{\tilde \ell_t}_{x_t}^\star
        = 
        d \, \abs{\ell_t \dotp \hat z_t} \, \norm{\nabla^2 R(x_t)^{1/2} u_t}_{x_t}^\star
        \le
        d H. \qedhere
    \]
\end{proof}

\begin{lemma} \label{lem:lr-relation}
    Suppose $\eta_0 \le 1 / 4 d \sqrt{B T}$, then
    $\eta_0 \le \eta_t \le 2 \eta_0, \quad \forall t=1,\ldots,T$.
\end{lemma}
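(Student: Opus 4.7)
The plan is to exploit the explicit telescoping nature of the learning rate update $\eta_t^{-1} = \eta_{t-1}^{-1} - 2d\,|\hat z_t \cdot \epsilon_t|$, which gives the closed form
\[
    \eta_t^{-1} = \eta_0^{-1} - 2d \sum_{s=1}^t |\hat z_s \cdot \epsilon_s|.
\]
The lower bound $\eta_t \geq \eta_0$ is then immediate: since every $|\hat z_s \cdot \epsilon_s| \ge 0$, the sequence $\{\eta_t^{-1}\}$ is non-increasing, hence $\eta_t^{-1} \le \eta_0^{-1}$ and so $\eta_t \geq \eta_0$.

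For the upper bound $\eta_t \le 2\eta_0$, I would show the equivalent inequality $\eta_t^{-1} \geq \tfrac{1}{2\eta_0}$, which by the closed form reduces to establishing
\[
    2d \sum_{s=1}^t |\hat z_s \cdot \epsilon_s| \;\leq\; \frac{1}{2\eta_0}.
\]
To control the sum, I would apply Cauchy--Schwarz to convert the $\ell_1$-type sum into an $\ell_2$-type sum, giving
\[
    \sum_{s=1}^t |\hat z_s \cdot \epsilon_s| \;\leq\; \sqrt{t} \cdot \sqrt{\sum_{s=1}^t (\hat z_s \cdot \epsilon_s)^2} \;\leq\; \sqrt{T}\cdot\sqrt{B} \;=\; \sqrt{BT},
\]
using the assumption $B \geq \sum_{s=1}^T (\hat z_s \cdot \epsilon_s)^2$ that carries through from the \blbacronym setting.

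Finally, I would plug in the hypothesis $\eta_0 \leq 1/(4d\sqrt{BT})$, which is exactly equivalent to $\sqrt{BT} \leq 1/(4d\eta_0)$, so that $2d\sum_{s=1}^t |\hat z_s \cdot \epsilon_s| \leq 2d\sqrt{BT} \leq 1/(2\eta_0)$, and conclude $\eta_t^{-1} \geq \eta_0^{-1} - 1/(2\eta_0) = 1/(2\eta_0)$. There is no real obstacle here — the lemma is a direct bookkeeping argument designed precisely so that the worst-case cumulative bias, bounded via Cauchy--Schwarz by $\sqrt{BT}$, cannot drive the inverse learning rate below half its starting value; the choice of $\eta_0$ is calibrated to this bound.
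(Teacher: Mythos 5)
Your proposal is correct and follows essentially the same route as the paper: unroll the update to the closed form $\eta_t^{-1} = \eta_0^{-1} - 2d\sum_{s\le t}|\hat z_s \dotp \epsilon_s|$, bound the sum by $\sqrt{BT}$ via Cauchy--Schwarz and the assumption $\sum_t (\hat z_t \dotp \epsilon_t)^2 \le B$, and invoke $\eta_0 \le 1/(4d\sqrt{BT})$ to keep $\eta_t^{-1} \ge \tfrac12\eta_0^{-1}$. No gaps.
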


\begin{proof}
    $\eta_0 \le \eta_t$ holds by definition. The other direction is because
    \[
        \eta_t^{-1} 
        = 
        \eta_0^{-1} - 2 d \sum_{s=1}^t |\hat z_t \dotp \epsilon_t|
        \ge
        \eta_0^{-1} - 2 d \sqrt {T \dotp \sum_{s=1}^t \brk{\hat z_t \dotp \epsilon_t}^2}
        \ge
        \eta_0^{-1} - 2 d \sqrt{B T}
        \ge
        \half \eta_0^{-1}. \qedhere
    \]
\end{proof}

Finally, we combine the lemmas above with the guarantee of OMD to yield the main theorem.

\begin{proof}[of \cref{thm:inclrregret}]
    Observe the three summands of \cref{eq:regretdecomp}.
    For the first summand, we have
    \begin{align*}
        \bbE \brk[s]4{\sum_{t=1}^T (z_t-x_t) \dotp \ell_t}
        &=
        \bbE \brk[s]4{\sum_{t=1}^T (z_t-y_t) \dotp \ell_t}
        \le
        \bbE \brk[s]4{\sum_{t=1}^T |z_t \dotp \epsilon_t|} \\
        &\le
        \bbE \brk[s]4{\sum_{t=1}^T |\hat z_t \dotp \epsilon_t|} 
        \le
        \sqrt{T \, \bbE \brk[s]4{\sum_{t=1}^T \brk{\hat z_t \dotp \epsilon_t}^2}} 
        \le
        \sqrt{B T}
        ,
    \end{align*}
    where the first inequality uses that $\|\ell_t\|_\infty \leq 1$ and the assumption that $\norm{z_t-y_t}_1\leq |z_t \dotp \epsilon_t|$, the second inequality is by Jensen's inequality, and the third inequality is due to Cauchy-Schwartz. For the last inequality we recall that $\sum_{t=1}^T \brk{\hat z_t \dotp \epsilon_t}^2 \le B$ by the assumptions of the \blbacronym setting (see \cref{sec:blb}).

    % For the second summand in \cref{eq:regretdecomp} note that by \cref{lem:boundlossbias},
    % \begin{equation} \label{eq:specificlossbias}
    %     (x_t - y^\star_\gamma) \dotp \ell_t 
    %     \le 
    %     \bbE_t \brk[s]{(x_t - y^\star_\gamma) \dotp \tilde \ell_t} 
    %     + 
    %     d \norm{x_t - y^\star_\gamma}_{x_t} \bbE_t \brk[s]{\abs{\hat z_t \dotp \epsilon_t}},
    % \end{equation}
    % and we proceed in bounding $\bbE [\sum_{t=1}^T \bbE_t \brk[s]{(x_t - y^\star_\gamma) \dotp \tilde \ell_t} ] = \bbE [\sum_{t=1}^T (x_t - y^\star_\gamma) \dotp \tilde \ell_t ]$. 
    %
    For the second summand in \cref{eq:regretdecomp}, since $B \ge H$, by our choice of $\eta_0$, and by \cref{lem:lr-relation} we have $\eta_t \le 2 \eta_0 \le 1 / 2 d H$, so $\eta_t \|\tilde \ell_t\|_{x_t}^\star \le \frac12$ by \cref{lem:technicalstuff}.
    We can therefore apply \cref{thm:mdbarrier} to get
    \[
        \bbE \brk[s]4{\sum_{t=1}^T (x_t - y^\star_\gamma) \dotp \tilde \ell_t} 
        \leq
        \bbE \brk[s]4{
        \underbrace{\vphantom{\sum_{t=2}^T}\frac{1}{\eta_1} \breg{y^\star_\gamma}{x_1}}_{(1)}
        -
        \underbrace{\sum_{t=2}^T \brk3{\frac{1}{\eta_{t-1}} - \frac{1}{\eta_t}} \breg{y^\star_\gamma}{x_t}}_{(2)}
        + 
        \underbrace{\sum_{t=1}^T \eta_t (\|\tilde \ell_t\|_{x_t}^\star)^2}_{(3)}
        }.
    \]
    We now bound each of the three terms $(1),(2)$, and $(3)$.
    We have
    $
        (1) 
        \le 
        \eta_0^{-1} \vartheta \log(\gamma^{-1})
    $
    by \cref{lem:boundbregfromx1} and as $\eta_1 \ge \eta_0$ (\cref{lem:lr-relation}).
    For term $(2)$, we have
    \begin{align*}
        (2) 
        &=
        2 d \sum_{t=2}^T |\hat z_t \dotp \epsilon_t| \breg{y^\star_\gamma}{x_t} \\
        &=
        2 d \sum_{t=1}^T |\hat z_t \dotp \epsilon_t| \breg{y^\star_\gamma}{x_t}
        -
        2 d \underbrace{|\hat z_1 \dotp \epsilon_1|}_{\mathclap{\le \|\hat z_1\|_1 \|\epsilon_1\|_\infty \le H \beta}} \breg{y^\star_\gamma}{x_1} \\
        &\ge
        2 d \sum_{t=1}^T |\hat z_t \dotp \epsilon_t| \brk3{\half \norm{x_t - y^\star_\gamma}_{x_t} - 1}
        -
        2d \beta H \cdot \vartheta \log \frac{1}{\gamma} 
        \tag{\cref{lem:boundbregfromx1,lem:boundbregfromxt}}
        \\
        &=
        d \sum_{t=1}^T |\hat z_t \dotp \epsilon_t| \norm{x_t - y^\star_\gamma}_{x_t}
        -
        2 d \sum_{t=1}^T |\hat z_t \dotp \epsilon_t|
        -
        2d \beta H \cdot \vartheta \log \frac{1}{\gamma} \\
        &\ge
        d \sum_{t=1}^T |\hat z_t \dotp \epsilon_t| \norm{x_t - y^\star_\gamma}_{x_t}
        -
        2 d \sqrt{B T}
        -
        2d \beta H \cdot \vartheta \log \frac{1}{\gamma}
        ,
    \end{align*}
    where the last inequality is since $\sum_{t=1}^T |\hat z_t \dotp \epsilon_t| \le \sqrt{T \sum_{t=1}^T (\hat z_t \dotp \epsilon_t)^2}$ by Cauchy-Schwartz and as $\sum_{t=1}^T (\hat z_t \dotp \epsilon_t)^2 \le B$ by assumption.
    We lastly employ \cref{lem:technicalstuff} and that $\eta_t \le 2 \eta_0$ by \cref{lem:lr-relation} to bound
    $
        (3) 
        \le
        2 \eta_0 d^2 H^2 T
        .
    $
    All in all, this obtains us \cref{eq:xxxxx}.
    % \begin{align}
    %     \bbE \brk[s]4{\sum_{t=1}^T (x_t - y^\star_\gamma) \dotp \tilde \ell_t}
    %     &\le
    %     \brk3{\frac{1}{\eta_0} + 2 d \beta H} \vartheta \log \frac{1}{\gamma}
    %     -
    %     d \sum_{t=1}^T \bbE \brk[s]1{\abs{\hat z_t \dotp \epsilon_t} \, \norm{x_t - y^\star_\gamma}_{x_t}} 
    %     \nonumber \\
    %     &\qquad + 
    %     2d \sqrt{B T}
    %     + 
    %     2 \eta_0 d^2 H^2 T
    %     . 
    %     \label{eq:xxxxx}
    % \end{align}
    
    We sum \cref{eq:specificlossbias} over all $t$ and take expectation. Together with \cref{eq:xxxxx} this replaces the perceived losses, $\tilde{\ell}_t$, by the real losses, $\ell_t$.
    The terms
    $
        d \, \bbE \brk[s]{\sum_{t=1}^T |\hat z_t \dotp \epsilon_t| \, \|x_t - y^\star\|_{x_t}}
    $
    in \cref{eq:specificlossbias} and in \cref{eq:xxxxx} cancel out, and we get
    \[
        \bbE \brk[s]4{\sum_{t=1}^T (x_t - y^\star_\gamma) \dotp \ell_t }
        \le 
        \brk3{\frac{1}{\eta_0} + 2 d \beta H} \vartheta \log \frac{1}{\gamma}
        +
        2 d \sqrt{B T}
        + 
        2 \eta_0 d^2 H^2 T.
    \]
    
    Finally, for the third summand in \cref{eq:regretdecomp}, we have
    \[
        \sum_{t=1}^T (y^\star_\gamma-y^\star) \dotp \ell_t = \gamma \sum_{t=1}^T (x_1 - y^\star) \dotp \ell_t \le 2 \gamma H T.
    \]
    Combining the bounds on all three summands and setting $\gamma, \eta_0$ as in the theorem's statement yields the final regret bound. 
    % \hk{Few more verbs here could be useful, what is negligible and what do we balance with what to get the right bound ?} \ac{wdym? do we optimize over $\eta_0$ and $\gamma$? that sort of thing?}
\end{proof}

\section{Self-concordant Barriers: definitions and basic properties}
\label{app:self}

For a $k$-array tensor $U\in\mathbb{R}^{d\times k}$, we define 
$$
    U[h_1, \ldots , h_k]=\sum_{i_1, \ldots,i_k\in [d]} U(i_1, \ldots ,i_k) \prod_{j=1}^k h_j(i_j)
    .
$$
For $k=2$ we have that $U$ is a matrix, $h_1$ and $h_2$ are vectors, and $U[h_1,h_2]=h_1\tr U h_2$.

\begin{definition}
    For a convex set $\cS \subset \bbR^n$, a self-concordant function $R : \text{int}(\cS) \mapsto \bbR$ is a $C^3$-convex function such that
    \[
        \abs1{D^3 R(x)[h,h,h]} \le 2 \brk1{D^2 R(x)[h,h]}^{3/2}.
    \]
    In words: the third derivative of $R$ at $x$ in direction $h$ is upper bounded by a constant times the second derivative of $R$ at $x$ in direction $h$,
    raised to the $3/2$ power.
\end{definition}

\begin{definition}
    A self-concordant function $R$ is a $\vartheta$-self-concordant barrier if
    \[
        \abs1{D R(x)[h]} \le \vartheta^{1/2} \brk1{D^2 R(x)[h,h]}^{1/2}.
    \]
\end{definition}

We have the following upper bound on the difference a $\vartheta$-self-concordant barrier $R$ at two points $x,y \in \cK$:
\begin{equation} \label{eq:scupperbound}        
    R(y) - R(x) \le \vartheta \log \frac{1}{1- \pi_{x}(y)}
    ,
\end{equation}
where $\pi_x(y)$ is the Minkowski function of $\cS$ w.r.t.~$x$: $\pi_{x}(y) = \inf \brk[c]{t > 0 : x + t^{-1}(y - x) \in \cS}$. 

\section{Efficient Implementation of the Reduction} \label{app:efficientreduction}
    
    In this section we complete the proof of \cref{thm:main} by showing a computationally-efficient reduction between Finite-Horizon MDPs with Aggregate Feedback and that of \blbname.
    
    Recall the reduction in \cref{sec:reduction} in which we showed how to solve a Finite-Horizon MDPs with Aggregated Feedback by constructing a sequence of $O(\log K)$ instances (epochs) of the \blbname\ problem and running a no-regret algorithm in each such instance (which exists due to \cref{thm:blbregret}). In subsequent sections we reviewed \cref{alg:inefficient,alg:increasinglearningrates}, both of which guarantee no-regret for \blbacronym. 
    In this section we make the choice of the algorithm for the reduction explicit by fixing it to be \cref{alg:increasinglearningrates}.
    Note that the reduction itself, as well as \cref{alg:increasinglearningrates}, can be implemented in polynomial-time  as long as in each epoch $i$, the barrier $R$ chosen for $\cS_i$ can be computed efficiently.
    However, \cref{alg:increasinglearningrates} is made for the case in which $\cS$ has volume in $\bbR^d$ which is not the case of our body $\cS_i$. Thus, in what follows we give two options on how to alleviate this problem and build an efficiently-computable barrier function for each option. In option 1, we show how to alter \cref{alg:increasinglearningrates} to accommodate the case for $\cS_i$ not being fully-dimensional. In option 2, we keep \cref{alg:increasinglearningrates} as it is, but change the reduction so that $\cS_i$ has a small volume in $\bbR^d$
    
    \paragraph{Option 1.}

    We follow a technique used in \cite{lee2020bias}.
    The set $\cS_i$ consists of an intersection between linear equations (\cref{eq:omdistribution,eq:omflow,eq:ominitial}) of the form $c_i \dotp x = d_i$ for $i=1,\ldots,p$ and linear inequalities (\cref{eq:omnonnegative,eq:omconcentration}) of the form $a_i \dotp x \le b_i$ for $i=1,\ldots,m$ where $m = O(|S|^2 H |A|)$.
    Our approach is to set the log barrier $R(x) = -\sum_{i=1}^m \log(b_i - a_i \dotp x)$ over the inequalities (note that its barrier parameter $\vartheta$ is $m$; see \citealp{nemirovski2004interior}).
    However, we still have to handle the linear equations in order to make sure that \cref{alg:increasinglearningrates} will not generate predictions that are not in $\cS_i$.
    
    Recall that \cref{alg:increasinglearningrates} is essentially a variant of OMD, which commonly has a projection step that does not appear in \cref{alg:increasinglearningrates}. 
    First, we add a projection step in \cref{alg:increasinglearningrates} after \cref{ln:update} onto the affine subspace defined by the linear equations of $\cS_i$: we replace \cref{ln:update} with $x_{t+1}' = \nabla R^{-1}\brk{\nabla R(x_t) - \eta_t \tilde \ell_t}$, and then add after \cref{ln:update}: $x_{t+1} = \argmin_{x : C x = d} \breg{x}{x_{t+1}'}$, where $C$ is a matrix whose columns are $c_1,\ldots,c_p$.
    This, in turn, validates that the iterates $x_1,x_2,\ldots$ are in $\cS_i$. 

    Second, recall that originally $y_k$, is sampled uniformly at random from the Dikin ellipsoid centered at $x_t$: $\{y : \|y - x_t\|_{x_t} \le 1\}$. 
    Concretely, $y_k = x_k + \nabla^2 R(x_k)^{-1/2} u_k$ (\cref{ln:yt}) for $u_k$ sampled uniformly at random from the unit sphere of $\bbR^d$. We also like to make sure that $y_k$ is in the aforementioned affine subspace, by instead sampling $y_k$ uniformly at random from the intersection of the Dikin ellipsoid with the affine subspace.
    To achieve this, we let $W$ be an orthogonal matrix whose range spans the null space of $C$. We now sample $u_k$ uniformly from the unit sphere in $\bbR^p$. 
    We replace \cref{ln:yt} in \cref{alg:increasinglearningrates} by choosing $y_k = x_k + W W\tr \nabla^2 R(x_k)^{-1/2} W u_k$, so now $y_k - x_k$ is in the null space spanned by $c_1,\ldots,c_p$. Moreover, we have $y_k \in \cS_i$ due to (see \cref{lem:technicalstuff}):
    \begin{align*}
        \norm{y_k - x_k}_{x_k}^2 
        &=
        (y_k - x_k)\tr \nabla^2 R(x_k) (y_k - x_k) \\
        &= 
        u_k\tr W\tr \nabla^2 R(x_k)^{-1/2} W W\tr \nabla^2 R(x_k) W W\tr \nabla^2 R(x_k)^{-1/2} W u_k \\
        &=
        u_k\tr \brk{W\tr \nabla^2 R(x_k) W}^{-1/2} \brk{W\tr \nabla^2 R(x_k) W} \brk{W\tr \nabla^2 R(x_k) W}^{-1/2} u_k 
        \tag{$W$ is orthogonal}
        \\
        &=
        1
        .
    \end{align*}
    The estimators $\brk[c]{\tilde \ell_t}_{t=1}^T$ have to be changed accordingly. We change \cref{ln:estimator} by redefining
    $
        \tilde \ell_t
        = 
        p (\ell_t \dotp \hat z_t) W W\tr \nabla^2 R(x_t)^{1/2} W u_t
        ,
    $
    Following which, we alter the rest \cref{lem:technicalstuff} as follows:
    \begin{align*}
        \brk1{\norm{W W\tr \nabla^2 R(x_t)^{1/2} W u_t}_{x_t}^\star}^2 
        &=  
        u_t\tr W\tr \nabla^2 R(x_t)^{1/2} W W\tr \nabla^2 R(x_t)^{-1} W W\tr \nabla^2 R(x_t)^{1/2} W u_t \\
        &=
        u_t\tr \brk{W\tr \nabla^2 R(x_t) W}^{1/2} \brk{W\tr \nabla^2 R(x_t) W}^{-1} \brk{W\tr \nabla^2 R(x_t) W}^{1/2} u_t 
        \\
        &=
        u_t\tr u_t
        =
        1
        , 
    \end{align*}
    where the second equality is as $W$ is orthogonal.
    Moreover,
    \[
        \norm{\tilde \ell_t}_{x_t}^\star
        = 
        p \, \abs{\ell_t \dotp \hat z_t} \, \norm{W W\tr \nabla^2 R(x_t)^{1/2} W u_t}_{x_t}^\star 
        \le
        p H.
    \]
    The proof of \cref{lem:boundlossbias} is changed in the same manner, using the fact that $x_k - x$ is in the span of $W$. 
    The rest of the proof of the analysis of \cref{alg:increasinglearningrates} remains without any further changes.
    
    \paragraph{Option 2.}
    
    In this option, instead of altering \cref{alg:increasinglearningrates}, we alter $\cS_i$ to give it a small volume in $\bbR^d$.
    We replace the $p$ linear equations of the form $c_i \dotp x = d_i$ with linear inequalities of the form $\abs{c_i \dotp x - d_i} \le 1/\poly(K)$. We then set the barrier on the new body to be the log barrier of the new set of linear inequalities:
    \[
        R(x) 
        = 
        -\sum_{i=1}^m \log(b_i - a_i \dotp x) 
        -\sum_{i=1}^p \log (\poly(K)^{-1} - \abs{d_i - c_i \dotp x})
        ,
    \]
    which also has a barrier parameter of $\vartheta = O(|S|^2 |A| H)$ (number of linear inequalities defining the new body; see \citealp{nemirovski2004interior}). 
    
    The issue here is that, when running \cref{alg:increasinglearningrates} on the new body, we might choose $y_k$ that is on the exterior of $\cS_i$. 
    However, we could then replace $y_k$ by its projection onto $\cS_i$ and play that projection instead which we denote by $y'_k$. Note that $\norm{y_k - y'_k}_1 \le O(1/\poly(K))$.
    This ensures that $\norm{y_k - z_k}_1 \le \abs{\epsilon_i \dotp z_k} + O(1/\poly(K))$ which suffices to fulfill the assumptions of the \blbname\ setting (\cref{sec:blb}) thus ensuring that \cref{alg:increasinglearningrates} will maintain its $\wt O(\sqrt{K})$ regret bound.

\end{document}